\documentclass[11pt]{article}

\usepackage[margin=1in]{geometry}
\usepackage[T1]{fontenc}
\usepackage{microtype}
\usepackage[round,authoryear]{natbib}

\usepackage{amsmath,amssymb,amsfonts,amsthm,bm,mathtools}
\usepackage{booktabs,longtable,array,multirow}
\usepackage{algorithm}
\usepackage{float}
\usepackage{algorithmic}
\usepackage{enumitem}
\usepackage{xcolor}
\usepackage{graphicx}
\usepackage{hyperref}

\hypersetup{
  colorlinks=true,
  linkcolor=blue,
  citecolor=blue,
  urlcolor=blue
}

\newtheorem{theorem}{Theorem}
\newtheorem{lemma}[theorem]{Lemma}
\newtheorem{proposition}[theorem]{Proposition}

\newcommand{\E}{\mathbb{E}}

\newcommand{\calA}{\mathcal{A}}

\newcommand{\calF}{\mathcal{F}}

\newcommand{\calZ}{\mathcal{Z}}

\newcommand{\calX}{\mathcal{X}}
\newcommand{\one}{\mathbf{1}}

\newcommand{\clip}{\operatorname{clip}}
\newcommand{\Var}{\operatorname{Var}}

\newcommand{\KL}{\operatorname{KL}}
\newcommand{\Bern}{\operatorname{Bern}}
\newcommand{\klbin}{\operatorname{kl}}

\newcommand{\argmax}{\mathop{\operatorname{argmax}}}
\newcommand{\argmin}{\mathop{\operatorname{argmin}}}
\newcommand{\ASB}{\textnormal{\textsc{ASB}}}

\hypersetup{
  pdftitle={The Surrogate Is Not the Reward: Post-Surrogate Primary-Outcome Acquisition in Contextual Bandits},
  pdfauthor={Kyungbok Lee and Michael R. Kosorok}
}

\title{The Surrogate Is Not the Reward:\\Post-Surrogate Primary-Outcome Acquisition in Contextual Bandits}
\author{
Kyungbok Lee \qquad Michael R. Kosorok\\
Department of Biostatistics\\
University of North Carolina at Chapel Hill\\
Chapel Hill, NC 27599-7420, USA\\
\texttt{kyungbok@unc.edu} \qquad \texttt{kosorok@unc.edu}
}

\date{}

\begin{document}
\maketitle

\begin{abstract}
We study contextual bandits in which a surrogate is observed after the action but before the learner decides whether to acquire the primary outcome that defines action value and regret. The value of acquiring the primary outcome depends on both decision relevance---how much the current outcome matters for comparing policies---and the residual uncertainty after observing the surrogate. The Audited Surrogate Bandit (\ASB) learns a contextual policy while allocating a budget of $B$ primary-outcome acquisitions over $T$ rounds. \ASB\ sets a pre-surrogate acquisition level from current decision relevance and, after observing the surrogate, redistributes that level using an estimate of that residual uncertainty. For a finite class of $N$ policies over $K$ actions, \ASB\ incurs $\widetilde O[\sqrt{KT\log N}\{1+\sqrt{T/B}\}]$ regret relative to the best policy in the class. In a two-action family where the surrogate does not reveal the better action, a learner that observes the surrogate before deciding whether to acquire can achieve bounded regret, whereas any learner that must decide before seeing the surrogate incurs $\Omega(T/B)$ worst-case regret under the same budget. Synthetic experiments show that both acquisition factors matter: \ASB\ has lower regret than variants using only decision relevance or only residual uncertainty. On a KuaiRec benchmark of user--video interactions, the regret gap relative to relevance-only acquisition widens and then narrows as the budget grows.
\end{abstract}

\section{Introduction}

Post-action information can arrive too late to change the current action yet early enough to affect whether the primary outcome is acquired. We study contextual bandits in which a costly primary outcome $Y_t$ defines action value and regret, but after selecting action $A_t$ in context $X_t$, the learner first observes a surrogate $S_t$ and only then decides whether to acquire $Y_t$. In mobile interventions, proximal activity can be observed automatically after intervention while broader health outcomes are harder to assess \citep{nahumshani2018jitai,klasnja2019heartsteps}. The round follows
\[
X_t\longrightarrow A_t\longrightarrow S_t\longrightarrow \text{acquire }Y_t\text{?}.
\]

By the time $S_t$ arrives, the current action cannot be changed; the remaining decision is whether to measure $Y_t$. This leaves a distinct use for post-action information: it can help decide where to spend a limited measurement budget, while action value and regret remain tied to $Y_t$. The usefulness of measuring $Y_t$ depends on both the current policy comparison and what remains uncertain after $S_t$. If competing policies choose the same action, observing $Y_t$ does not distinguish them on that round; if they disagree but little residual variation remains after $S_t$, observing $Y_t$ provides little information for distinguishing them. This points to two ingredients: \emph{decision relevance}---how much the current outcome matters to unresolved policy comparisons---and the residual uncertainty estimated after the surrogate.

The Audited Surrogate Bandit (\ASB) is an end-to-end contextual-bandit method that uses these two ingredients while learning a policy and deciding which primary outcomes to acquire. Before observing the surrogate, \ASB\ sets an acquisition level from decision relevance; after observing it, \ASB\ refines that level using estimated residual uncertainty. This timing also raises a broader question: what can be gained by allowing the acquisition decision to use the current surrogate? Our contributions are as follows:
\begin{itemize}[leftmargin=1.4em]
\item We provide \ASB, an end-to-end contextual-bandit method that learns a policy while allocating a hard budget of $B$ primary-outcome acquisitions. In \ASB, decision relevance sets the acquisition level before the surrogate is observed, and estimated residual uncertainty refines that level afterward. For a finite class of $N$ policies over $K$ actions, \ASB\ acquires exactly $B$ primary outcomes on every path and achieves $\widetilde O[\sqrt{KT\log N}\{1+\sqrt{T/B}\}]$ regret relative to the best policy in the class. A lower bound shows that the $\sqrt{T/B}$ dependence on the acquisition budget is unavoidable up to logarithmic factors on a subclass where an optimal policy is represented.
\item We compare primary-outcome acquisition before and after observing the surrogate. For a fixed policy comparison under the same mean acquisition rate, allowing acquisition to use $S_t$ can strictly lower the minimum variance of an unbiased estimate of the primary-outcome value difference. We show on a two-action family that this timing can change the attainable regret rate: post-surrogate acquisition can achieve bounded regret while any learner deciding before the surrogate incurs $\Omega(T/B)$ worst-case regret, even though $S_t$ does not reveal the optimal action.
\item We demonstrate in synthetic experiments that both parts of \ASB's acquisition design matter for regret: removing either decision relevance or the post-surrogate residual-RMS refinement increases cumulative regret. These experiments also show that using the current surrogate for acquisition can reduce regret even when it does not reveal the optimal action. \ASB\ has lower regret when more residual information lies where represented policies disagree rather than where they agree, even when the overall amount of residual information is fixed. We evaluate \ASB\ against relevance-only acquisition on KuaiRec recommendation data across several acquisition budgets: the mean-regret gap first widens and then narrows as the budget grows.
\end{itemize}

Related work on acquisition design includes prediction-guided experimentation and selective ground-truth acquisition for inference or best-arm identification \citep{ao2024pgae,ao2026llmaudits}. Costly-feedback bandits study whether to observe reward \citep{efroni2021confidence,tucker2023costly}, while intermediate- and post-action-feedback methods use information revealed after acting \citep{esposito2023intermediate,wang2023followups}. In \ASB, the surrogate is observed after the action and used to decide whether to acquire the same round's primary outcome, while regret and subsequent policy learning remain tied to that primary outcome. Appendix~\ref{app:related_scope} discusses broader connections to surrogate-based inference and limited-follow-up designs, active inference and feature acquisition, and surrogate-guided sequential decision methods.

\section{Post-Surrogate Primary-Outcome Acquisition}
\label{sec:problem}

For $t=1,\ldots,T$, the learner observes a context $X_t\in\calX$ and selects an action $A_t\in\calA$, where $K=|\calA|<\infty$. The resulting surrogate $S_t$ is then observed for free. The pre-surrogate and post-surrogate states are $U_t=(X_t,A_t)$ and $Z_t=(U_t,S_t)$, respectively. After observing $S_t$, the learner sets an acquisition probability $\rho_t(Z_t)\in[0,1]$, draws $Q_t\sim\Bern\{\rho_t(Z_t)\}$, and observes the primary outcome $Y_t\in[0,1]$ iff $Q_t=1$. The acquisition decision only determines whether $Y_t$ is observed; it does not change future contexts or outcomes. With $\calF_{t-1}$ the history before round $t$, the timing is
\[
  (\calF_{t-1},X_t)\to A_t\to S_t\to\rho_t(Z_t)\to Q_t\to Q_tY_t,
  \qquad Q_t\perp Y_t\mid(\calF_{t-1},Z_t).
\]

The conditional mean of the primary outcome is denoted by $r^\star(x,a)$, so $\E(Y_t\mid\calF_{t-1},X_t,A_t)=r^\star(X_t,A_t)$. Cumulative regret is
$R_T:=\sum_{t=1}^T[\max_{a\in\calA}r^\star(X_t,a)-r^\star(X_t,A_t)]$. The learner minimizes $R_T$ subject to the hard budget $\sum_{t=1}^T Q_t\le B$ on every path, where $B\in\{1,\ldots,T\}$ is the primary-outcome acquisition budget.

This timing raises two questions: how to learn a contextual policy while allocating the primary-outcome budget, and what changes when the learner decides whether to acquire the primary outcome before or after observing the current surrogate.

\section{Audited Surrogate Bandit}

We present \ASB\ for the setting in Section~\ref{sec:problem}. The method learns a contextual policy while deciding which primary outcomes to acquire, using a finite class $\Pi$ of $N$ deterministic policies. We first follow one online round from action selection through primary-outcome acquisition and policy updating, then examine why \ASB\ uses decision relevance before the surrogate and residual uncertainty after it.

\subsection{One round of \ASB}

\paragraph{Action selection and decision relevance.}
At round $t$, \ASB\ begins with its current weights $w_t(\pi)$ for policies $\pi\in\Pi$. After observing $X_t$, it normalizes these weights to form a policy mixture $\mu_t(\pi)\propto w_t(\pi)$. This mixture induces the action distribution $\bar p_t(a)=\sum_{\pi\in\Pi}\mu_t(\pi)\one\{\pi(X_t)=a\}$ over $a\in\mathcal A$. Using exploration rate $\varepsilon_t$, \ASB\ forms $p_t(a)=(1-\varepsilon_t)\bar p_t(a)+\varepsilon_tu_t(a)$, where $u_t$ is uniform on the support of $\bar p_t$, and draws $A_t\sim p_t$.

For any $a\in\mathcal A$ with $p_t(a)>0$, define the \emph{policy-comparison coefficient} $c_{t,\pi}(a)$ and $\ell_t(a)$ by
\begin{equation}
\label{eq:centered_policy_influence}
  c_{t,\pi}(a)
  =\{\one\{\pi(X_t)=a\}-\bar p_t(a)\}/p_t(a),
  \qquad
  \ell_t(a)=\max_{\pi'\in\Pi}|c_{t,\pi'}(a)|.
\end{equation}
The coefficient $c_{t,\pi}(a)$ measures how much the primary-outcome mean at action $a$ contributes to the comparison between policy $\pi$ and the current policy mixture $\mu_t$, accounting for the probability $p_t(a)$ of selecting $a$. Its maximum magnitude across policies, $\ell_t(a)$, is the \emph{decision-relevance score}.

\paragraph{Pre-surrogate acquisition level and post-surrogate refinement.}
Before observing $S_t$, \ASB\ uses $\ell_t(A_t)$ to set a pre-surrogate acquisition level $\rho_t^0$. After observing $S_t$, it evaluates the fitted bridge $h_t(Z_t)\in[0,1]$ for $Y_t$ and a learned root-mean-square (RMS) residual score $\widehat\tau_t(Z_t)$ for $Y_t-h_t(Z_t)$. \ASB\ combines decision relevance with residual RMS through the following basic acquisition form:
\begin{equation}
\label{eq:asb_acquisition_geometry_main}
  \rho_t^0\propto\ell_t(A_t),\qquad
  \rho_t^1(Z_t)=\rho_t^0\,\frac{\widehat\tau_t(Z_t)}{\widehat\E_t\{\widehat\tau_t(U_t,S)\mid U_t\}},\qquad
  \rho_t(Z_t)=(1-\gamma_t)\rho_t^0+\gamma_t\rho_t^1(Z_t).
\end{equation}
Here $\widehat\E_t(\cdot\mid U_t)$ is a fitted conditional expectation over $S$. The refinement $\rho_t^1$ redistributes the pre-surrogate level across surrogate values according to the learned residual RMS score. The resulting acquisition probability $\rho_t$ combines the pre-surrogate level with this refinement, with $\gamma_t\in[0,1]$, fixed before $S_t$, controlling how much of the redistribution is used. Appendix~\ref{app:practical_fitted_components} gives one concrete online fitting specification for $h_t$, $\widehat\tau_t$, and $\widehat\E_t$.

Algorithm~\ref{alg:hard_budget_pacing} in Appendix~\ref{app:hard_budget_acquisition} turns the acquisition structure in Equation~\eqref{eq:asb_acquisition_geometry_main} into the hard-budget rule. It uses the remaining budget and past decision-relevance scores to set the scale of $\rho_t^0$. When acquisition is not fixed by the budget rule, the post-surrogate refinement is constrained to keep its fitted conditional mean at $\rho_t^0$ and, whenever $\ell_t(A_t)>0$, to stay above a positive acquisition floor set before $S_t$ without using the learned residual score. Its remaining-count boundary rules ensure
$\sum_{t=1}^T Q_t=B$ on every path.

\paragraph{Augmented feedback and policy update.}
\ASB\ then draws $Q_t\sim\Bern\{\rho_t(Z_t)\}$ and observes $Y_t$ iff $Q_t=1$. For $\rho_t(Z_t)>0$, it forms the augmented feedback
\begin{equation}
\label{eq:audited_reward}
  \Phi_t
  =h_t(Z_t)+\frac{Q_t}{\rho_t(Z_t)}\{Y_t-h_t(Z_t)\}.
\end{equation}
Using $Q_t\perp Y_t\mid(\calF_{t-1},Z_t)$, the correction in \eqref{eq:audited_reward} preserves the primary-outcome conditional mean:
\begin{equation}
\label{eq:target_validity}
  \E(\Phi_t\mid\calF_{t-1},Z_t)
  =\E(Y_t\mid\calF_{t-1},Z_t),
  \qquad
  \E(\Phi_t\mid\calF_{t-1},U_t)=r^\star(X_t,A_t).
\end{equation}
For each policy $\pi$, \ASB\ forms the \emph{policy-comparison evidence} $G_{t,\pi}:=c_{t,\pi}(A_t)\{\Phi_t-1/2\}$, with $1/2$ centering the $[0,1]$ outcome scale. If $\rho_t(Z_t)>0$ whenever $\ell_t(A_t)>0$, Equations~\eqref{eq:centered_policy_influence} and \eqref{eq:target_validity} imply
\begin{equation}
\label{eq:policy_comparison_target_main}
  \E\!\left(G_{t,\pi}\mid\calF_{t-1},X_t\right)
  =
  r^\star\{X_t,\pi(X_t)\}
  -\sum_a\bar p_t(a)r^\star(X_t,a).
\end{equation}
Then $G_{t,\pi}$ is conditionally unbiased for the primary-outcome mean difference between policy $\pi$ and the policy mixture $\mu_t$. \ASB\ then updates the policy weights using the second-order exponential-weights rule with learning rate $\eta_t$:
\begin{equation}
\label{eq:observed_second_order_update}
  w_{t+1}(\pi)
  =w_t(\pi)\exp\{\eta_tG_{t,\pi}+\eta_t^2G_{t,\pi}^2\}.
\end{equation}
\ASB\ also updates $h_{t+1}$, $\widehat\tau_{t+1}$, and $\widehat\E_{t+1}$, and the next round starts from the new policy weights $w_{t+1}$.

\begin{algorithm}[H]
\caption{Audited Surrogate Bandit (\ASB)}
\label{alg:asb}
\algsetup{indent=.8em}
\small
\begin{algorithmic}[1]
\REQUIRE horizon $T$, budget $B$, policy class $\Pi$; exploration/refinement schedules $(\varepsilon_t,\gamma_t)_{t=1}^T$; policy learning-rate rule for $\eta_t$; fitting procedures for $h_t$, $\widehat\tau_t$, and $\widehat\E_t$.
\STATE \textbf{Initialize:} set $w_1(\pi)=1$ for every $\pi\in\Pi$.
\FOR{$t=1,\ldots,T$}
  \STATE \textbf{Policy mixture:} observe $X_t$; set $\mu_t(\pi)=w_t(\pi)/\sum_{\pi'}w_t(\pi')$; $\bar p_t(a)=\sum_{\pi\in\Pi}\mu_t(\pi)\one\{\pi(X_t)=a\}$.
  \STATE \textbf{Act:} set $p_t(a)=(1-\varepsilon_t)\bar p_t(a)+\varepsilon_tu_t(a)$; draw $A_t\sim p_t$.
  \STATE \textbf{Decision relevance:} compute $c_{t,\pi}(a)$ and $\ell_t(a)$ by \eqref{eq:centered_policy_influence}.
  \STATE \textbf{Acquisition probability:} use Algorithm~\ref{alg:hard_budget_pacing} in Appendix~\ref{app:hard_budget_acquisition} to set $\rho_t^0$ before observing $S_t$ and $\rho_t^1(Z_t)$ after observing $S_t$. Set $\rho_t(Z_t)=(1-\gamma_t)\rho_t^0+\gamma_t\rho_t^1(Z_t)$.
  \STATE \textbf{Acquire primary outcome:} draw $Q_t\sim\Bern\{\rho_t(Z_t)\}$; observe $Y_t$ iff $Q_t=1$.
  \STATE \textbf{Augmented feedback:} if $\rho_t(Z_t)>0$, set $\Phi_t=h_t(Z_t)+\{Q_t/\rho_t(Z_t)\}\{Y_t-h_t(Z_t)\}$ and $G_{t,\pi}=c_{t,\pi}(A_t)\{\Phi_t-1/2\}$; otherwise set $G_{t,\pi}=0$.
  \STATE \textbf{Update:} set $w_{t+1}(\pi)$ by \eqref{eq:observed_second_order_update}; update $h_{t+1}$, $\widehat\tau_{t+1}$, and $\widehat\E_{t+1}$.
\ENDFOR
\end{algorithmic}
\end{algorithm}

\subsection{Acquisition structure: decision relevance and residual RMS}
\label{sec:policy_acquisition}

To explain why Equation~\eqref{eq:asb_acquisition_geometry_main} uses decision relevance before the surrogate and residual RMS afterward, consider how the acquisition probability $\rho_t$ affects the one-round mean-squared error of the policy-comparison evidence $G_{t,\pi}$. Under the positive-support condition in Equation~\eqref{eq:policy_comparison_target_main}, this mean-squared error is its conditional variance, which decomposes as
\begin{equation}
\label{eq:online_comparison_acquisition_term}
  \operatorname{Var}\!\left(G_{t,\pi}\mid\calF_{t-1},X_t\right)
  =
  \E\!\left[
    \left.
    \frac{c_{t,\pi}^2(A_t)\tau_t^2(Z_t)}
         {\rho_t(Z_t)}
    \right|\calF_{t-1},X_t
  \right]
  +\text{a term independent of }\rho_t .
\end{equation}
Here $\tau_t(U_t,s)$ is the \emph{true residual RMS} around the bridge $h_t$:
\[
  \tau_t^2(U_t,s)
  :=\E\!\left[\{Y_t-h_t(U_t,s)\}^2
      \mid\calF_{t-1},U_t,S_t=s\right].
\]
Equation~\eqref{eq:online_comparison_acquisition_term} shows why \ASB\ uses the decision-relevance score $\ell_t(A_t)=\max_{\pi'}|c_{t,\pi'}(A_t)|$ for the pre-surrogate acquisition level and residual RMS $\tau_t(U_t,S_t)$ for post-surrogate redistribution.

\paragraph{Pre-surrogate acquisition level: decision relevance.}
To characterize the action dependence of $\rho_t^0$, consider a candidate pre-surrogate rule $\varrho$, with level $\varrho(a)$ when $A_t=a$. Because $A_t\sim p_t$, Equation~\eqref{eq:online_comparison_acquisition_term} averages these action-specific terms with weights $p_t(a)$. Since $\tau_t^2\le1$ and $\ell_t(a)$ bounds $|c_{t,\pi}(a)|$ over $\pi\in\Pi$, the acquisition-dependent term is at most $\sum_{a:\,\ell_t(a)>0}p_t(a)\ell_t^2(a)/\varrho(a)$ for every $\pi\in\Pi$. At a fixed average acquisition level, this calculation gives the proportional shape $\varrho(a)\propto\ell_t(a)$, motivating $\rho_t^0\propto\ell_t(A_t)$ in Equation~\eqref{eq:asb_acquisition_geometry_main}.

\paragraph{Post-surrogate redistribution: residual RMS.}
The pre-surrogate argument above determines how $\rho_t^0$ depends on the selected action. The post-surrogate question is how to redistribute that level across surrogate values for a fixed $U_t$. The residual RMS $\tau_t(U_t,S)$ may vary with $S$ but is unknown, so \ASB\ uses the learned residual RMS score $\widehat\tau_t(U_t,S)$. Among candidate redistribution rules $r(\cdot)$ with fitted conditional mean $\rho_t^0$, consider
\begin{equation}
\label{eq:conditional_refinement_variational_main}
  \min_{r(\cdot)>0}
  \widehat\E_t\!\left[
    \left.
    \frac{\widehat\tau_t^2(U_t,S)}{r(S)}
    \right|U_t
  \right]
  \quad\text{subject to}\quad
  \widehat\E_t\{r(S)\mid U_t\}=\rho_t^0 .
\end{equation}
When $\widehat\tau_t(U_t,S)>0$ almost surely under the fitted law, its proportional solution gives the basic refinement in Equation~\eqref{eq:asb_acquisition_geometry_main}; Appendix~\ref{app:hard_budget_acquisition} additionally enforces the probability bounds and support floor. Surrogate values with larger learned residual RMS scores receive more acquisition probability. The hard-budget refinement keeps its fitted conditional mean at $\rho_t^0$, so the final acquisition probability preserves that level as well:
\[
  \widehat\E_t\{\rho_t(U_t,S)\mid U_t\}
  =(1-\gamma_t)\rho_t^0
  +\gamma_t\widehat\E_t\{\rho_t^1(U_t,S)\mid U_t\}
  =\rho_t^0.
\]

Under the same support condition, $\rho_t(Z_t)$ may depend on $S_t$, yet Equations~\eqref{eq:target_validity} and~\eqref{eq:policy_comparison_target_main} show that the conditional mean of $G_{t,\pi}$ remains unchanged---the primary-outcome mean difference between policy $\pi$ and the current policy mixture $\mu_t$. Equation~\eqref{eq:online_comparison_acquisition_term}, by contrast, shows that the conditional variance of $G_{t,\pi}$ can change with $\rho_t$. Acquisition rules can therefore preserve that same primary-outcome mean difference while differing in whether $\rho_t$ may depend on $S_t$.

\section{Primary-Outcome Acquisition Before and After the Surrogate: Allocation and Regret}
\label{sec:theory_efficiency}

We now isolate the effect of acquisition timing by comparing rules that choose $\rho_t$ before versus after observing $S_t$. We first ask when allowing $\rho_t$ to depend on $S_t$ can reduce the minimum variance for a fixed policy comparison, and then whether the same timing difference can change the regret rate.

Throughout this section, we use PRE for acquisition rules whose $\rho_t$ is fixed before $S_t$ is observed, and POST for rules whose $\rho_t$ may depend on $S_t$; every PRE rule is therefore also a POST rule.

\subsection{Acquisition Timing for a Policy Comparison}
\label{sec:timing}

Decision relevance describes how much a primary outcome helps distinguish competing policies. \ASB\ quantifies it through comparisons with its current policy mixture; here we fix one policy comparison and compare PRE and POST acquisition rules.

\paragraph{Acquisition objective.}
Fix two deterministic policies $\pi_1,\pi_2$ and a common joint law for $(X,A,S,Y)$. Under this law, $A\sim p(\cdot\mid X)$, where $p$ is the action-selection distribution and $p(a\mid X)>0$ for every action chosen by either policy. Write $U=(X,A)$ for the pre-surrogate state. For this fixed comparison, $|g(U)|$ captures decision relevance through
\[
  g(U)
  =\frac{\one\{\pi_1(X)=A\}-\one\{\pi_2(X)=A\}}{p(A\mid X)}.
\]
Under an acquisition allocation $\rho$ and a fixed bridge $h(U,S)$, the augmented feedback from Equation~\eqref{eq:audited_reward} is $\Phi=h(U,S)+\{Q/\rho(U,S)\}\{Y-h(U,S)\}$ whenever $\rho(U,S)>0$. Combined with $g(U)$, Equation~\eqref{eq:target_validity} gives
\[
\E\!\left[g(U)\{\Phi-1/2\}\right]
=
\E\!\left[
  r^\star\{X,\pi_1(X)\}-r^\star\{X,\pi_2(X)\}
\right].
\]
Thus $g(U)\{\Phi-1/2\}$ is unbiased for the same primary-outcome value difference. Acquisition timing can therefore change its variance without changing its target.

To characterize this variance, fix the bridge $h$ and define the residual RMS $\tau_h(U,S)$ by $\tau_h^2(U,S):=\E[\{Y-h(U,S)\}^2\mid U,S]$. The part of the variance controlled by the allocation $\rho$ is
\begin{equation}
\label{eq:acquisition_variance_burden}
  \mathcal V(\rho):=\E\!\left[\frac{g^2(U)\tau_h^2(U,S)}{\rho(U,S)}\right].
\end{equation}
In $\mathcal V(\rho)$, decision relevance and residual RMS appear as the product $|g(U)|\tau_h(U,S)$. Here $|g(U)|$ is determined by $U$, whereas $\tau_h(U,S)$ may depend on $S$. The contribution vanishes whenever either is zero; finite $\mathcal V(\rho)$ requires $\rho(U,S)>0$ wherever both are positive.

PRE and POST therefore minimize $\mathcal V(\rho)$ under different timing restrictions: PRE restricts $\rho$ to $\rho(U)$, whereas POST allows $\rho(U,S)$. The theorem below gives the two minimizers and identifies when allowing $\rho$ to depend on $S$ strictly lowers the minimum variance.

\begin{theorem}[PRE/POST variance characterization for a fixed policy comparison]
\label{thm:timing_value}
For $\bar\rho\in(0,1]$, consider minimizing $\mathcal V(\rho)$ subject to $\E\rho\le\bar\rho$. If $\Pr\{|g(U)|\tau_h(U,S)>0\}>0$, the PRE and POST minimizers, denoted $\rho_{\rm pre}^\star$ and $\rho_{\rm post}^\star$, are
\begin{equation}
\label{eq:frontier_optimal_allocations}
\begin{aligned}
  \rho_{\rm pre}^\star(U)
  &=
  \bar\rho\,
  \frac{
    |g(U)|\,\{\E[\tau_h^2(U,S)\mid U]\}^{1/2}
  }{
    \E\!\left[
      |g(U)|\,\{\E[\tau_h^2(U,S)\mid U]\}^{1/2}
    \right]
  },\\
  \rho_{\rm post}^\star(U,S)
  &=
  \bar\rho\,
  \frac{
    |g(U)|\,\tau_h(U,S)
  }{
    \E[|g(U)|\,\tau_h(U,S)]
  },
\end{aligned}
\end{equation}
provided $\rho_{\rm pre}^\star(U)\le1$ and $\rho_{\rm post}^\star(U,S)\le1$ almost surely. Moreover, $\mathcal V(\rho_{\rm post}^\star)\le\mathcal V(\rho_{\rm pre}^\star)$, with strict inequality exactly when
$\Pr\!\left(|g(U)|>0,\ \Var\{\tau_h(U,S)\mid U\}>0\right)>0$.
\end{theorem}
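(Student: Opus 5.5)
The plan is to solve both problems with the Cauchy--Schwarz inequality, treating the budget constraint as the only coupling. Write $\phi(U,S)=|g(U)|\tau_h(U,S)\ge0$. For any admissible $\rho$ with $\E\rho\le\bar\rho$, Cauchy--Schwarz gives
\[
  \bigl(\E[\phi]\bigr)^2
  =\Bigl(\E\bigl[\tfrac{\phi}{\sqrt{\rho}}\sqrt{\rho}\bigr]\Bigr)^2
  \le\E\!\left[\tfrac{\phi^2}{\rho}\right]\E[\rho]
  \le\mathcal V(\rho)\,\bar\rho .
\]
On the set $\{\phi=0\}$ the integrand of $\mathcal V$ vanishes whatever $\rho$ is, so we restrict attention to $\{\phi>0\}$, where finiteness forces $\rho>0$. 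Thus $\mathcal V(\rho)\ge(\E\phi)^2/\bar\rho$ over all POST rules. Equality holds iff $\rho\propto\phi$ on $\{\phi>0\}$ and $\E\rho=\bar\rho$, which is exactly $\rho^\star_{\rm post}$. The positivity assumption $\Pr(\phi>0)>0$ guarantees $\E\phi>0$, so the normalization is well defined. The proviso $\rho^\star_{\rm post}\le1$ makes it a genuine probability, hence feasible and optimal, with $\mathcal V(\rho^\star_{\rm post})=(\E\phi)^2/\bar\rho$.

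For PRE, $\rho=\rho(U)$, so I first condition on $U$ using the tower property:
\[
  \mathcal V(\rho)=\E\!\left[\frac{g^2(U)\,\E[\tau_h^2\mid U]}{\rho(U)}\right]=\E\!\left[\frac{\psi(U)^2}{\rho(U)}\right],
  \qquad \psi(U)=|g(U)|\{\E[\tau_h^2\mid U]\}^{1/2}.
\]
The same Cauchy--Schwarz argument, now with $\psi$ in place of $\phi$, yields the minimizer $\rho^\star_{\rm pre}\propto\psi$ and the value $(\E\psi)^2/\bar\rho$. We have $\E\psi>0$ because $\psi\ge\E[\phi\mid U]$ by conditional Jensen, so $\E\psi\ge\E\phi>0$.

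The comparison then reduces to $\E\phi\le\E\psi$. Conditionally on $U$, Jensen gives $\E[\tau_h\mid U]\le\{\E[\tau_h^2\mid U]\}^{1/2}$, and multiplying by $|g(U)|$ and taking expectations gives the inequality. Strictness requires care, and I expect it to be the main obstacle: the difference $\E\psi-\E\phi=\E[|g(U)|(\{\E[\tau_h^2\mid U]\}^{1/2}-\E[\tau_h\mid U])]$ is a nonnegative integrand. This integrand is positive exactly when $|g(U)|>0$ and $\E[\tau_h^2\mid U]>(\E[\tau_h\mid U])^2$, that is, when $\Var(\tau_h\mid U)>0$. The two are equivalent because, for nonnegative numbers, $a>b\ge0$ iff $a^2>b^2$. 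So the expectation is strictly positive iff this event has positive probability, which is the stated condition.

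Some integrability bookkeeping is needed. Since $\tau_h$ is bounded by $1$ when $Y,h\in[0,1]$, and $g$ is bounded given positive propensities (or at least $\E\phi<\infty$ is implicitly assumed), all expectations are finite. Uniqueness, up to null sets on $\{\phi>0\}$ or $\{\psi>0\}$, follows from the equality case of Cauchy--Schwarz. This justifies calling these allocations \emph{the} minimizers.
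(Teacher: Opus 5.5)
Your proposal is correct and follows the paper's proof essentially step for step: Cauchy--Schwarz with $W=|g(U)|\tau_h(U,S)$ for POST, the tower reduction $\mathcal V(\rho)=\E\{\E(W^2\mid U)/\rho(U)\}$ followed by the same Cauchy--Schwarz for PRE, and $\E(W\mid U)\le\{\E(W^2\mid U)\}^{1/2}$ with $\Var(W\mid U)=g^2(U)\Var\{\tau_h(U,S)\mid U\}$ for the strictness criterion. One small correction to your equality and uniqueness remarks: equality in Cauchy--Schwarz also forces $\rho=0$ almost surely on $\{\phi=0\}$, because any mass there wastes budget, so ``$\rho\propto\phi$ on $\{\phi>0\}$ and $\E\rho=\bar\rho$'' is not sufficient by itself; this does not affect the theorem, since $\rho^\star_{\rm post}$ is proportional to $\phi$ everywhere.
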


The decision-relevance factor $|g(U)|$ is the same in $\rho_{\rm pre}^\star$ and $\rho_{\rm post}^\star$. $\rho_{\rm pre}^\star(U)$ uses the $U$-conditional residual RMS $\{\E[\tau_h^2(U,S)\mid U]\}^{1/2}$, whereas $\rho_{\rm post}^\star(U,S)$ uses $\tau_h(U,S)$ after observing $S$. Hence POST strictly lowers the minimum variance when, at pre-surrogate states $U$ that matter to the policy comparison, observing $S$ adds information about residual uncertainty beyond that in $U$ alone.

\subsection{Regret-Rate Separation}
\label{sec:regret_separation}

For a fixed policy comparison, the preceding result shows that using the current surrogate can strictly reduce the minimum variance at the same average acquisition rate. We next ask whether this timing difference can also separate regret rates when the surrogate does not identify the optimal action.

For any fixed $q\in(0,1)$, consider two bandit instances indexed by $\vartheta\in\{-1,+1\}$. Both have a constant context $X_t\equiv x_0$, binary action set $\mathcal A=\{-1,+1\}$, and the same i.i.d. binary surrogate law $S_t\sim\Bern(q)$, with $S_t\perp A_t\mid\calF_{t-1}$. Define the primary outcome by
\[
  Y_t=1/2\quad\text{if }S_t=0,
  \qquad
  Y_t=\one\{A_t=\vartheta\}\quad\text{if }S_t=1.
\]
In instance $\vartheta$, the primary-outcome mean is $\E_\vartheta(Y_t\mid A_t=a)=1/2+(q/2)\vartheta a$. Thus $a=\vartheta$ is optimal, while choosing $a=-\vartheta$ incurs regret $q$. The surrogate does not reveal the optimal action, but it does reveal whether the current primary outcome is informative: if $S_t=1$, $(A_t,Y_t)$ identifies $\vartheta$; if $S_t=0$, $Y_t=1/2$ under both instances. For the fixed bridge $h\equiv1/2$, the residual RMS is $0$ at $S_t=0$ and $1/2$ at $S_t=1$.

Let $\mathfrak L_{\rm pre}(B)$ denote PRE strategies satisfying
$\E_\vartheta\sum_{t=1}^T Q_t\le B$ for each $\vartheta$. Every PRE strategy under the pathwise hard budget of Section~\ref{sec:problem} belongs to this class, so the lower bound also applies to the original hard-budget problem. Write $(x)_+=\max\{x,0\}$.

\begin{theorem}[Regret separation from acquisition timing]
\label{thm:one_shot_separation_main}
For any $q\in(0,1)$ and $B\in\{1,\ldots,T\}$, every $\mathcal L\in\mathfrak L_{\rm pre}(B)$ satisfies
\begin{equation}
\label{eq:one_shot_pre_lower_main}
  \max_{\vartheta\in\{-1,+1\}}
  \E_\vartheta R_T
  \ge (qT/2)(1-qB)_+.
\end{equation}
In contrast, there exists a POST strategy $\mathcal L_{\rm post}$ such that $\sum_{t=1}^T Q_t=B$ on every path and
\begin{equation}
\label{eq:one_shot_post_upper_main}
  \max_{\vartheta\in\{-1,+1\}}
  \E_\vartheta R_T
  \le [1-(1-q)^T]/2\le1/2.
\end{equation}
\end{theorem}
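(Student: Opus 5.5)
The plan is to prove the two bounds separately. The PRE lower bound comes from a coupling argument based on the event that no informative outcome is ever acquired. The POST upper bound comes from an explicit ``acquire the first informative outcome'' strategy.

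\emph{Lower bound.} Fix a PRE strategy $\mathcal L\in\mathfrak L_{\rm pre}(B)$. Let $E$ be the event that no round $t$ has $Q_t=1$ and $S_t=1$ simultaneously. On $E$, every observed primary outcome equals $1/2$. The surrogates have the same law under both instances, so on $E$ the observation sequence $(S_t,Q_t,Q_tY_t)_{t\le T}$, together with any internal randomization, has the same distribution under $\vartheta=\pm1$. More precisely, couple the two instances on one probability space with common surrogates and common internal randomness; the two trajectories coincide up to the first round with $Q_t=S_t=1$.

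Because $\rho_t$ is chosen before $S_t$ and $S_t$ is independent of the past, $\Pr(Q_t=1,S_t=1\mid\calF_{t-1},A_t)=q\,\rho_t$. Summing and applying a union bound gives $\Pr_\vartheta(E^c)\le q\,\E_\vartheta\sum_tQ_t\le qB$. Hence $\Pr(E)\ge(1-qB)_+$ under the coupling, and this holds for either $\vartheta$ because the coupled path up to the first informative acquisition is the same for both instances.

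On $E$, the action sequence is identical under both instances. Each round, $A_t$ is therefore optimal for exactly one of the two values of $\vartheta$, so the per-round regrets sum to $q$ across the instances. Writing $R_T^{\pm}$ for the regret in each instance, this gives
\[
\E_{+1}R_T+\E_{-1}R_T\ge \E[(R_T^{+}+R_T^{-})\one_E]=qT\Pr(E)\ge qT(1-qB)_+ .
\]
The maximum over $\vartheta$ is at least half of this sum, which yields \eqref{eq:one_shot_pre_lower_main}.

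The delicate point is keeping the coupling rigorous: the trajectories coincide only up to the first informative acquisition, and $E$ must be defined so that it is the same event in both coupled worlds. I would define $E$ through the common surrogates and common acquisition coins under the shared randomness. On $E$ the trajectories then agree at all rounds, and the bound $\Pr(E^c)\le qB$ uses the budget in each instance.

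\emph{Upper bound.} Define $\mathcal L_{\rm post}$ as follows.
\begin{itemize}[leftmargin=1.4em]
\item Play $A_t=+1$ until the first round $\sigma$ with $S_\sigma=1$, and acquire $Y_\sigma$ at that round with $\rho=1$.
\item The value $Y_\sigma=\one\{+1=\vartheta\}$ reveals $\vartheta$, so play $\vartheta$ from round $\sigma+1$ onward.
\end{itemize}
For the hard budget I would first take the case $B=1$ and ask whether equality can be met. If no informative round ever occurs, acquisitions remain unspent; the fix is to force acquisitions in the final rounds when the remaining count equals the remaining rounds. Since $B\le T$, this makes $\sum_tQ_t=B$ on every path. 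Extra acquisitions never hurt, because the action rule ignores uninformative ones; the first informative round still gets priority whenever budget remains, which holds since $B\ge1$.

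Regret under $\vartheta=+1$ is $0$. Under $\vartheta=-1$, regret is $q$ per round before $\sigma$. I would bound this by $q\,\E[\min(\sigma-1,T)]=q\sum_{t=1}^{T}(1-q)^{t}\le 1-q$. The stated bound $[1-(1-q)^T]/2$ is tighter and suggests the intended strategy randomizes the initial action uniformly. I will therefore take $A_t$ uniform on $\{\pm1\}$ before $\sigma$, so expected regret is $q/2$ per pre-$\sigma$ round. Then
\[
\E_\vartheta R_T=(q/2)\sum_{t=1}^T(1-q)^{t-1}=[1-(1-q)^T]/2,
\]
matching \eqref{eq:one_shot_post_upper_main}.
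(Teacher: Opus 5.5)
Your proposal is correct and essentially follows the paper's proof. For the PRE bound, the paper also uses indistinguishability of the two instances up to the first acquisition with $S_t=1$, whose expected count is at most $qB$ because $\rho_t$ cannot see $S_t$; the paper phrases this with a uniform prior on $\vartheta$ and the per-round event $\{K_t=0\}$, while you use a pathwise coupling and the single event $\{K_{T+1}=0\}$. Your POST strategy is essentially the paper's strategy at $\Delta=1/2$: play uniform actions until the first $S_t=1$ round, which is always acquired, then play the revealed $\vartheta=A_\sigma(2Y_\sigma-1)$, with deterministic final fill to spend exactly $B$ (your discarded deterministic-start count should use $\min(\sigma,T)$ rather than $\min(\sigma-1,T)$, since $A_\sigma$ is chosen before $S_\sigma$ is seen, but the uniform version you keep counts round $\sigma$ correctly).
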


PRE obtains informative evidence only when an acquisition coincides with $S_t=1$, so the expected number of informative acquisitions is at most $qB$. With $q=1/(2B)$, the probability of no informative acquisition is at least $1/2$. POST can instead acquire the first $S_t=1$ round, which identifies $\vartheta$. Thus
\begin{equation}
\label{eq:one_shot_rate_gap_main}
\begin{aligned}
  \max_\vartheta\E_\vartheta R_T
  &\le1/2
  &&\text{for }\mathcal L_{\rm post},\\
  \max_\vartheta\E_\vartheta R_T
  &\ge T/(8B)
  &&\text{for every }\mathcal L\in\mathfrak L_{\rm pre}(B).
\end{aligned}
\end{equation}
For $B=o(T)$, every $\mathcal L\in\mathfrak L_{\rm pre}(B)$ has $\Omega(T/B)$ worst-case regret, while $\mathcal L_{\rm post}$ has bounded regret. The surrogate reveals when the current primary outcome is informative without revealing the optimal action. Appendix~\ref{app:regret_rate_separation_proof} gives the corresponding separation with noisy informative outcomes.

\section{Regret under a Primary-Outcome Acquisition Budget}

We first establish \ASB's regret guarantee under the pathwise hard budget,
then give a lower bound under a weaker expected acquisition budget to isolate
the budget dependence that remains after this relaxation.

\subsection{ASB Regret under the Hard Budget}
\label{sec:asb_regret_guarantee}

\ASB\ competes with the best policy in $\Pi$, which need not select the best action on every round. To separate this gap from online learning regret, define
$\operatorname{App}_T(\Pi):=\min_{\pi\in\Pi}\sum_{t=1}^T[\max_{a\in\calA} r^\star(X_t,a)-r^\star\{X_t,\pi(X_t)\}]$
as the cumulative gap between the best action on each round and the best policy in $\Pi$. Then $R_T-\operatorname{App}_T(\Pi)$ is regret relative to the best policy in the class.

\begin{theorem}[Best-in-class regret]
\label{thm:main_asb_regret}
Fix $\delta\in(0,1)$. In the setting of Section~\ref{sec:problem}, let $|\Pi|=N$ and $B\in\{1,\ldots,T\}$. Run \ASB\ (Algorithm~\ref{alg:asb}) using the hard-budget rule specified in Appendix~\ref{app:hard_budget_acquisition}, the fixed learning rate $\eta_t\equiv\eta_\star$ from Appendix~\ref{app:main_regret_proof}, and $\varepsilon_t\equiv\min\{1/2,\sqrt{K\log(12NT/\delta)/T}\}$. Then, with probability at least $1-\delta$,
\begin{equation}
\label{eq:main_policy_regret}
  R_T-\operatorname{App}_T(\Pi)
  \le
  \widetilde O\!\left[
    \min\left\{T,
      \sqrt{KT\log(N/\delta)}
      \left(1+\sqrt{T/B}\right)
    \right\}
  \right].
\end{equation}
\end{theorem}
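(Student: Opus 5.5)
The proof follows the standard route for second-order exponential weights (EXP4-style) with unbiased but heavy-tailed estimates $G_{t,\pi}$, combined with a martingale step that converts estimated comparisons into realized regret.

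\textbf{Step 1: the potential argument.} Work with $\Phi_t$-based gains $G_{t,\pi}$ and the update \eqref{eq:observed_second_order_update}. Assume $\eta_\star|G_{t,\pi}|\le 1/2$ almost surely; this will be enforced in Step 3. Then $\log(1+x)\ge x-x^2$ for $x\ge -1/2$ gives, for every $\pi^\circ\in\Pi$,
\[
\sum_t G_{t,\pi^\circ}-\sum_t\sum_\pi\mu_t(\pi)G_{t,\pi}\le \frac{\log N}{\eta_\star}+c\,\eta_\star\sum_t\sum_\pi \mu_t(\pi)G_{t,\pi}^2 .
\]
The second-order term $\eta^2G^2$ in the exponent is there so that the comparator side carries $+\eta\sum_tG_{t,\pi^\circ}^2$. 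This is the form needed to use a Freedman-type inequality uniformly over $\pi$ without paying a variance union-bound twice. The centering makes $\sum_\pi\mu_t(\pi)c_{t,\pi}(a)=0$, so the mixture term vanishes identically. The left side is therefore $\sum_tG_{t,\pi^\circ}$ alone.

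\textbf{Step 2: concentration.} By \eqref{eq:policy_comparison_target_main}, $G_{t,\pi}-D_{t,\pi}$ is a martingale difference, where $D_{t,\pi}=r^\star(X_t,\pi(X_t))-\sum_a\bar p_t(a)r^\star(X_t,a)$. Apply Freedman's inequality with a union bound over the $N$ policies and a peeling over $T$ variance levels; this accounts for the $\log(12NT/\delta)$ inside $\varepsilon_t$. The result is
\[
\sum_t D_{t,\pi^\circ}\le \sum_tG_{t,\pi^\circ}+O\!\Big(\sqrt{V_T\log(NT/\delta)}+M\log(NT/\delta)\Big),
\]
where $V_T$ is the sum of conditional variances and $M$ is the almost-sure bound on $|G|$.

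Regret of the played action relative to $\bar p_t$ is at most $\varepsilon_t T$ plus an Azuma term of order $\sqrt{T\log(1/\delta)}$. Together these give
\[
R_T-\operatorname{App}_T\le \sum_tD_{t,\pi^\circ}+\varepsilon T+\widetilde O(\sqrt T).
\]

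\textbf{Step 3: variance control, the main obstacle.} Use \eqref{eq:online_comparison_acquisition_term}. The $\rho$-independent part is bounded by $\sum_a p_t(a)c^2_{t,\pi}(a)\lesssim K/\varepsilon$-free terms; concretely it is $\le\sum_a \ell_t(a)\le 2K_t$ using $p_t\ge(1-\varepsilon)\bar p_t$. The acquisition part is
\[
\sum_a p_t(a)\,\ell_t^2(a)\big/\rho_t,
\]
and here the hard-budget pacing of Algorithm~\ref{alg:hard_budget_pacing} must be used. Its scale sets $\rho^0_t\approx (B/T)\,\ell_t(A_t)/\bar\ell$, with a floor guaranteed independently of $\widehat\tau_t$, so the redistribution loses at most a factor $1/(1-\gamma_t)$ or the floor constant. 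Then by Cauchy--Schwarz this term is $\lesssim (T/B)\big(\sum_ap_t(a)\ell_t(a)\big)\bar\ell$, and $\sum_ap_t\ell_t=O(K)$. This yields per-round variance $O(K(1+T/B))$ and $V_T=O(KT(1+T/B))$.

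The range bound is $M\lesssim \ell_t/\rho_t\lesssim (K/\varepsilon)\,(T/B)$. This large range is what forces $\eta_\star\le 1/(2M)$ and is what I expect to be delicate: both the floor in the pacing rule and the boundary rules near budget exhaustion must be checked so that $\rho_t$ never falls below order $B\ell_t/T$ while $\ell_t>0$.

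\textbf{Step 4: tuning.} Choose
\[
\eta_\star=\min\Big\{1/(2M),\ \sqrt{\log(N/\delta)/(KT(1+T/B))}\Big\}.
\]
Combining Steps 1--3, the potential term and the Freedman term each become $\sqrt{KT\log(N/\delta)}(1+\sqrt{T/B})$. The range terms $M\log(N/\delta)$ and $\varepsilon T$ are lower order after polylog factors, given $\varepsilon=\sqrt{K\log/T}$. Finally, the trivial bound $R_T\le T$ gives the $\min\{T,\cdot\}$ in \eqref{eq:main_policy_regret}. Possible lower-order slack from $M$, when $B$ is very small, is absorbed in the $\widetilde O$ or in $T$.
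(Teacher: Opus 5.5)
\textbf{There is a genuine gap in Steps 2--3.} Your Freedman step for the comparator needs $\sum_t\Var(G_{t,\pi^\circ}\mid\calF_{t-1},X_t)$. But your per-round bound $O(K(1+T/B))$ is valid only for the $\mu_t$-weighted mixture moment $\sum_\pi\mu_t(\pi)\E_tG_{t,\pi}^2$. The paper bounds exactly that quantity, using $\sum_\pi\mu_t(\pi)c_{t,\pi}^2(a)=\bar p_t(a)\{1-\bar p_t(a)\}/p_t^2(a)$, which gives $\E_t\overline c_{\mu,t}^2(A_t)\le2K$ and $\E_t\{\overline c_{\mu,t}^2(A_t)/\ell_t(A_t)\}\le1$.

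For a single policy the bound fails. Your inequality $\sum_ap_t(a)c_{t,\pi}^2(a)\le\sum_a\ell_t(a)$ is true, but $\sum_a\ell_t(a)\le2K$ is not. If $a_0=\pi^\circ(X_t)$ has small $\bar p_t(a_0)$, then $p_t(a_0)$ can be of order $\varepsilon/K$, and $\sum_ap_t(a)c_{t,\pi^\circ}^2(a)\ge\{1-\bar p_t(a_0)\}^2/p_t(a_0)\approx K/\varepsilon$. For binary $Y$ one has $\E\{(\Phi_t-1/2)^2\mid\calF_{t-1},Z_t\}\ge1/4$, so this is the actual order of the comparator's variance. The algorithm does not guarantee weight on the realized best-in-class policy's action, since other near-optimal policies may disagree with it. In the worst case the comparator's cumulative variance is therefore of order $TK/\varepsilon$. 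With $\varepsilon\asymp\sqrt{KL_T/T}$, your Freedman term becomes of order $T^{3/4}(KL_T)^{1/4}$, which is already too large at $B=T$.

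You note in Step 1 that the $\eta^2G^2$ term puts $+\eta\sum_tG_{t,\pi^\circ}^2$ on the comparator side. However, your displayed inequality discards it, and your Freedman step, stated in conditional variances, never uses it. The missing idea is the paper's cancellation:
\begin{itemize}
\item Keep the potential bound in the form $\sum_tG_{t,\pi}+\eta\sum_tG_{t,\pi}^2\le(\log N)/\eta+3\eta V_T^G$.
\item Replace Freedman by the self-normalized supermartingale $\exp\{\eta\sum_{i\le t}(\E_iG_{i,\pi}-G_{i,\pi})-\eta^2\sum_{i\le t}G_{i,\pi}^2\}$. It needs only $\eta|G_{t,\pi}|\le1/2$ and gives $\sum_t\E_tG_{t,\pi}\le\sum_tG_{t,\pi}+\eta\sum_tG_{t,\pi}^2+\log(6N/\delta)/\eta$ simultaneously for all $\pi$.
\item The comparator's empirical second moment then cancels exactly. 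Only the mixture moment $V_T^G=\sum_t\sum_\pi\mu_t(\pi)G_{t,\pi}^2$ remains, and it is concentrated separately around $\sum_t\E_tv_t^G\le KT+2\mathsf D_T$, with $\mathsf D_T\le2KT+T/\alpha$.
\end{itemize}

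\textbf{Two further steps are off.} First, your range bound $M\lesssim(K/\varepsilon)(T/B)$ is multiplicative. With it, the terms $ML_T$ and $L_T/\eta_\star\ge2ML_T$ exceed $\sqrt{KTL_T}(1+\sqrt{T/B})$ by a polynomial factor at intermediate budgets (e.g.\ $B\asymp\sqrt{KTL_T}$), even after capping at $T$. So Step 4's claim that these terms are lower order fails. The correct bound is additive: $|G_{t,\pi}|\le\ell_t(A_t)/2+|c_{t,\pi}(A_t)|/\rho_t\le K/(2\underline\varepsilon)+1/\alpha$ with $\alpha=B/\{KT\log(8T)\}$. This uses $|c_{t,\pi}|\le\ell_t$ and $\rho_t\ge\min\{1,\alpha\ell_t(A_t)\}$, after which $\min\{T,KTL_T\log(8T)/B\}\le T\sqrt{KL_T\log(8T)/B}$.

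Second, the support property you leave unchecked is only a high-probability statement.
\begin{itemize}
\item The guaranteed floor is $\rho_t\ge\rho_t^{\rm supp}\ge\min\{1,\alpha\ell_t(A_t)\}$. It comes from $\rho_t^0,\rho_t^1\ge\rho_t^{\rm supp}$ and $\sum_ap_t(a)\ell_t(a)\le K$, not from the pacing scale $\lambda_t$.
\item It holds only while $B_t^{\rm rem}>0$.
\item The paper shows the budget is not exhausted before final fill with probability at least $1-\delta/2$. Each interval reserves $B_j^{\rm supp}$ floor acquisitions, and a KL--Chernoff tail bounds the floor count; this needs $B\ge C_{\rm pace}KL_T$, with smaller $B$ covered by $R_T\le T$. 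The remaining-budget ratio also increases across intervals.
\item The fixed-rate lemma is then applied to a proof-only process that switches to full acquisition after the first support failure. This process coincides with ASB on the support event.
\end{itemize}
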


Up to logarithmic factors, Equation~\eqref{eq:main_policy_regret} is the usual finite-policy contextual-bandit scale $\sqrt{KT\log N}$ multiplied by $1+\sqrt{T/B}$. At $B=T$, this is the usual scale with primary-outcome feedback every round; the budget factor grows as $B$ decreases.

The guarantee is end-to-end and does not assume correctness of the fitted bridge, residual score, or surrogate law. Appendix~\ref{app:learned_refinement_efficiency} separately gives a sufficient condition under which the learned post-surrogate refinement reduces the acquisition-dependent contribution to the regret bound.

\subsection{Primary-Outcome Scarcity Lower Bound}

We next lower-bound regret after relaxing the pathwise hard budget to
$\sup_\nu\E_\nu\sum_{t=1}^TQ_t\le B$, where $\nu$ indexes environments in
the lower-bound family. Every pathwise hard-budget learner satisfies this
constraint, so the same lower bound applies to the original pathwise setting.

\begin{theorem}[Regret lower bound under a primary-outcome budget]
\label{thm:audit_budget_lower_bound_main}
There is a universal constant $c_{\mathrm{lb}}>0$ such that, for every
horizon $T$, $N\ge K\ge2$, and $B\in\{1,\ldots,T\}$, there is a
contextual-bandit family in the setting of Section~\ref{sec:problem}, with
$K$ actions and a policy class $\Pi$ of size $N$ satisfying
$\operatorname{App}_T(\Pi)=0$, for which every learner $\mathcal L$ with
$\sup_\nu\E_\nu\sum_{t=1}^TQ_t\le B$ satisfies
\[
  \sup_\nu\E_\nu R_T
  \ge
  c_{\mathrm{lb}}\min\left\{T,
    \sqrt{KT\log N/\log K}
    \sqrt{T/B}
  \right\}.
\]
\end{theorem}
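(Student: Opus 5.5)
The plan is to build a hard family by embedding the standard finite-policy contextual-bandit lower bound, then thinning the informative feedback with the mechanism of Theorem~\ref{thm:one_shot_separation_main}. The surrogate will carry no information, so POST has no advantage over PRE, and each acquisition reveals a noisy primary outcome only with a controlled probability.

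\textbf{Construction.} Let $M=\lfloor\log N/\log K\rfloor\ge1$ and take $M$ contexts $x_1,\ldots,x_M$, each drawn uniformly i.i.d. Let $\Pi$ consist of all maps $\{x_1,\ldots,x_M\}\to\calA$; there are $K^M\le N$ of them, and we pad with duplicates to reach size exactly $N$. Environments are indexed by $\nu=(a_1^\star,\ldots,a_M^\star)\in\calA^M$, optionally perturbed by an alternative $j$ at one context. At context $x_m$, $Y\sim\Bern(1/2+\Delta\one\{a=a_m^\star\})$. The surrogate $S_t$ is independent noise. The optimal policy lies in $\Pi$, so $\operatorname{App}_T(\Pi)=0$. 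The gap is $\Delta=c\min\{1,\sqrt{KM/B}\}$.

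\textbf{Lower-bound argument.} This is a per-context Assouad/Le Cam argument. Fix a context $m$ and a base environment in which all actions at $x_m$ have mean $1/2$. Let $n_{m,a}$ be the number of acquired rounds with $X_t=x_m$ and $A_t=a$. Because $S_t$ is uninformative and unacquired rounds carry no information about $\nu$, the divergence decomposition gives
\[
\KL(\Pr_0\,\|\,\Pr_{m,a})\le C\Delta^2\,\E_0 n_{m,a}.
\]
Summing over $m$ and $a$, the total acquisitions satisfy $\sum_{m,a}\E_0n_{m,a}\le B$. Hence for at least half of the pairs $(m,a)$ we have $\E_0 n_{m,a}\le 2B/(KM)$, and the corresponding divergence is $O(\Delta^2B/(KM))=O(c^2)$, which is small. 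On such a pair, Pinsker's inequality shows that the learner plays $a$ at $x_m$ in environment $(m,a)$ for at most about half of the roughly $T/M$ rounds in which $x_m$ appears, up to a constant slack. Each such round incurs regret $\Delta$.

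Averaging over $m$ and over a uniformly random alternative $a$ gives
\[
\sup_\nu\E_\nu R_T\gtrsim M\cdot (T/M)\,\Delta=T\Delta .
\]
If $B\ge KM$, then $\Delta=c\sqrt{KM/B}$ and the bound becomes $T\sqrt{KM/B}=\sqrt{KT\log N/\log K}\,\sqrt{T/B}$. If $B<KM$, then $\Delta=c$ and the bound is of order $T$, which gives the $\min\{T,\cdot\}$ form.

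\textbf{Main obstacle.} The delicate step is the divergence bound. It must charge information only to acquired rounds even though the acquisition decision $\rho_t$ can depend on the history and on $S_t$, and the budget holds only in expectation under each $\nu$. I will handle this with the chain rule over the full observation sequence $(X_t,A_t,S_t,Q_t,Q_tY_t)$. The $X$, $A$, $S$ and $Q$ factors have identical conditional laws across environments, since $S$ is independent noise and the learner's kernels are shared. Only the $Q_tY_t$ factor with $Q_t=1$ contributes, which yields $\Delta^2\E_0n_{m,a}$ by the Bernoulli divergence bound for means near $1/2$. The budget must then be applied under the base measure $\Pr_0$, so I will include the base environment in the family so that $\sup_\nu$ covers it.

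A second technicality is that the actual number of visits to each context is random. I will control it with Chernoff bounds, which is routine. Because the constants are absorbed into $c_{\mathrm{lb}}$, the resulting bound is $\sqrt{KTM}\sqrt{T/B}$ with $M=\log N/\log K$, as required.
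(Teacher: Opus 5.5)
Your divergence decomposition (only acquired rounds with $X_t=x_m$, $A_t=a$ contribute), the choice $M\approx\log N/\log K$, and the gap $\Delta\asymp\min\{1,\sqrt{KM/B}\}$ all match the paper. The gap is in how you combine the shared budget with the per-context comparisons. The inequality $\sum_{m,a}\E_0 n_{m,a}\le B$ and the conclusion $\sup_\nu\E_\nu R_T\gtrsim M\cdot(T/M)\Delta$ require incompatible choices of reference.

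\emph{Single reference.} If $\Pr_0$ is one environment in which every context is flat, the budget inequality holds. But each alternative $\Pr_{m,a}$ is flat away from $x_m$ and incurs regret only on the roughly $T/M$ rounds at $x_m$. Averaging over $(m,a)$ therefore gives order $\Delta T/M$, not $\Delta T$.

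\emph{Per-context references.} Suppose instead the reference for context $m$ is a product environment $\nu$ with $x_m$ flattened; call it $\Pr_0^{(m)}$. Then all $M$ contexts contribute regret in one environment, as Assouad's argument requires. But the $M$ references are now different environments, and the expected-budget constraint applied to each gives only $\E_0^{(m)}\sum_a n_{m,a}\le B$ separately for each $m$. Nothing ties these together, so the sum is bounded only by $MB$.

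Either way you get at best order $T\sqrt{K/B}$, the rate for $N=K$. Including the reference environments in the family, as you propose, does not change this.

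The missing idea is to put the flat value into the prior. Draw $\boldsymbol\Theta\in\{0,1,\ldots,K\}^M$ with independent coordinates, $\Pr(\Theta_x=0)=1/2$ and $\Pr(\Theta_x=i)=1/(2K)$, where $\Theta_x=0$ gives every action at $x$ mean $1/2$. The reference for context $x$ is $\Theta_x=0$ with $\Theta_{-x}$ held fixed. Average the expected budget over the prior and keep only the event $\{\Theta_x=0\}$ in the context-$x$ term. This gives $\sum_x\bar n_x^{\rm acq}\le 2B$, where $\bar n_x^{\rm acq}$ is the reference expected acquisition count at $x$ averaged over $\Theta_{-x}$. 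Jensen's inequality moves that average inside the square root from Pinsker. Cauchy--Schwarz over $x$, with $\sum_xT_x^2\le 2T^2/M$, then produces the $T\sqrt{B/(KM)}$ term, and the flat coordinates cost only a factor $2$ in regret.

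Two smaller repairs are also needed. First, selecting pairs with small $\E_0 n_{m,a}$ does not control $\E_0 n^{\rm play}_{m,a}$: the learner may play a selected arm on every $x_m$ round under the reference while rarely acquiring it. Average the Pinsker bound over $a$ using $\sum_a\E_0 n^{\rm play}_{m,a}=T_m$, as the paper does. Second, with i.i.d. contexts, Pinsker applied to $n^{\rm play}_{m,a}\le T$ loses a factor $M$ unless you condition on the exogenous context sequence. The paper avoids this with a fixed balanced sequence and $M=\min\{T,\lfloor\log N/\log K\rfloor\}$.
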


Unlike the PRE-specific separation in Section~\ref{sec:regret_separation}, Theorem~\ref{thm:audit_budget_lower_bound_main} applies to every learner, showing that the $\sqrt{T/B}$ dependence remains even without the PRE restriction.

\section{Empirical Tests of Acquisition Design}
\label{sec:experiments}

Synthetic experiments isolate the acquisition mechanisms while holding the
primary-outcome decision problem fixed. KuaiRec then evaluates the learned
post-surrogate refinement in a data-derived contextual benchmark. Within each
matched comparison, the policy learner and primary-outcome budget are held
fixed. The KuaiRec comparison repeats this design across budgets.

\subsection{Synthetic tests of acquisition structure}

The synthetic study asks three questions within one family: how regret changes
when either acquisition component---decision relevance or residual RMS---is
removed; whether acquisition timing relative to the current surrogate changes
regret; and whether relocating a fixed average amount of residual information
between policy-agreement and policy-disagreement contexts changes regret.

Contexts are divided into a region where the represented policies agree
and a region where they disagree. Across all variants, the
context distribution, policy class, and $P(Y\mid X,A)$ are fixed, with
$\E(Y\mid X,A,S)=\E(Y\mid X,A)$. The surrogate can therefore reveal
differences in residual magnitude without revealing which action is better.

All tests use $K=8$ actions, $N=256$ policies, horizon $T=10^5$, and an
exact primary-outcome budget of $B=3{,}000$. Figure~\ref{fig:controlled_mechanisms}
summarizes the tests; Appendix~\ref{app:controlled_synthetic_family} gives the
data-generating process.

\begin{figure}[!t]
\centering
\includegraphics[width=\linewidth]{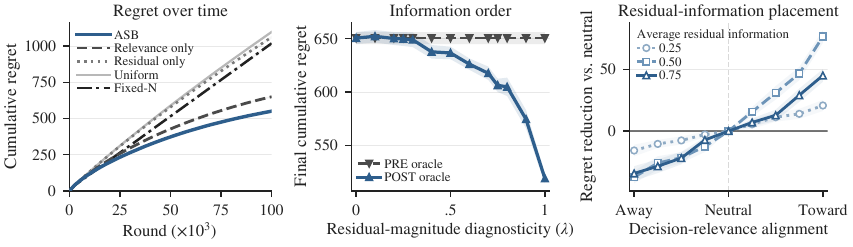}
\caption{\textbf{Synthetic tests of acquisition structure.}
Left: cumulative regret for \ASB, Relevance only, Residual only, Uniform, and
Fixed-N under the same exact primary-outcome budget. Middle: final cumulative
regret for matched PRE and POST oracles as $\lambda$ increases the
surrogate's information about residual magnitude, from none at $\lambda=0$
to exact identification of the high-residual state at $\lambda=1$. Right:
at each fixed average residual-information level, information is shifted from
policy-agreement contexts (Away), through equal information in the two
regions (Neutral), to policy-disagreement contexts (Toward); positive values
mean lower regret than neutral placement. Bands show mean $\pm1$ SE over 64
matched environments.}
\label{fig:controlled_mechanisms}
\end{figure}

\paragraph{Acquisition components.}
The left panel compares \ASB\ with two one-component ablations and two
simple references. \emph{Relevance only} keeps decision relevance but
removes redistribution after observing $S_t$; \emph{Residual only} uses the
learned residual RMS score without decision relevance; \emph{Uniform} uses
neither score; and \emph{Fixed-N} is a budget-matched version of the prefix
strategy in \citet{tucker2023costly}, acquiring the first $B$ primary
outcomes. In this family, \ASB\ has substantially lower cumulative regret
than each of the four alternatives. The comparison shows why both are useful:
decision relevance identifies rounds that distinguish competing policies,
while the residual RMS score prioritizes surrogate states by the estimated
uncertainty remaining after $S_t$.

\paragraph{Acquisition timing.}
The middle panel tests the information-order distinction in
Section~\ref{sec:theory_efficiency} with PRE and POST oracles using exact
residual-RMS quantities around the common fitted bridge. PRE and POST coincide
when the surrogate carries no residual-magnitude information; as $\lambda$
increases, POST attains lower regret. Since
$P(Y\mid X,A)$ is unchanged and $S_t$ does not reveal the better action, the
gap comes from residual-magnitude information being available before
acquisition.

\paragraph{Information placement.}
The right panel moves residual information between the policy-agreement and
policy-disagreement regions while holding its average amount fixed within
each curve. At all three average information levels, placing more of that
information in the disagreement region lowers regret. Thus the same average
amount of residual information is more useful when it lies where represented
policies disagree. The middle of the three
average-information levels allows the largest contrast in residual
information between the agreement and disagreement regions and also shows the
largest regret change.

\subsection{KuaiRec acquisition benchmark}

Using KuaiRec's nearly complete matrix subset \citep{gao2022kuairec}, we
construct a contextual acquisition benchmark with $K=5$ actions. Its
surrogate--primary relationship comes from observed user--video outcomes. On
each round the learner selects one of five candidate videos, observes
$S7=\one\{\text{play duration}>7\text{ seconds}\}$ as a free surrogate, and
then decides whether to acquire
$Y=\min\{\text{watch ratio},1\}$ as the primary outcome. Both regret and terminal policy value are evaluated using the watch-ratio
primary outcome.

The benchmark has $N=9$ represented policies and horizon $T=22{,}417$. We
evaluate primary-outcome budgets of $2,3,4,6,8,12,16,$ and $24\%$. At each budget, \ASB\ is compared with the matched \emph{Relevance only}
reference, which removes only the learned redistribution after observing
$S_t$. The comparison therefore isolates the contribution of the learned
post-surrogate refinement as the primary-outcome budget varies.

The bridge, residual score, and conditional surrogate law are fitted on a
disjoint KuaiRec subset and frozen before evaluation; only acquired benchmark
primary outcomes enter the online learner.
Appendix~\ref{app:kuairec_benchmark} gives the full benchmark construction
and evaluation protocol.

\begin{figure}[!t]
\centering
\includegraphics[width=\linewidth]{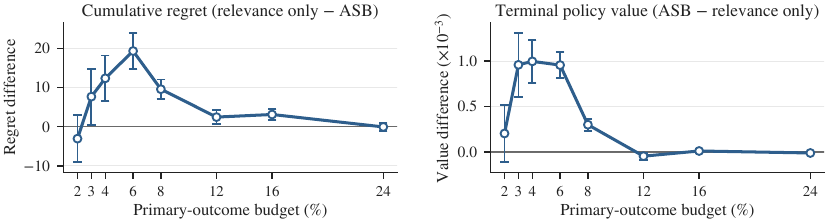}
\caption{\textbf{Post-surrogate refinement on KuaiRec.}
Left: paired cumulative-regret difference, Relevance only minus \ASB.
Right: paired exact terminal policy-value difference, \ASB\ minus Relevance
only, evaluated after the online run using the complete benchmark table. Positive values favor \ASB\ in both panels. Error bars show
mean $\pm1$ SE over 32 paired order/action randomizations and 8 acquisition
randomizations per order/action stream.}
\end{figure}

The clearest separation occurs at $4$--$8\%$, where both mean contrasts
favor \ASB; the differences are small at the lowest budget and again at the
upper end of the evaluated range. Cumulative regret summarizes the learning trajectory, whereas terminal
policy value evaluates the final mixture. The regret contrast remains positive
at budgets where the terminal-value contrast is near zero, so learning-path
gains need not persist in terminal value.

At the low end, full-acquisition initialization consumes most of the budget,
leaving few later rounds on which post-surrogate refinement can change what
is acquired. Toward the high end, both methods acquire many more primary
outcomes and the contrasts narrow. Between these limits, enough
post-initialization rounds remain for refinement to change which outcomes are
acquired, while primary feedback remains limited enough that those choices
affect learning.

\section{Discussion}

In this setting, the ordering matters: the surrogate arrives after the action
but before the primary outcome is measured. It cannot revise $A_t$, but it can
still affect which primary outcomes are observed and used in later policy
updates. This leaves the primary outcome in its original role---defining
action value and regret---while giving the surrogate a separate role in
measurement. That acquisition question is concrete: would measuring the current outcome
distinguish competing policies on this round, and how much uncertainty remains
after $S_t$?

Once primary measurement is selective, three separate questions arise.
Selective measurement can bias an uncorrected update; the augmented feedback
preserves the primary target without requiring a correct bridge. Once that
target is protected, the timing question remains: seeing the current surrogate
before measurement can leave the minimum comparison variance unchanged in
some settings and change the attainable regret rate in others, depending on
whether $S_t$ distinguishes residual uncertainty on rounds that matter to the
policy comparison.
A separate limitation comes from primary-outcome scarcity itself. The lower
bound shows a worst-case $\sqrt{T/B}$ penalty, up to logarithmic factors, even
after the pathwise hard budget is relaxed to an expected acquisition budget.

The next difficulty is learning the acquisition rule online: the learner does
not know in advance which rounds are worth measuring. The selectively acquired primary outcomes feed
both the policy updates and the bridge and residual fits, so evidence collected
for action learning also changes which later outcomes are acquired. \ASB's
worst-case guarantee does not require the fitted bridge, residual score, or
surrogate law to be correct; errors in them can hurt how efficiently the
budget is used without changing the primary target preserved by the augmented
update. The synthetic experiments show why both decision relevance and post-surrogate
residual information matter: in this family, removing either one increases
regret, and the same average residual information is more useful when it is
concentrated where represented policies disagree. KuaiRec then shows how a
learned refinement behaves across finite budgets: the separation from
relevance-only acquisition is clearest at $4$--$8\%$; at the low end,
initialization leaves little room for later refinement, while at the high end
both methods observe many more primary outcomes and the contrast narrows.
Cumulative-regret gains can also persist when terminal-value differences are
near zero. Thus, an opportunity to use post-surrogate information is not the
same as being able to learn and exploit that opportunity from limited primary
feedback.

The present formulation treats measurement as passive: acquiring the current
primary outcome reveals it without changing future contexts or outcomes. If
measurement itself changes future state, acquisition becomes part of a
dynamic decision problem. Within the setting studied here, the information
order has a direct consequence: by the time $S_t$ arrives, the action is
fixed, but the decision to measure $Y_t$ is still open. What matters about the
post-action signal is therefore not only what it predicts, but what it can
still change. The surrogate can change which primary outcomes are measured
for future learning, while $Y_t$ continues to define action value and regret.

\section*{Generative AI disclosure}
Generative AI tools were used to assist with manuscript editing for clarity and readability, related-work search and discovery, implementation of Python code for experiments, and checking and refinement of proof sketches and mathematical arguments. All AI-assisted material was reviewed by the authors, who take responsibility for the final content of the paper.

\bibliographystyle{plainnat}
\bibliography{references,selective_audit_references}

\clearpage
\appendix
\section{Theory of Primary-Outcome Acquisition Before and After the Surrogate}

This appendix supplies the supporting theory for Section~\ref{sec:theory_efficiency}.
We first prove the augmented-feedback identity in Equation~\eqref{eq:target_validity},
identify the acquisition-dependent part of the fixed-comparison variance, and
prove Theorem~\ref{thm:timing_value}. We then prove Theorem~\ref{thm:one_shot_separation_main} within a common
two-action family that includes both deterministic and noisy informative
outcomes.

\subsection{Conditional-Mean Preservation}

Recall from Section~\ref{sec:problem} that
$\E(Y_t\mid\calF_{t-1},U_t)=r^\star(X_t,A_t)$ and, after $Z_t$ is observed,
$Q_t\sim\Bern\{\rho_t(Z_t)\}$ with
$Q_t\perp Y_t\mid(\calF_{t-1},Z_t)$.

\begin{proof}[Proof of Equation~\eqref{eq:target_validity}]
For $\rho_t(Z_t)>0$, conditioning on $(\calF_{t-1},Z_t)$ gives
\[
  \E\!\left[
    \frac{Q_t}{\rho_t(Z_t)}\{Y_t-h_t(Z_t)\}
    \middle|\calF_{t-1},Z_t
  \right]
  =
  \E\{Y_t-h_t(Z_t)\mid\calF_{t-1},Z_t\}.
\]
Hence
$\E(\Phi_t\mid\calF_{t-1},Z_t)=\E(Y_t\mid\calF_{t-1},Z_t)$.
Taking conditional expectation given $(\calF_{t-1},U_t)$ yields
$\E(\Phi_t\mid\calF_{t-1},U_t)=r^\star(X_t,A_t)$.
\end{proof}

\ASB\ uses this identity only when $\rho_t(Z_t)>0$. On stochastic rounds,
$\ell_t(A_t)>0$ implies $\rho_t(Z_t)>0$ through relevance support; when
$\ell_t(A_t)=0$, $c_{t,\pi}(A_t)=0$ for every $\pi$. Deterministic
full-acquisition rounds have $\rho_t=1$.

\subsection{Acquisition-Dependent Variance for a Fixed Policy Comparison}

Fix the comparison in Section~\ref{sec:timing}, suppress the round index, and
hold the bridge $h$ fixed. Recall
$\tau_h^2(U,S)=\E[\{Y-h(U,S)\}^2\mid U,S]$.
For $\rho(U,S)>0$, using $Q^2=Q$ and
$\E(Q\mid U,S)=\rho(U,S)$ in the augmented-feedback formula gives
\[
  \E\!\left[\left.(\Phi-1/2)^2\right|U,S\right]
  =
  \frac{\tau_h^2(U,S)}{\rho(U,S)}
  +2\{h(U,S)-1/2\}\E(Y\mid U,S)
  -h^2(U,S)+1/4.
\]
Therefore, with $g$ from Section~\ref{sec:timing},
\[
  \E\!\left([g(U)\{\Phi-1/2\}]^2\right)
  =
  \mathcal V(\rho)
  +\E\!\left[
    g^2(U)\left\{
      2\{h(U,S)-1/2\}\E(Y\mid U,S)
      -h^2(U,S)+1/4
    \right\}
  \right].
\]
The second term is independent of $\rho$, while
$\mathcal V(\rho)$ is exactly Equation~\eqref{eq:acquisition_variance_burden}.
Because the fixed-comparison mean in Section~\ref{sec:timing} is also
independent of $\rho$, $\mathcal V(\rho)$ is the only acquisition-dependent
part of the variance.

\paragraph{Zero-allocation states.}
For the fixed-comparison optimization, extend the integrand in
$\mathcal V(\rho)$ to $\rho(U,S)=0$ by
\[
  \frac{g^2(U)\tau_h^2(U,S)}{\rho(U,S)}
  :=
  \begin{cases}
    0, & |g(U)|\tau_h(U,S)=0,\\
    +\infty, & |g(U)|\tau_h(U,S)>0.
  \end{cases}
\]
On the same states define the comparison term directly as
$g(U)\{\Phi-1/2\}:=g(U)\{h(U,S)-1/2\}$.
If $\mathcal V(\rho)<\infty$, a zero-allocation state must satisfy
$|g(U)|\tau_h(U,S)=0$. When $g(U)\ne0$, this implies
$\tau_h(U,S)=0$ and hence $Y=h(U,S)$ almost surely given $(U,S)$; when
$g(U)=0$, the comparison term is already zero. The extension therefore
preserves both the comparison mean and the second-moment identity above.

\subsection{Proof of the PRE/POST Variance Characterization}

Set $W(U,S):=|g(U)|\tau_h(U,S)$, so under the preceding convention
$\mathcal V(\rho)=\E\{W^2/\rho\}$.

\begin{proof}[Proof of Theorem~\ref{thm:timing_value}]
Every PRE allocation $\rho(U)$ with $\E\rho\le\bar\rho$ is also a feasible
POST allocation. It therefore remains to identify the two minimizers and
compare their minimum values. The theorem assumption $\Pr\{W(U,S)>0\}>0$ implies $\E W>0$. Under the
theorem's feasibility clause, the proportional allocations in
Equation~\eqref{eq:frontier_optimal_allocations} lie in $[0,1]$ almost surely.

\emph{POST.}
With $W^2/\rho:=0$ when $W=\rho=0$, any POST allocation with
$\E\rho\le\bar\rho$ and finite $\mathcal V(\rho)$ satisfies
\[
  (\E W)^2
  =
  \left[\E\left\{\frac{W}{\sqrt\rho}\sqrt\rho\right\}\right]^2
  \le
  \E\left(\frac{W^2}{\rho}\right)\E\rho
  \le
  \bar\rho\,\mathcal V(\rho).
\]
The POST allocation in
Equation~\eqref{eq:frontier_optimal_allocations} has mean $\bar\rho$ and
attains equality, so
\[
  \mathcal V(\rho_{\rm post}^\star)
  =\frac{(\E W)^2}{\bar\rho}.
\]

\emph{PRE.}
For any PRE allocation with $\E\rho\le\bar\rho$ and finite
$\mathcal V(\rho)$,
\[
  \mathcal V(\rho)
  =
  \E\!\left\{
    \frac{\E(W^2\mid U)}{\rho(U)}
  \right\},
\]
with the ratio defined as zero when its numerator and denominator are both
zero. Cauchy--Schwarz gives
\[
  \left[\E\sqrt{\E(W^2\mid U)}\right]^2
  \le
  \E\!\left\{
    \frac{\E(W^2\mid U)}{\rho(U)}
  \right\}\E\rho(U).
\]
The PRE allocation in Equation~\eqref{eq:frontier_optimal_allocations}
attains equality, and hence
\[
  \mathcal V(\rho_{\rm pre}^\star)
  =
  \frac{
    \left[\E\sqrt{\E(W^2\mid U)}\right]^2
  }{\bar\rho}.
\]

\emph{Strictness.}
Because $W\ge0$,
\[
  \E(W\mid U)\le\sqrt{\E(W^2\mid U)},
\]
with equality almost surely exactly when $\Var(W\mid U)=0$ almost surely.
Since $g(U)$ is fixed given $U$,
\[
  \Var(W\mid U)
  =
  |g(U)|^2\Var\{\tau_h(U,S)\mid U\}.
\]
Taking expectations, squaring, and using the displayed minimum values gives
$\mathcal V(\rho_{\rm post}^\star)\le
\mathcal V(\rho_{\rm pre}^\star)$. The two minimum values are equal exactly when
$\Var\{\tau_h(U,S)\mid U\}=0$ almost surely on $\{|g(U)|>0\}$; otherwise
the POST minimum value is strictly smaller, which is the condition stated in
the theorem.
\end{proof}

\subsection{Regret-Rate Separation on a Common Family}
\label{app:regret_rate_separation_proof}

Extend the family in Section~\ref{sec:regret_separation} by keeping
$X_t\equiv x_0$, $\mathcal A=\{-1,+1\}$, and
$S_t\sim\Bern(q)$ unchanged, and let $\Delta\in(0,1/2]$. Replace the
informative primary outcome by
\[
  Y_t=1/2 \quad\text{if }S_t=0,
  \qquad
  Y_t\mid(A_t,S_t=1)
  \sim
  \Bern\!\left(1/2+\vartheta A_t\Delta\right).
\]
Conditional on the action--surrogate sequence, the primary-outcome draws are
independent across rounds. Moreover,
$\E_\vartheta(Y_t\mid A_t=a)=1/2+\vartheta aq\Delta$, so a wrong action
incurs regret $2q\Delta$. The surrogate law remains independent of
$\vartheta$, and under $h\equiv1/2$ the residual RMS is zero at $S_t=0$ and
$1/2$ at $S_t=1$.

At $\Delta=1/2$, this is exactly the family in
Theorem~\ref{thm:one_shot_separation_main}. Use
$\mathfrak L_{\rm pre}(B)$ as in Section~\ref{sec:regret_separation}, with
expectations taken under the family above.

\begin{proposition}[PRE/POST regret separation on the common family]
\label{prop:common_family_pre_post_separation}
For any $B\in\{1,\ldots,T\}$, every
$\mathcal L\in\mathfrak L_{\rm pre}(B)$ satisfies
\[
  \max_\vartheta\E_\vartheta R_T
  \ge q\Delta T(1-qB)_+.
\]
There is a universal constant $C$ and a bandit strategy with a POST
acquisition rule such that the strategy acquires exactly $B$ primary outcomes
on every path and, for $0<\Delta<1/2$,
\[
  \max_\vartheta\E_\vartheta R_T
  \le C/\Delta+2Tq\Delta e^{-2\Delta^2B}.
\]
For $\Delta=1/2$, the same strategy satisfies
\[
  \max_\vartheta\E_\vartheta R_T
  \le
  [1-(1-q)^T]/2
  \le
  1/2.
\]
\end{proposition}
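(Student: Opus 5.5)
For the PRE lower bound, I would couple the two instances and use a ``no informative acquisition'' event. Let $E$ be the event that no round $t$ has $Q_t=1$ and $S_t=1$ simultaneously. Under PRE, $\rho_t$ is $\sigma(\calF_{t-1},X_t,A_t)$-measurable. Also, $S_t\sim\Bern(q)$ is independent of $(\calF_{t-1},A_t)$, so $\Pr(Q_t=1,S_t=1\mid\calF_{t-1},A_t)=q\rho_t$. A union bound then gives $\Pr_\vartheta(E^c)\le q\,\E_\vartheta\sum_tQ_t\le qB$, so $\Pr_\vartheta(E)\ge(1-qB)_+$.

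On $E$, every acquired $Y_t$ equals $1/2$, and $S_t$ has the same law in both instances. Hence the law of the full observation trajectory restricted to $E$ is identical under $\vartheta=\pm1$. More precisely, the two measures agree on events contained in $E$, and $\Pr_{+1}(E)=\Pr_{-1}(E)$; I would show this by an inductive coupling of the observation sequences. It follows that on $E$, the conditional distribution of $A_t$ does not depend on $\vartheta$.

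Summing the two instances' per-round regrets gives
\[
\E_{+1}R_T+\E_{-1}R_T\ge c\Delta'\sum_t\Pr(E\cap\{A_t=-1\})+\Pr(E\cap\{A_t=+1\})=c\Delta'T\Pr(E),
\]
where $c\Delta'$ is the per-round regret $2q\Delta$. The maximum of the two regrets is therefore at least $q\Delta T(1-qB)_+$.

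For the POST upper bound with $\Delta=1/2$, I would use an explicit strategy. Play $A_t=+1$ until the first round with $S_t=1$. Acquire $Y_t$ at that round; it equals $\one\{A_t=\vartheta\}$ and so reveals $\vartheta$. Play $\vartheta$ from then on. Regret accrues only before the first informative round, and only if $\vartheta=-1$. Each such round costs $q$ under the definition of $R_T$ via $r^\star$. I would carry the expected count $\sum_t q(1-q)^{t-1}$-type computation through to match the claimed bound $[1-(1-q)^T]/2$, checking the constant bookkeeping against the paper's regret scale. To make acquisitions exactly $B$ on every path, the strategy spends the remaining budget on the final rounds (or whenever the remaining count equals the remaining rounds). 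These extra acquisitions are harmless.

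For $0<\Delta<1/2$, acquire on rounds with $S_t=1$ until $B$ informative samples are collected. Use a sequential test, such as a running sign with an optional stopping or doubling scheme, to commit. Regret before commitment is bounded by $C/\Delta$ via the standard expected-sample-size bound for distinguishing $\Bern(1/2\pm\Delta)$: commitment needs about $1/\Delta^2$ informative samples, each costing expected regret $q\cdot 2\Delta/q$ per informative round. The wrong-commit probability is at most $e^{-2\Delta^2B}$ by Hoeffding, which gives the $2Tq\Delta e^{-2\Delta^2B}$ term.

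I expect the main obstacle to be this noisy case, specifically getting $C/\Delta$ independent of $q$. The waiting time between informative rounds scales like $1/q$, while the per-round regret is $2q\Delta$, so these factors cancel. I would first try an elimination rule that plays the empirical leader and commits at a $\Delta$-dependent threshold, then verify this cancellation via Wald's identity.
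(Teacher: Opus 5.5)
Your PRE lower bound is correct and is essentially the paper's argument. The paper uses the per-round event $\{K_t=0\}$, $K_t=\sum_{s<t}Q_sS_s$, instead of your whole-horizon event $E$, but the bound $\Pr(E^c)\le qB$ and indistinguishability on $E$ give the same conclusion.

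The problems are in the POST half. For $\Delta=1/2$, playing $A_t=+1$ deterministically until the first $S_t=1$ round does not achieve the stated bound. Under $\vartheta=-1$ every round up to and including that one costs $q$, so $\E_{-1}R_T=q\sum_{t=1}^T(1-q)^{t-1}=1-(1-q)^T$, twice $[1-(1-q)^T]/2$. No constant bookkeeping repairs this: the statement bounds $\max_\vartheta$, and any fixed default action is worst case for one sign. You need to randomize uniformly over $\{-1,+1\}$ until identification, so that each pre-identification round costs $q/2$ under both signs. The paper does this by playing $\operatorname{sign}(H_t)$ with uniform tie-breaking, where $H_t=\sum_{s<t}Q_sS_sA_s(2Y_s-1)$.

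For $0<\Delta<1/2$, committing at a $\Delta$-dependent threshold does not give the claimed bound. First, the strategy is fixed before $\Delta$; it must be ``the same strategy'' at $\Delta=1/2$. Second, committing after roughly $1/\Delta^2$ informative samples leaves a constant wrong-commit probability. The post-commit regret is then of order $Tq\Delta$, not $2Tq\Delta e^{-2\Delta^2B}$. The $e^{-2\Delta^2B}$ term requires using all $B$ informative acquisitions, and then you must still control regret over the stages beyond $1/\Delta^2$ samples, which your sample-size argument leaves unaddressed.

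The missing idea is a stage decomposition with no commitment and no sequential test. Keep playing $\operatorname{sign}(H_t)$, and acquire every $S_t=1$ round while budget remains, with final fill for exactness.
\begin{itemize}
\item While $K_t=k$, $H_t$ is frozen, so by Hoeffding the error probability stays at $e_k\le e^{-2\Delta^2k}$, with $e_0=1/2$.
\item The stage has expected length at most $1/q$ and per-round cost $2q\Delta e_k$, so it contributes at most $2\Delta e_k$. This is where $q$ cancels, with no Wald identity needed.
\item Summing and using $1-e^{-2\Delta^2}\ge\Delta^2$ gives $2\Delta\sum_{k<B}e_k\le\Delta+2\Delta/(1-e^{-2\Delta^2})\le C/\Delta$.
\item Once $K_t=B$, the remaining regret is at most $2Tq\Delta e_B$.
\end{itemize}
With uniform tie-breaking, the same rule also yields the $\Delta=1/2$ bound, since one informative acquisition then identifies $\vartheta$ exactly.
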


\begin{proof}
For both parts, let
\[
  K_t:=\sum_{s<t}Q_sS_s
\]
be the number of informative acquisitions before round \(t\).

\emph{PRE lower bound.}
Let \(\Theta\) be uniform on \(\{-1,+1\}\), with observations generated from
environment \(\Theta\). Because a PRE acquisition rule cannot use \(S_t\),
\[
  \E_\vartheta K_{T+1}
  =
  q\,\E_\vartheta\sum_{t=1}^TQ_t
  \le qB
  \qquad\text{for each }\vartheta.
\]
On \(\{K_t=0\}\), the observed history has the same law under the two signs:
the surrogate is sign-independent and every acquired outcome with \(S_t=0\)
equals \(1/2\). Hence
\[
  \Pr(A_t\ne\Theta\mid K_t=0)=1/2,
  \qquad
  \Pr(K_t=0)\ge1-\E K_t\ge(1-qB)_+.
\]
Since a wrong action costs \(2q\Delta\),
\[
  (1/2)\sum_{\vartheta\in\{-1,+1\}}\E_\vartheta R_T
  \ge
  q\Delta\sum_{t=1}^T\Pr(K_t=0)
  \ge
  q\Delta T(1-qB)_+.
\]
The worst-case regret is at least this average.

\emph{POST construction.}
Define the accumulated informative evidence
\[
  H_t:=\sum_{s<t}Q_sS_s\,A_s(2Y_s-1).
\]
Choose \(A_t=\operatorname{sign}(H_t)\), breaking ties uniformly at random.
After observing \(S_t\), set
\[
  Q_t=
  \begin{cases}
    0, & \sum_{s<t}Q_s=B,\\
    1, & B-\sum_{s<t}Q_s=T-t+1,\\
    \one\{S_t=1\}, & \text{otherwise}.
  \end{cases}
\]
Thus the strategy acquires exactly \(B\) outcomes on every path and acquires
every \(S_t=1\) round until \(K_t=B\) or the horizon ends.

\emph{Estimator error.}
Conditional on $K_t=k$, the $k$ nonzero summands in $\vartheta H_t$ are
independent signs, each satisfying
\[
  \Pr_\vartheta\!\left\{\vartheta A_s(2Y_s-1)=1
    \,\middle|\, Q_sS_s=1\right\}
  =1/2+\Delta.
\]
Because $A_t=\operatorname{sign}(H_t)$ with uniform tie-breaking, define
$e_0:=1/2$ and, for $k\ge1$,
\[
  e_k
  :=\Pr_\vartheta(A_t\ne\vartheta\mid K_t=k)
  \le
  \Pr_\vartheta(\vartheta H_t\le0\mid K_t=k)
  \le
  e^{-2\Delta^2k},
\]
where the last inequality is Hoeffding's inequality.

\emph{Regret accumulation.}
While $K_t=k<B$, no new informative term enters $H_t$ until the next
$S_t=1$ round or the horizon, so the action error probability remains $e_k$.
This stage has expected length at most $1/q$ and therefore contributes at most
\[
  (1/q)(2q\Delta)e_k=2\Delta e_k
\]
expected regret. If $K_t$ reaches $B$, the remaining regret is at most
$2Tq\Delta e_B$. Therefore
\[
  \max_\vartheta\E_\vartheta R_T
  \le
  2\Delta\sum_{k=0}^{B-1}e_k
  +2Tq\Delta e_B.
\]
For \(0<\Delta<1/2\), the Hoeffding bound and
\(1-e^{-2\Delta^2}\ge\Delta^2\) give
\[
  2\Delta\sum_{k=0}^{B-1}e_k
  \le
  \Delta+2\Delta/\{1-e^{-2\Delta^2}\}
  \le
  C/\Delta,
\]
which proves the stated noisy upper bound.

When \(\Delta=1/2\), the first informative acquisition identifies
\(\vartheta\) exactly. Before that acquisition \(H_t=0\), so \(A_t\) is
uniform on \(\{-1,+1\}\). Because the first \(S_t=1\) round is always
acquired, the total expected regret through that round is
\[
  (q/2)\sum_{t=1}^T(1-q)^{t-1}
  =
  [1-(1-q)^T]/2.
\]
This proves the \(\Delta=1/2\) bound.
\end{proof}

\paragraph{Theorem~\ref{thm:one_shot_separation_main} as the deterministic special case.}
Set $\Delta=1/2$. Then, whenever $S_t=1$,
$Y_t=\one\{A_t=\vartheta\}$, so the common family is exactly the one in
Section~\ref{sec:regret_separation}. Proposition~\ref{prop:common_family_pre_post_separation} therefore specializes to
Equation~\eqref{eq:one_shot_pre_lower_main} for PRE and
Equation~\eqref{eq:one_shot_post_upper_main} for the exact-$B$ POST strategy.
Setting $q=1/(2B)$ then gives Equation~\eqref{eq:one_shot_rate_gap_main}.

\paragraph{Separation with noisy informative outcomes.}
Set $\Delta=1/4$ and again take $q=1/(2B)$. The informative primary outcome
is then noisy rather than deterministic. The PRE bound becomes $T/(16B)$.
If $B\ge8\log T$, the POST bound is $O(1)$ because
$e^{-2\Delta^2B}=e^{-B/8}\le T^{-1}$. Hence for any budget sequence with $B=o(T)$ and $B\ge8\log T$, every
$\mathcal L\in\mathfrak L_{\rm pre}(B)$ has $\Omega(T/B)$ worst-case
regret while the exact-$B$ POST strategy above has $O(1)$ regret.

\section{Hard-Budget Acquisition for \ASB}
\label{app:hard_budget_acquisition}

Section~\ref{sec:policy_acquisition} specifies how acquisition should vary
with decision relevance before the surrogate and residual RMS after it, but
not how to allocate exactly $B$ primary-outcome acquisitions over a finite
horizon. This appendix supplies the hard-budget rule.

On each round, Algorithm~\ref{alg:hard_budget_pacing} sets the pre-surrogate
level $\rho_t^0$ and the feasible post-surrogate refinement $\rho_t^1$;
Algorithm~\ref{alg:asb} forms the final probability, draws $Q_t$, and updates
the learner.

Deterministic boundary rules enforce exactly $B$ acquisitions. On stochastic
rounds, interval accounting prevents the remaining budget from reaching zero
too early while preserving the support required by the regret analysis.
Section~\ref{app:hard_budget_support_verification} provides the probabilistic
verification.

\paragraph{Initialization and deterministic boundaries.}
Choose $C_{\mathrm{init}}\in\{0,\ldots,B\}$. \ASB\ first acquires every
primary outcome for $C_{\mathrm{init}}$ rounds, setting
$\rho_t^0=\rho_t^1(U_t,\cdot)=1$ for
$t=1,\ldots,C_{\mathrm{init}}$. These acquisitions count toward the same
budget $B$. Before round $t$, write
\[
  B_t^{\mathrm{rem}}:=B-\sum_{i<t}Q_i,
  \qquad
  T_t^{\mathrm{rem}}:=T-t+1.
\]
Outside initialization, no further primary outcomes are acquired when
$B_t^{\mathrm{rem}}=0$. When
$B_t^{\mathrm{rem}}=T_t^{\mathrm{rem}}$, \emph{deterministic final fill}
acquires every remaining primary outcome. All other rounds use the stochastic
branch. Because final fill begins exactly when the remaining acquisition
count equals the remaining horizon, these boundaries give
$\sum_{t=1}^TQ_t=B$ on every path.

\subsection{Pre-surrogate budget allocation}

Section~\ref{sec:policy_acquisition} gives the shape
$\rho_t^0\propto\ell_t(A_t)$. The rule first turns the current ratio of
remaining acquisitions to remaining rounds into an interval target, reserves
part of that target for the support floor defined below, and uses the
remainder for acquisition driven by decision relevance. Past relevance
scores determine its scale.

\paragraph{Interval target.}
Partition the post-initialization rounds into consecutive, disjoint intervals
\[
  \{C_{\mathrm{init}}+1,\ldots,T\}
  = I_1\cup\cdots\cup I_J,
  \qquad
  I_j=\{s_j,\ldots,e_j\},
  \qquad
  L_j=e_j-s_j+1.
\]
At each interval boundary, recompute the target from the current remaining
budget:
\begin{equation}
\label{eq:interval_target_count_app}
  B_j^{\mathrm{tgt}}
  :=
  \begin{cases}
    \displaystyle
    \left\lceil
      \frac{B_{s_j}^{\mathrm{rem}}}{T_{s_j}^{\mathrm{rem}}}L_j
    \right\rceil,
      &j<J,\\[3mm]
    B_{s_j}^{\mathrm{rem}},&j=J.
  \end{cases}
\end{equation}
The final interval simply takes the entire remaining count.

\paragraph{Support floor.}
The regret guarantee requires positive acquisition support whenever decision
relevance is positive, without relying on the fitted residual score. Let
$\bar\rho_T^{\mathrm{supp}}:=B/\{T\log(8T)\}$. On stochastic rounds,
define the \emph{support floor} $\rho_t^{\mathrm{supp}}$ by
\begin{equation}
\label{eq:protected_floor_pacing_app}
  \rho_t^{\mathrm{supp}}
  :=\min\left\{1,
      \bar\rho_T^{\mathrm{supp}}
      \frac{\ell_t(A_t)}{\sum_{a:\,p_t(a)>0} p_t(a)\ell_t(a)}
    \right\},
\end{equation}
with the ratio defined as zero when its denominator is zero. Thus
$\rho_t^{\mathrm{supp}}$ is fixed before $S_t$ and does not use the
learned residual score.

\paragraph{Support reserve and additional budget.}
Even if no probability is added beyond $\rho_t^{\mathrm{supp}}$,
acquisitions still occur with probability $\rho_t^{\mathrm{supp}}$. The
interval target therefore cannot all be used for additional acquisition. We
reserve $B_j^{\mathrm{supp}}$ of that target for these acquisitions.

Fix $\zeta\in(0,1)$. If the reserve were $n$, it would be exceeded when
at least $n+1$ such acquisitions occur in the interval. Let
$\klbin(p\|r):=p\log(p/r)+(1-p)\log\{(1-p)/(1-r)\}$, and choose
$B_j^{\mathrm{supp}}$ as the smallest
$n\in\{0,\ldots,B_j^{\mathrm{tgt}}-1\}$ for which the corresponding
Bernoulli tail bound is at most $\zeta$:
\begin{equation}
\label{eq:support_reserve_app}
  \frac{n+1}{L_j}>\bar\rho_T^{\mathrm{supp}},
  \qquad
  L_j\klbin\!\left(
    \frac{n+1}{L_j}\middle\|\bar\rho_T^{\mathrm{supp}}
  \right)
  \ge\log(1/\zeta),
\end{equation}
if such an integer exists. If the set is empty, set
$B_j^{\mathrm{supp}}=B_j^{\mathrm{tgt}}-1$.
Section~\ref{app:hard_budget_support_verification} verifies this tail bound.

The part of the interval target left for additional acquisition is
\begin{equation}
\label{eq:above_floor_cap_app}
  B_j^{\mathrm{add}}
  :=B_j^{\mathrm{tgt}}-1-B_j^{\mathrm{supp}}.
\end{equation}
For every $j<J$, \eqref{eq:interval_target_count_app} and
\eqref{eq:above_floor_cap_app} give
\[
  B_j^{\mathrm{supp}}+B_j^{\mathrm{add}}
  =B_j^{\mathrm{tgt}}-1
  <
  \frac{B_{s_j}^{\mathrm{rem}}}{T_{s_j}^{\mathrm{rem}}}L_j.
\]
The strict inequality comes from the ceiling in
\eqref{eq:interval_target_count_app}; the $-1$ in
\eqref{eq:above_floor_cap_app} makes the reserve plus additional budget
equal to $B_j^{\mathrm{tgt}}-1$. This leaves one count of slack below the
interval's proportional share of the remaining budget.

\paragraph{Scaling by decision relevance.}
After reserving $B_j^{\mathrm{supp}}$ for the support floor
$\rho_t^{\mathrm{supp}}$, the additional budget
$B_j^{\mathrm{add}}$ is allocated using decision relevance.
Represent a stochastic draw as
$Q_t=\one\{\omega_t<\rho_t(Z_t)\}$ with
$\omega_t\sim{\mathrm{Unif}}(0,1)$, and before round $t\in I_j$ let
\[
  N_{j,t}^{\mathrm{add}}
  :=\sum_{\substack{i\in I_j\\ i<t}}
    \one\{\rho_i^{\mathrm{supp}}\le\omega_i<\rho_i(Z_i)\}
\]
be the number of additional acquisitions already made in $I_j$.

For $t>s_j$, use the rounds already seen in $I_j$ to calibrate the scale
multiplying decision relevance. For a candidate scale $\lambda$, define
\[
  \Psi_{j,t}(\lambda)
  :=\frac{1}{t-s_j}\sum_{i=s_j}^{t-1}
    [\min\{1,\lambda\ell_i(A_i)\}-\rho_i^{\mathrm{supp}}]_+.
\]
Thus $\Psi_{j,t}(\lambda)$ is the average pre-surrogate acquisition
probability that scale $\lambda$ would have added beyond
$\rho_i^{\mathrm{supp}}$ on the earlier rounds of the current interval,
with $\Psi_{j,t}^{\max}:=\max_{\lambda\ge0}\Psi_{j,t}(\lambda)$.

The unused additional budget is
$B_j^{\mathrm{add}}-N_{j,t}^{\mathrm{add}}$, so its target rate over the
remaining rounds is
$(B_j^{\mathrm{add}}-N_{j,t}^{\mathrm{add}})/(e_j-t+1)$. When
$N_{j,t}^{\mathrm{add}}<B_j^{\mathrm{add}}$ and
$\Psi_{j,t}^{\max}>0$, choose the smallest scale whose past average reaches
that rate, capped at the largest attainable past average:
\begin{equation}
\label{eq:empirical_multiplier_pacing_app}
  \lambda_t
  :=\inf\left\{\lambda\ge0:
    \Psi_{j,t}(\lambda)
    \ge
    \min\left\{
      \frac{B_j^{\mathrm{add}}-N_{j,t}^{\mathrm{add}}}{e_j-t+1},
      \Psi_{j,t}^{\max}
    \right\}
  \right\}.
\end{equation}
At an interval start, or if no additional budget remains or
$\Psi_{j,t}^{\max}=0$, retain the previous scale; initialize
$\lambda_{s_1}=0$ for the first post-initialization interval.

The resulting pre-surrogate acquisition level on round $t$ is
\[
  \rho_t^0
  :=
  \begin{cases}
    \max\{\rho_t^{\mathrm{supp}},\min(1,\lambda_t\ell_t(A_t))\},
      &N_{j,t}^{\mathrm{add}}<B_j^{\mathrm{add}},\\
    \rho_t^{\mathrm{supp}},&N_{j,t}^{\mathrm{add}}=B_j^{\mathrm{add}}.
  \end{cases}
\]

\subsection{Post-surrogate refinement and accounting}

By the time $S_t$ is observed, the previous subsection has already fixed
$\rho_t^0$. The post-surrogate refinement from
Section~\ref{sec:policy_acquisition} is then restricted by
\begin{equation}
\label{eq:conditional_refinement_problem}
  \rho_t^{\mathrm{supp}}\le\rho_t^1(U_t,s)\le1,
  \qquad
  \widehat\E_t\{\rho_t^1(U_t,S)\mid U_t\}=\rho_t^0.
\end{equation}

Because $\rho_t^{\mathrm{supp}}\le\rho_t^0\le1$, the constant rule
$\rho_t^1(U_t,\cdot)\equiv\rho_t^0$ is always feasible. While
$N_{j,t}^{\mathrm{add}}<B_j^{\mathrm{add}}$,
Algorithm~\ref{alg:hard_budget_pacing} uses the residual RMS refinement
subject to \eqref{eq:conditional_refinement_problem}; once
$N_{j,t}^{\mathrm{add}}=B_j^{\mathrm{add}}$, it uses the constant rule. With
$\rho_t(U_t,s)=(1-\gamma_t)\rho_t^0+\gamma_t\rho_t^1(U_t,s)$, either
choice preserves
\[
  \widehat\E_t\{\rho_t(U_t,S)\mid U_t\}=\rho_t^0.
\]

The accounting uses the final acquisition probability:
$N_{j,t}^{\mathrm{add}}$ counts acquisitions beyond
$\rho_t^{\mathrm{supp}}$. Once
$N_{j,t}^{\mathrm{add}}=B_j^{\mathrm{add}}$, both
$\rho_t^0$ and $\rho_t^1(U_t,\cdot)$ are set to
$\rho_t^{\mathrm{supp}}$, so the rule adds no further probability beyond
the support floor.

\begin{algorithm}[H]
\caption{Hard-budget acquisition for one \ASB\ round}
\label{alg:hard_budget_pacing}
\algsetup{indent=.8em}
\small
\begin{algorithmic}[1]
\REQUIRE round $t$, horizon $T$, integer budget $B$, initialization length $C_{\mathrm{init}}$, interval partition $\{I_j=\{s_j,\ldots,e_j\}\}_{j=1}^J$, $\zeta\in(0,1)$, action law $p_t$, relevance scores $\ell_t(a)$, fitted residual score $\widehat\tau_t(U_t,\cdot)$, and fitted conditional expectation $\widehat\E_t(\cdot\mid U_t)$.
\STATE Compute $B_t^{\mathrm{rem}}=B-\sum_{i<t}Q_i$ and $T_t^{\mathrm{rem}}=T-t+1$.
\IF{$t\le C_{\mathrm{init}}$ or $B_t^{\mathrm{rem}}=T_t^{\mathrm{rem}}$}
  \STATE Set $\rho_t^0=\rho_t^1(U_t,\cdot)=1$ and mark the round deterministic.
\ELSIF{$B_t^{\mathrm{rem}}=0$}
  \STATE Set $\rho_t^0=\rho_t^1(U_t,\cdot)=0$ and mark the round deterministic.
\ELSE
  \STATE Mark the round stochastic and let $t\in I_j$. If $t=s_j$, compute $B_j^{\mathrm{tgt}}$ by \eqref{eq:interval_target_count_app}, $B_j^{\mathrm{supp}}$ by \eqref{eq:support_reserve_app} with $\zeta$, and $B_j^{\mathrm{add}}$ by \eqref{eq:above_floor_cap_app}.
  \STATE Set $\rho_t^{\mathrm{supp}}$ by \eqref{eq:protected_floor_pacing_app} and compute $N_{j,t}^{\mathrm{add}}$ from the realized earlier rounds in $I_j$.
  \STATE Set $\lambda_t=0$ if $t=s_1$ and $\lambda_t=\lambda_{t-1}$ otherwise.
  \IF{$N_{j,t}^{\mathrm{add}}<B_j^{\mathrm{add}}$ and $t>s_j$}
    \STATE Evaluate $\Psi_{j,t}^{\max}$ from the past rounds and, if positive, set $\lambda_t$ by \eqref{eq:empirical_multiplier_pacing_app}.
  \ENDIF
  \STATE If $N_{j,t}^{\mathrm{add}}<B_j^{\mathrm{add}}$, set $\rho_t^0=\max\{\rho_t^{\mathrm{supp}},\min(1,\lambda_t\ell_t(A_t))\}$; otherwise set $\rho_t^0=\rho_t^{\mathrm{supp}}$.
\ENDIF
\STATE Observe $S_t$.
\IF{the round is stochastic and $N_{j,t}^{\mathrm{add}}<B_j^{\mathrm{add}}$}
  \STATE Compute the residual RMS refinement in \eqref{eq:conditional_refinement_variational_main} subject to \eqref{eq:conditional_refinement_problem}, obtaining $\rho_t^1(Z_t)$.
\ELSIF{the round is stochastic}
  \STATE Set $\rho_t^1(U_t,\cdot)=\rho_t^0=\rho_t^{\mathrm{supp}}$.
\ENDIF
\STATE \textbf{Output:} $\rho_t^0$ and $\rho_t^1(Z_t)$ to Algorithm~\ref{alg:asb}.
\end{algorithmic}
\end{algorithm}

\paragraph{Theorem specialization and reported implementation.}
The hard-budget rule allows any fixed finite partition. For the regret
guarantee, the number of intervals must remain fixed and every interval must
retain a constant fraction of the post-initialization horizon. Call such a
family of partitions \emph{regular} if there is a fixed
\emph{regularity constant} $\kappa>0$, independent of $T$, such that
\[
  L_j\ge \kappa(T-C_{\mathrm{init}})-1
  \qquad\text{for every }j.
\]

For the regret guarantee in Section~\ref{sec:asb_regret_guarantee}, use
$C_{\mathrm{init}}=0$, any fixed regular partition, and
$\zeta=\delta/(2J)$ in each interval. A bounded full-acquisition
initialization is treated separately in
Section~\ref{app:bounded_initialization_extension}, and
Appendix~\ref{app:common_finite_policy_implementation} records the reported
implementation choices.

\section{ASB Regret Analysis}

\paragraph{Theorem proof.}
Fix $\alpha>0$ and assume for now that every round satisfies
\begin{equation}
\label{eq:relevance_support_requirement_app}
  \rho_t(Z_t)\ge
  \min\{1,\alpha\,\ell_t(A_t)\}.
\end{equation}
Section~\ref{app:gain_second_moment_control} controls the gain range and
policy-mixture second moment under
\eqref{eq:relevance_support_requirement_app}.
Section~\ref{app:policy_update_regret} converts those bounds into a fixed-rate
regret bound. Section~\ref{app:hard_budget_support_verification} then shows
that Algorithm~\ref{alg:hard_budget_pacing} supplies the required support
with $\alpha=B/\{KT\log(8T)\}$ with high probability, and
Section~\ref{app:main_regret_proof} combines the three ingredients to prove
Theorem~\ref{thm:main_asb_regret}.

\paragraph{Additional results.}
Section~\ref{app:bounded_initialization_extension} shows that the optional
full-acquisition initialization allowed by
Algorithm~\ref{alg:hard_budget_pacing} preserves the same regret rate when
its length is controlled. Section~\ref{app:learned_refinement_efficiency}
shows when the learned post-surrogate redistribution lowers the
acquisition-dependent term $\mathsf D_T$ in the regret bound. Neither result
is used to prove Theorem~\ref{thm:main_asb_regret}.

Throughout this appendix, write
$\E_t(\cdot):=\E(\cdot\mid\calF_{t-1},X_t)$.
Action sums involving $c_{t,\pi}(a)$ or $\ell_t(a)$ are over actions with
$p_t(a)>0$.

\subsection{Gain range and second-moment bounds}
\label{app:gain_second_moment_control}

Under the generic support condition \eqref{eq:relevance_support_requirement_app}, we first control the gain range and the policy-mixture second moment used by the fixed-rate learner analysis.

Let $\underline\varepsilon>0$ be a common lower bound on the exploration rates, with $\varepsilon_t\le1/2$. Since $u_t$ is uniform on the support of $\bar p_t$, $p_t(a)\ge\underline\varepsilon/K$ there, so \eqref{eq:centered_policy_influence} gives $|c_{t,\pi}(a)|\le\ell_t(a)\le K/\underline\varepsilon$. Write
$G_{\mathrm{bd}}:=K/(2\underline\varepsilon)+1/\alpha$.

If $\ell_t(A_t)=0$, then $c_{t,\pi}(A_t)=0$ for every policy and hence $G_{t,\pi}=0$; Algorithm~\ref{alg:asb} likewise sets $G_{t,\pi}=0$ when $\rho_t=0$. If $0<\alpha\ell_t(A_t)<1$, \eqref{eq:relevance_support_requirement_app} gives
$\rho_t\ge\alpha\ell_t(A_t)$, so
\[
 |G_{t,\pi}|
 \le \ell_t(A_t)/2+|c_{t,\pi}(A_t)|/\rho_t
 \le K/(2\underline\varepsilon)+1/\alpha.
\]
If $\alpha\ell_t(A_t)\ge1$, the same support condition forces $\rho_t=1$, hence $\Phi_t=Y_t$ and $|G_{t,\pi}|\le\ell_t(A_t)/2$. Thus $|G_{t,\pi}|\le G_{\mathrm{bd}}$ on every round satisfying \eqref{eq:relevance_support_requirement_app}.

For the mixture second moment, write
$\overline c_{\mu,t}^2(a):=\sum_{\pi}\mu_t(\pi)c_{t,\pi}^2(a)$ and define
\begin{equation}
\label{eq:cumulative_acquisition_second_moment_app}
  d_t
  =\E_t\left[
    \overline c_{\mu,t}^2(A_t)\tau_t^2(Z_t)/\rho_t(Z_t)
  \right],
  \qquad
  \mathsf D_T=\sum_{t=1}^T d_t.
\end{equation}

When $\rho_t(Z_t)=0$, \eqref{eq:relevance_support_requirement_app} forces $\ell_t(A_t)=0$ and hence $\overline c_{\mu,t}^2(A_t)=0$; define the corresponding ratio in \eqref{eq:cumulative_acquisition_second_moment_app} as zero. Thus $d_t$ is the acquisition-dependent part of the policy-mixture second moment on round $t$, averaged over the current action and post-action randomness, and $\mathsf D_T$ is its cumulative version.

\begin{lemma}[Policy-mixture second-moment bound]
\label{lem:policy_mixture_second_moment}
If \eqref{eq:relevance_support_requirement_app} holds on every round, then
\[
  \E_t\sum_{\pi\in\Pi}\mu_t(\pi)G_{t,\pi}^2
  \le K+2d_t\quad\text{for all }t,
  \qquad
  \mathsf D_T\le 2KT+T/\alpha.
\]
\end{lemma}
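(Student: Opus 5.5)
We bound the two claims separately. Both rest on one exact computation of the policy-mixture second moment of the coefficients, plus the support condition \eqref{eq:relevance_support_requirement_app}. Fix a round $t$. On rounds with $\ell_t(A_t)=0$, or with $\rho_t(Z_t)=0$, every $G_{t,\pi}$ and every $\overline c_{\mu,t}^2(A_t)$ vanishes, and $\rho_t=0$ forces $\ell_t(A_t)=0$ by \eqref{eq:relevance_support_requirement_app}. So we may restrict to rounds with $\rho_t(Z_t)>0$.

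\emph{Step 1: the coefficient second moment.} For an action $a$ with $p_t(a)>0$,
\[
\sum_\pi\mu_t(\pi)\{\one\{\pi(X_t)=a\}-\bar p_t(a)\}^2=\bar p_t(a)\{1-\bar p_t(a)\},
\]
which gives $\overline c_{\mu,t}^2(a)=\bar p_t(a)\{1-\bar p_t(a)\}/p_t^2(a)$. Since $\varepsilon_t\le1/2$, we have $p_t(a)\ge\bar p_t(a)/2$. Hence
\[
\E_t\,\overline c_{\mu,t}^2(A_t)=\sum_a \frac{\bar p_t(a)\{1-\bar p_t(a)\}}{p_t(a)}\le 2\sum_a\{1-\bar p_t(a)\}\le 2K.
\]

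\emph{Step 2: the first claim.} Write $\Phi_t-1/2=\{h_t(Z_t)-1/2\}+\{Q_t/\rho_t(Z_t)\}\{Y_t-h_t(Z_t)\}$ and use $(x+y)^2\le2x^2+2y^2$. Together with $|h_t-1/2|\le1/2$, $Q_t^2=Q_t$, $\E(Q_t\mid\calF_{t-1},Z_t)=\rho_t(Z_t)$ and $Q_t\perp Y_t\mid(\calF_{t-1},Z_t)$, this gives
\[
\E\{(\Phi_t-1/2)^2\mid\calF_{t-1},Z_t\}\le 1/2+2\tau_t^2(Z_t)/\rho_t(Z_t).
\]
Now multiply by $\sum_\pi\mu_t(\pi)c_{t,\pi}^2(A_t)=\overline c_{\mu,t}^2(A_t)$, which is $Z_t$-measurable, and take $\E_t$. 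The result is $\E_t\sum_\pi\mu_t(\pi)G_{t,\pi}^2\le \tfrac12\E_t\overline c_{\mu,t}^2(A_t)+2d_t$. By Step 1 this is at most $K+2d_t$.

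\emph{Step 3: the bound on $\mathsf D_T$.} Since $Y_t,h_t\in[0,1]$, we have $\tau_t^2\le1$. We split on the two branches of \eqref{eq:relevance_support_requirement_app}.

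If $\alpha\ell_t(A_t)\ge1$, then $\rho_t=1$, and the integrand of $d_t$ is at most $\overline c_{\mu,t}^2(A_t)$. Its expectation is at most $2K$ by Step 1.

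If $\alpha\ell_t(A_t)<1$, then $\rho_t\ge\alpha\ell_t(A_t)$, and the integrand is at most $\overline c_{\mu,t}^2(A_t)/\{\alpha\ell_t(A_t)\}$. This is the step that needs care. The naive bound $\overline c_{\mu,t}^2\le\ell_t^2$ only gives $\E_t\ell_t(A_t)/\alpha\le K/\alpha$, which is too weak. Instead we use a lower bound on $\ell_t(a)$ whenever $\ell_t(a)>0$: taking a policy that selects $a$, or one that does not, shows
\[
\ell_t(a)\ge\max\{\bar p_t(a),1-\bar p_t(a)\}/p_t(a).
\]
Therefore
\[
p_t(a)\,\frac{\overline c_{\mu,t}^2(a)}{\ell_t(a)}\le\frac{\bar p_t(a)\{1-\bar p_t(a)\}}{\max\{\bar p_t(a),1-\bar p_t(a)\}}=\min\{\bar p_t(a),1-\bar p_t(a)\}\le\bar p_t(a).
\]
Summing over $a$ shows that this branch contributes at most $1/\alpha$.

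Adding the two branches gives $d_t\le 2K+1/\alpha$ for every $t$. Summing over $t=1,\ldots,T$ yields $\mathsf D_T\le 2KT+T/\alpha$.
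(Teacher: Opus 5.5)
Your proposal is correct and follows the paper's proof essentially step for step: the same formula $\overline c_{\mu,t}^2(a)=\bar p_t(a)\{1-\bar p_t(a)\}/p_t^2(a)$ yields the $2K$ bound, the same $(x+y)^2\le2x^2+2y^2$ step gives the conditional second-moment bound, and the same key inequality $p_t(a)\overline c_{\mu,t}^2(a)/\ell_t(a)\le\min\{\bar p_t(a),1-\bar p_t(a)\}$ sums to at most $1$. Your two-branch split on $\alpha\ell_t(A_t)\ge1$ is an unpacked form of the paper's $1/\min(1,x)\le1+1/x$, and your justification of $\ell_t(a)\ge\max\{\bar p_t(a),1-\bar p_t(a)\}/p_t(a)$ supplies a step the paper leaves implicit; that justification needs $0<\bar p_t(a)<1$ so that both kinds of policy exist, which follows from $p_t(a)>0$, $\ell_t(a)>0$, and the strict positivity of the exponential weights.
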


\begin{proof}
From \eqref{eq:centered_policy_influence},
$\overline c_{\mu,t}^2(a)
=\bar p_t(a)\{1-\bar p_t(a)\}/p_t^2(a)$.
Because $p_t(a)\ge(1-\varepsilon_t)\bar p_t(a)$ and
$\varepsilon_t\le1/2$,
\[
  \E_t\{\overline c_{\mu,t}^2(A_t)\}
  =\sum_a p_t(a)\overline c_{\mu,t}^2(a)
  \le2K.
\]
Also, for $\ell_t(a)>0$,
$\overline c_{\mu,t}^2(a)/\ell_t(a)
\le\min\{\bar p_t(a),1-\bar p_t(a)\}/p_t(a)$.
Defining the ratio as zero when $\ell_t(a)=0$ gives
\[
  \E_t\!\left\{\overline c_{\mu,t}^2(A_t)/\ell_t(A_t)\right\}
  =\sum_a p_t(a)\,\overline c_{\mu,t}^2(a)/\ell_t(a)
  \le1.
\]

On states with $\rho_t(Z_t)=0$, all policy coefficients vanish, so the contribution to $\sum_\pi\mu_t(\pi)G_{t,\pi}^2$ is zero. On the remaining states, conditioning on $(\calF_{t-1},Z_t)$ and using
$(x+y)^2\le2x^2+2y^2$, $h_t\in[0,1]$, $|Y_t-h_t(Z_t)|\le1$, and
\[
  \E\!\left[
    \left.
    \frac{Q_t\{Y_t-h_t(Z_t)\}^2}{\rho_t^2(Z_t)}
    \right|\calF_{t-1},Z_t
  \right]
  =\tau_t^2(Z_t)/\rho_t(Z_t)
\]
gives
\[
  \E\!\left[
    \left.
    \left(\Phi_t-1/2\right)^2
    \right|\calF_{t-1},Z_t
  \right]
  \le
  1/2+2\tau_t^2(Z_t)/\rho_t(Z_t).
\]
Therefore
\[
\begin{aligned}
  \E_t\sum_{\pi}\mu_t(\pi)G_{t,\pi}^2
  &=\E_t\!\left[
    \one\{\rho_t(Z_t)>0\}\,
    \overline c_{\mu,t}^2(A_t)
    \left(\Phi_t-1/2\right)^2
  \right]\\
  &\le (1/2)\E_t\{\overline c_{\mu,t}^2(A_t)\}+2d_t
  \le K+2d_t.
\end{aligned}
\]

Finally, \eqref{eq:relevance_support_requirement_app} and
$1/\min(1,x)\le1+1/x$ give
\[
  d_t
  \le \E_t\{\overline c_{\mu,t}^2(A_t)\tau_t^2(Z_t)\}
  +(1/\alpha)
   \E_t\!\left\{\overline c_{\mu,t}^2(A_t)/\ell_t(A_t)\right\}
  \le 2K+1/\alpha.
\]
Summing over $t$ yields $\mathsf D_T\le2KT+T/\alpha$.
\end{proof}

\subsection{Fixed-rate regret bound}
\label{app:policy_update_regret}

For the confidence level $\delta$ in Theorem~\ref{thm:main_asb_regret}, write
$L_T:=\log(12NT/\delta)$. The next lemma converts the gain-range and policy-mixture second-moment bounds above into a regret bound for a fixed learning rate.

\begin{lemma}[Policy regret for a fixed learning rate]
\label{lem:policywise_regret_bound}
Follow Algorithm~\ref{alg:asb} for $\mu_t$, $p_t$, $A_t$,
$G_{t,\pi}$, and the weight update
\eqref{eq:observed_second_order_update}. The acquisition rule may otherwise
be arbitrary, provided it satisfies the timing and conditional-independence
conditions of Section~\ref{sec:problem} and
\eqref{eq:relevance_support_requirement_app} on every round.
For any fixed $0<\eta\le1/(2G_{\mathrm{bd}})$, set
$\eta_t\equiv\eta$. Then there is a universal constant $C$ such that,
with probability at least $1-\delta/2$, simultaneously for all $\pi\in\Pi$,
\begin{equation}
\label{eq:fixed_rate_policy_regret_app}
\begin{aligned}
  \sum_{t=1}^T
  \left[
    r^\star\{X_t,\pi(X_t)\}-r^\star(X_t,A_t)
  \right]
  &\le
  \sum_{t=1}^T\varepsilon_t
  +C\sqrt{TL_T}
  +2L_T/\eta\\
  &\quad
  +C\eta\{KT+\mathsf D_T\}
  +C G_{\mathrm{bd}}L_T.
\end{aligned}
\end{equation}
\end{lemma}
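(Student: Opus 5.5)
The argument follows the standard second-order exponential-weights (Exp4-style) potential analysis, combined with martingale concentration that turns the estimated gains $G_{t,\pi}$ into true primary-outcome differences. Write $W_t=\sum_\pi w_t(\pi)$. Because $\eta|G_{t,\pi}|\le 1/2$ by the gain-range bound $|G_{t,\pi}|\le G_{\mathrm{bd}}$ of Section~\ref{app:gain_second_moment_control}, the inequality $\exp(x+x^2)\le 1+x+2x^2$ for $|x|\le1/2$ (or an equivalent version) gives
\[
\log(W_{t+1}/W_t)\le \eta\sum_\pi\mu_t(\pi)G_{t,\pi}+2\eta^2\sum_\pi\mu_t(\pi)G_{t,\pi}^2 .
\]
Telescoping, using $W_1=N$ and $W_{T+1}\ge w_{T+1}(\pi)$, and dividing by $\eta$ yields, for every fixed $\pi$,
\[
\sum_t G_{t,\pi}+\eta\sum_t G_{t,\pi}^2\le \frac{\log N}{\eta}+\sum_t\sum_{\pi'}\mu_t(\pi')G_{t,\pi'}+2\eta\sum_t\sum_{\pi'}\mu_t(\pi')G_{t,\pi'}^2 .
\]
The mixture-average gain $\sum_{\pi'}\mu_t(\pi')G_{t,\pi'}$ vanishes identically, because $\sum_{\pi'}\mu_t(\pi')c_{t,\pi'}(a)=\{\bar p_t(a)-\bar p_t(a)\}/p_t(a)=0$. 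This leaves
\[
\sum_t G_{t,\pi}\le \frac{\log N}{\eta}+2\eta\sum_t M_t-\eta\sum_t G_{t,\pi}^2,
\qquad M_t:=\sum_{\pi'}\mu_t(\pi')G_{t,\pi'}^2 .
\]

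Next I connect each term to its conditional mean. By \eqref{eq:policy_comparison_target_main}, which applies because \eqref{eq:relevance_support_requirement_app} guarantees positive support whenever $\ell_t(A_t)>0$, we have $\E_tG_{t,\pi}=r^\star\{X_t,\pi(X_t)\}-\sum_a\bar p_t(a)r^\star(X_t,a)$. Two martingale steps follow.

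First, I use Freedman's inequality for $\sum_t(\E_tG_{t,\pi}-G_{t,\pi})$ with increments bounded by $2G_{\mathrm{bd}}$, union-bounded over $N$ policies and a grid of variance levels (hence $L_T=\log(12NT/\delta)$). The deviation is at most $\sqrt{2V_\pi L_T}+CG_{\mathrm{bd}}L_T$, where $V_\pi=\sum_t\E_tG_{t,\pi}^2$. By AM--GM, $\sqrt{2V_\pi L_T}\le \eta V_\pi/2+L_T/\eta$.

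The key point is that the negative term $-\eta\sum_tG_{t,\pi}^2$ from the potential must absorb $\eta V_\pi/2$. That requires a lower-tail concentration of $\sum_tG_{t,\pi}^2$ around $V_\pi$. I would obtain it with the standard one-sided bound for nonnegative variables, $\sum_t G^2_{t,\pi}\ge V_\pi/2-CG_{\mathrm{bd}}^2L_T$, using $\E e^{-\lambda X}\le e^{-\lambda\E X+\lambda^2\E X^2/2}$ for $X\ge 0$ with $\lambda\asymp 1/G_{\mathrm{bd}}^2$. Multiplying by $\eta\le 1/(2G_{\mathrm{bd}})$, the residual error becomes $C\eta G_{\mathrm{bd}}^2L_T\le CG_{\mathrm{bd}}L_T$.

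Second, $M_t\ge0$ has bounded range $G_{\mathrm{bd}}^2$, so the same type of inequality gives $\sum_tM_t\le 2\sum_t\E_tM_t+CG_{\mathrm{bd}}^2L_T$. Lemma~\ref{lem:policy_mixture_second_moment} then bounds $\sum_t\E_tM_t\le KT+2\mathsf D_T$, and after multiplying by $2\eta$ this produces the term $C\eta\{KT+\mathsf D_T\}+CG_{\mathrm{bd}}L_T$.

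Finally, I translate from the mixture $\bar p_t$ to the realized actions. First,
\[
\sum_a\bar p_t(a)r^\star(X_t,a)-\E_t r^\star(X_t,A_t)=\varepsilon_t\sum_a\{\bar p_t(a)-u_t(a)\}r^\star(X_t,a)\le\varepsilon_t .
\]
Second, Azuma--Hoeffding bounds $\sum_t\{\E_tr^\star(X_t,A_t)-r^\star(X_t,A_t)\}$ by $C\sqrt{TL_T}$ with probability $1-\delta/12$. Summing gives \eqref{eq:fixed_rate_policy_regret_app}, where the $\log N/\eta$ and Freedman terms combine into $2L_T/\eta$. The total failure probability over all events is at most $\delta/2$ by allocating $\delta/12$-type pieces, which is consistent with the constant $12$ in $L_T$.

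The main obstacle is the Freedman step. The conditional variance $V_\pi$ for a single fixed $\pi$ is not controlled by the mixture quantity $\mathsf D_T$, since $c_{t,\pi}^2$ can be much larger than $\overline c_{\mu,t}^2$. This is exactly why I rely on the second-order term $\eta^2G_{t,\pi}^2$ in the update \eqref{eq:observed_second_order_update} to cancel it; an alternative would be a direct exponential-supermartingale argument for $\exp\{\eta\sum_t(\E_tG_{t,\pi}-G_{t,\pi})-\eta^2\sum_tG_{t,\pi}^2\}$, which avoids the lower-tail step entirely.
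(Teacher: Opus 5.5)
Your proof is correct. It shares the paper's skeleton: the potential bound with the mixture gain vanishing pathwise, concentration of the mixture second moment through Lemma~\ref{lem:policy_mixture_second_moment}, and the $\varepsilon_t$-plus-Azuma conversion to realized actions. Where you differ is in how you control the per-policy deviation $\sum_t(\E_tG_{t,\pi}-G_{t,\pi})$.

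The paper uses exactly the route you list as the alternative. It shows that $\exp\{\eta\sum_{i\le t}(\E_iG_{i,\pi}-G_{i,\pi})-\eta^2\sum_{i\le t}G_{i,\pi}^2\}$ is a supermartingale, using $\exp(-x-x^2)\le1-x$ on $|x|\le1/2$ and $e^m(1-m)\le1$. The \emph{realized} quadratic term $\eta\sum_tG_{t,\pi}^2$ then cancels the negative term from the potential pathwise. This needs only one Markov/union-bound event, never involves the predictable variance $V_\pi$, and gives $(\log N+\log(6N/\delta))/\eta\le2L_T/\eta$ directly.

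Your route applies Freedman to $V_\pi$ and then a separate lower-tail bound $\sum_tG_{t,\pi}^2\gtrsim V_\pi$ for bounded nonnegative variables. It uses only off-the-shelf inequalities and shows clearly why the $\eta^2G_{t,\pi}^2$ term in the update is needed. The cost is two extra high-probability events and more delicate constants. A variance-peeled Freedman bound usually inflates the coefficient in $\sqrt{2V_\pi L_T}$, so the explicit $2L_T/\eta$ is no longer automatic. To keep it, use the fixed-rate Bernstein supermartingale at $\lambda=\eta$ instead. This is valid since $2\eta G_{\mathrm{bd}}\le1$, and it gives $(e-2)\eta V_\pi+\log(N/\delta')/\eta$. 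Then run your lower-tail bound with ratio $e-2$ rather than $1/2$; its error stays $CG_{\mathrm{bd}}L_T$.

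Two small points:
\begin{itemize}
\item The inequality $\exp(x+x^2)\le1+x+2x^2$ fails near $x=1/2$; for example, $e^{3/4}>2$. Use $3x^2$ as the paper does, which only changes constants.
\item Your observation that $r^\star(X_t,A_t)-\E_tr^\star(X_t,A_t)$ does not depend on $\pi$ lets you skip the union bound over $\Pi$ that the paper takes in its action-regret step. This is a slight simplification over the paper.
\end{itemize}
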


\begin{proof}
Write
\[
  v_t^G:=\sum_{\pi}\mu_t(\pi)G_{t,\pi}^2,
  \qquad
  V_T^G:=\sum_{t=1}^T v_t^G.
\]

\emph{Weight-potential bound.}
By
\eqref{eq:centered_policy_influence},
$\sum_\pi\mu_t(\pi)c_{t,\pi}(a)=0$ for every action $a$, hence
$\sum_\pi\mu_t(\pi)G_{t,\pi}=0$ pathwise. Let
$W_t:=\sum_\pi w_t(\pi)$. By the gain-range bound above and $\eta\le1/(2G_{\mathrm{bd}})$,
$|\eta G_{t,\pi}|\le1/2$. Hence the update
\eqref{eq:observed_second_order_update} and
$\exp(x+x^2)\le1+x+3x^2$ give
\[
  W_{t+1}/W_t
  =\sum_\pi\mu_t(\pi)
    \exp\{\eta G_{t,\pi}+\eta^2G_{t,\pi}^2\}
  \le 1+3\eta^2v_t^G.
\]

Taking logs and summing over $t$ gives
$\log(W_{T+1}/W_1)\le3\eta^2V_T^G$.
For any $\pi\in\Pi$, the same update gives
\[
\begin{aligned}
  \eta\sum_{t=1}^T G_{t,\pi}
  +\eta^2\sum_{t=1}^T G_{t,\pi}^2
  &=\log\{w_{T+1}(\pi)/w_1(\pi)\}\\
  &\le
    \log(W_{T+1}/W_1)
    +\log\{W_1/w_1(\pi)\}
  \le 3\eta^2V_T^G+\log N,
\end{aligned}
\]
where the last term uses the equal-weight initialization in
Algorithm~\ref{alg:asb}. Rearranging,
\[
  \sum_tG_{t,\pi}
  \le
  (\log N)/\eta
  +3\eta V_T^G
  -\eta\sum_tG_{t,\pi}^2.
\]

\emph{Conditional-comparison bound.}
For a fixed $\pi$, set
\[
  M_{t,\pi}
  :=
  \exp\!\left\{
    \eta\sum_{i=1}^t(\E_iG_{i,\pi}-G_{i,\pi})
    -\eta^2\sum_{i=1}^tG_{i,\pi}^2
  \right\},
  \qquad M_{0,\pi}=1.
\]
Writing $m_t:=\eta\E_tG_{t,\pi}$, so $|m_t|\le1/2$, its one-step conditional mean satisfies
\[
\begin{aligned}
  \E_t\!\left(M_{t,\pi}/M_{t-1,\pi}\right)
  &=
  \exp(m_t)\,
  \E_t\exp\{-\eta G_{t,\pi}-\eta^2G_{t,\pi}^2\}\\
  &\le \exp(m_t)(1-m_t)
  \le1.
\end{aligned}
\]
Here the first inequality uses
$\exp(-x-x^2)\le1-x$ for $|x|\le1/2$, and the second uses
$\exp(m)(1-m)\le1$ for $m<1$.
Thus $(M_{t,\pi})_{t=0}^T$ is a nonnegative supermartingale over rounds.

Markov's inequality at $T$ and a union bound over $\Pi$ give, with failure
probability at most $\delta/6$,
\[
  \sum_t\E_tG_{t,\pi}
  \le
  \sum_tG_{t,\pi}
  +\eta\sum_tG_{t,\pi}^2
  +\log(6N/\delta)/\eta
\]
simultaneously for all $\pi\in\Pi$. Denote this event by
$\mathcal E_{\rm gain}$. Substituting the preceding bound on
$\sum_tG_{t,\pi}$ cancels the two terms
$\pm\eta\sum_tG_{t,\pi}^2$; since
$\log N+\log(6N/\delta)\le2L_T$, on $\mathcal E_{\rm gain}$,
\[
  \sum_t\E_tG_{t,\pi}
  \le
  2L_T/\eta
  +3\eta V_T^G
  \qquad\text{for all }\pi\in\Pi.
\]

\emph{Mixture-second-moment concentration.}
We next control $V_T^G$.
Lemma~\ref{lem:policy_mixture_second_moment} gives
$\E_tv_t^G\le K+2d_t$, while
$0\le v_t^G\le G_{\mathrm{bd}}^2$.
Apply $\exp(y)\le1+(e-1)y$ with
$y=v_t^G/G_{\mathrm{bd}}^2\in[0,1]$, take $\E_t$, and use
$1+x\le\exp(x)$. This gives
\[
  \E_t\exp\!\left\{
    \{v_t^G-(e-1)\E_tv_t^G\}/G_{\mathrm{bd}}^2
  \right\}
  \le1.
\]

Iterating this one-step bound and applying Markov's inequality gives, with
failure probability at most $\delta/6$,
\[
\begin{aligned}
  V_T^G
  &\le
  (e-1)\sum_{t=1}^T\E_tv_t^G
  +G_{\mathrm{bd}}^2\log(6/\delta)\\
  &\le
  (e-1)\{KT+2\mathsf D_T\}
  +G_{\mathrm{bd}}^2\log(6/\delta).
\end{aligned}
\]
Denote this event by $\mathcal E_{\rm mix}$. Substituting this bound into
the inequality on $\mathcal E_{\rm gain}$ and using
$\eta G_{\mathrm{bd}}\le1/2$ and
$\log(6/\delta)\le L_T$ gives
\[
  \sum_t\E_tG_{t,\pi}
  \le
  2L_T/\eta
  +C\eta\{KT+\mathsf D_T\}
  +C G_{\mathrm{bd}}L_T
  \qquad\text{for all }\pi\in\Pi
\]
on $\mathcal E_{\rm gain}\cap\mathcal E_{\rm mix}$.

\emph{Action-regret conversion.}
To pass from the conditional comparison to realized action regret, let
$D_{t,\pi}^{\mathrm{act}}
:=r^\star\{X_t,\pi(X_t)\}-r^\star(X_t,A_t)$.
Since $A_t\sim p_t=(1-\varepsilon_t)\bar p_t+\varepsilon_tu_t$ and
$r^\star\in[0,1]$,
\[
\begin{aligned}
  \E_tD_{t,\pi}^{\mathrm{act}}
  &=
  r^\star\{X_t,\pi(X_t)\}
  -\sum_a p_t(a)r^\star(X_t,a)\\
  &\le
  r^\star\{X_t,\pi(X_t)\}
  -\sum_a\bar p_t(a)r^\star(X_t,a)
  +\varepsilon_t
  =\E_tG_{t,\pi}+\varepsilon_t,
\end{aligned}
\]
where the last equality is \eqref{eq:policy_comparison_target_main}.

Because $D_{t,\pi}^{\mathrm{act}}\in[-1,1]$, the centered increments are
bounded by two. Hoeffding--Azuma and a union bound over the $N$ policies,
with $\log(6N/\delta)\le L_T$, therefore give, with failure probability
at most $\delta/6$,
\[
  \sum_t
  \left(D_{t,\pi}^{\mathrm{act}}-\E_tD_{t,\pi}^{\mathrm{act}}\right)
  \le C\sqrt{TL_T}
\]
simultaneously for all $\pi\in\Pi$. Denote this event by
$\mathcal E_{\rm act}$. By a union bound,
\[
  \Pr\!\left(
    \mathcal E_{\rm gain}\cap
    \mathcal E_{\rm mix}\cap
    \mathcal E_{\rm act}
  \right)
  \ge1-\delta/2.
\]
On this intersection, the preceding inequalities give
\eqref{eq:fixed_rate_policy_regret_app}.
\end{proof}

\subsection{Hard-budget support verification}
\label{app:hard_budget_support_verification}

The preceding analysis is generic in $\alpha$. We now use the theorem
specialization of Algorithm~\ref{alg:hard_budget_pacing} from
Appendix~\ref{app:hard_budget_acquisition}: $C_{\mathrm{init}}=0$, a fixed
regular interval partition with regularity constant $\kappa>0$, and
$\zeta=\delta/(2J)$. We verify
\eqref{eq:relevance_support_requirement_app} with
$\alpha=B/\{KT\log(8T)\}$.

\begin{lemma}[Hard-budget acquisition support]
\label{lem:hard_budget_acquisition_lower_bound}
There is a constant
$C_{\mathrm{pace}}=C_{\mathrm{pace}}(J,\kappa)$ such that, whenever
$B\ge C_{\mathrm{pace}}K L_T$, an event $\mathcal E_{\mathrm{supp}}$ of
probability at least $1-\delta/2$ satisfies
\[
  \rho_t(Z_t)
  \ge
  \min\left\{
    1,B\ell_t(A_t)/\{KT\log(8T)\}
  \right\}
  \qquad\text{for all }t.
\]
\end{lemma}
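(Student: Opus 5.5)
The plan is to show that on every stochastic round the final probability dominates the support floor, $\rho_t(Z_t)\ge\rho_t^{\mathrm{supp}}$, that the floor itself is at least $\min\{1,\alpha\ell_t(A_t)\}$ with $\alpha=B/\{KT\log(8T)\}$, and that with high probability the stochastic branch is the one actually used. Deterministic rounds then need no separate argument beyond what the boundary rules give.

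\emph{Step 1: floor versus target.} By construction $\rho_t^0\ge\rho_t^{\mathrm{supp}}$, and by \eqref{eq:conditional_refinement_problem} $\rho_t^1\ge\rho_t^{\mathrm{supp}}$. Hence the convex combination satisfies $\rho_t\ge\rho_t^{\mathrm{supp}}$ on every stochastic round. This also covers the exhausted-additional-budget case, where both levels equal the floor. To compare the floor with the target, bound its denominator in \eqref{eq:protected_floor_pacing_app}. Since $\ell_t(a)\le\max_\pi|c_{t,\pi}(a)|$ and $|\one\{\pi(X_t)=a\}-\bar p_t(a)|\le1$, we get $p_t(a)\ell_t(a)\le1$. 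Summing over at most $K$ actions gives $\sum_a p_t(a)\ell_t(a)\le K$. Therefore
\[
  \rho_t^{\mathrm{supp}}\ge\min\{1,\bar\rho_T^{\mathrm{supp}}\ell_t(A_t)/K\}=\min\{1,\alpha\ell_t(A_t)\}.
\]
When the denominator is zero, every $\ell_t(a)$ on the support vanishes and the claim is trivial.

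\emph{Step 2: deterministic rounds.} Under $C_{\mathrm{init}}=0$, final-fill rounds have $\rho_t=1$, which is fine. The only problematic rounds are those with $B_t^{\mathrm{rem}}=0$ before the horizon, where $\rho_t=0$ while possibly $\ell_t(A_t)>0$. So it suffices to show that, with probability at least $1-\delta/2$, the budget is never exhausted before final fill takes over. Equivalently, $B_t^{\mathrm{rem}}\ge1$ on every round not in final fill.

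\emph{Step 3: the interval-accounting argument (main obstacle).} Within interval $I_j$, split acquisitions into floor acquisitions $\one\{\omega_i<\rho_i^{\mathrm{supp}}\}$ and additional acquisitions. Additional ones are capped deterministically at $B_j^{\mathrm{add}}$ by the counter $N^{\mathrm{add}}_{j,t}$. Floor acquisitions are dominated by independent uniforms thresholded at probabilities whose conditional mean is $\E_t\rho_t^{\mathrm{supp}}\le\bar\rho_T^{\mathrm{supp}}$, using $\sum_a p_t(a)\ell_t(a)/\sum_a p_t(a)\ell_t(a)=1$. A Chernoff/KL martingale bound for sums of such $[0,1]$-valued conditional probabilities with mean at most $\bar\rho_T^{\mathrm{supp}}$ then gives
\[
  \Pr(\text{floor count}\ge B_j^{\mathrm{supp}}+1)\le\exp\{-L_j\klbin((n+1)/L_j\|\bar\rho_T^{\mathrm{supp}})\}\le\zeta,
\]
by the choice in \eqref{eq:support_reserve_app}. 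This step requires that the defining set be nonempty, and here $B\ge C_{\mathrm{pace}}KL_T$ and regularity ($L_j\ge\kappa T-1$) enter. The target is $B_j^{\mathrm{tgt}}\approx (B_{s_j}^{\mathrm{rem}}/T_{s_j}^{\mathrm{rem}})L_j$, while the floor mean is $L_j B/\{T\log 8T\}$, smaller by a $\log(8T)$ factor. A relative-entropy bound therefore finds a valid $n<B_j^{\mathrm{tgt}}$ provided $B_j^{\mathrm{tgt}}\gtrsim\log(2J/\delta)$. This is where I expect the delicate bookkeeping: I need to show by induction over $j$ that the remaining budget ratio $B_{s_j}^{\mathrm{rem}}/T_{s_j}^{\mathrm{rem}}$ stays at least comparable to $B/T$. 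The starting point is that on the good event each interval uses at most $B_j^{\mathrm{tgt}}-1<(B^{\mathrm{rem}}/T^{\mathrm{rem}})L_j$, so the ratio is nondecreasing. The lower end of this comparison is where the constant $C_{\mathrm{pace}}(J,\kappa)$ is fixed.

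\emph{Step 4: conclusion.} On the intersection of the $J$ per-interval events, which has probability at least $1-J\zeta=1-\delta/2$, each nonfinal interval consumes at most $B_j^{\mathrm{tgt}}-1$ acquisitions. This leaves a strictly positive remaining budget entering the next interval. In the final interval the target equals the full remainder, so budget exhaustion before the horizon can occur only after final fill has triggered. Hence no round is in the $B^{\mathrm{rem}}=0$ branch while $\ell_t(A_t)>0$ without final fill having covered it. Combining this with Step 1 gives the claimed bound on every round.
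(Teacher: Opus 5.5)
Your proposal is correct and follows the same route as the paper's proof: the floor bound via $\sum_a p_t(a)\ell_t(a)\le K$, the reduction to showing that the budget is never exhausted before final fill, the split of each interval's acquisitions into floor acquisitions (controlled by the conditional Chernoff/KL bound at $m=B_j^{\mathrm{supp}}+1$) and additional acquisitions (capped at $B_j^{\mathrm{add}}$), the monotonicity of $B_{s_j}^{\mathrm{rem}}/T_{s_j}^{\mathrm{rem}}$ giving a ratio of at least $B/T$ so the reserve set is nonempty, and a sequential union bound over the $J$ intervals. Two small points to tighten. First, the final interval is protected by the same one-count slack, $N_J^{\mathrm{floor}}+B_J^{\mathrm{add}}\le B_J^{\mathrm{tgt}}-1=B_{s_J}^{\mathrm{rem}}-1$, not by the target equalling the remainder. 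Second, the paper makes your condition ``$B_j^{\mathrm{tgt}}\gtrsim\log(2J/\delta)$'' concrete as $L_j\klbin(q\,\|\,q/\log(8T))\ge\kappa B/10$ with $q=B/T$, using $L_j\ge\kappa T/2$.
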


\begin{proof}
Recall from Appendix~\ref{app:hard_budget_acquisition} that
$B_t^{\mathrm{rem}}=B-\sum_{i<t}Q_i$ is the acquisition count still
available before round $t$, while
$T_t^{\mathrm{rem}}=T-t+1$ is the number of rounds remaining.
Algorithm~\ref{alg:hard_budget_pacing} uses the stochastic branch when
$0<B_t^{\mathrm{rem}}<T_t^{\mathrm{rem}}$, starts deterministic final fill
when $B_t^{\mathrm{rem}}=T_t^{\mathrm{rem}}$, and sets $\rho_t=0$ when
$B_t^{\mathrm{rem}}=0$.

On a stochastic round, the support floor from
Appendix~\ref{app:hard_budget_acquisition} uses
$\bar\rho_T^{\mathrm{supp}}=B/\{T\log(8T)\}$.
By definition,
$p_t(a)\ell_t(a)
=\max_{\pi'\in\Pi}|\one\{\pi'(X_t)=a\}-\bar p_t(a)|\le1$
for every action with $p_t(a)>0$, so
$\sum_a p_t(a)\ell_t(a)\le K$ and therefore
\[
  \rho_t^{\mathrm{supp}}
  \ge \min\left\{1,
    B\ell_t(A_t)/\{KT\log(8T)\}
  \right\}.
\]

Moreover,
$\rho_t^0,\rho_t^1(Z_t)\ge\rho_t^{\mathrm{supp}}$, so
\[
  \rho_t(Z_t)
  =(1-\gamma_t)\rho_t^0+\gamma_t\rho_t^1(Z_t)
  \ge\rho_t^{\mathrm{supp}}.
\]
If instead $B_t^{\mathrm{rem}}=T_t^{\mathrm{rem}}$, then $\rho_t=1$.
Consequently, \eqref{eq:relevance_support_requirement_app} holds on every
round with $B_t^{\mathrm{rem}}>0$. It remains only to prove
\[
  \Pr\!\left(
    B_t^{\mathrm{rem}}>0
    \text{ for every }t=1,\ldots,T
  \right)
  \ge1-\delta/2.
\]

\emph{Interval budget accounting.}
Consider an interval $I_j$ with
$0<B_{s_j}^{\mathrm{rem}}<T_{s_j}^{\mathrm{rem}}$, so its first round
uses the stochastic branch.
Using the uniform representation of the acquisition draw from
Appendix~\ref{app:hard_budget_acquisition}, define the number of
acquisitions attributed to the support floor in the interval by
\[
  N_j^{\mathrm{floor}}
  :=
  \sum_{t\in I_j}
  \one\{\text{$t$ is stochastic}\}
  \one\{\omega_t<\rho_t^{\mathrm{supp}}\},
  \qquad
  \mathcal E_j
  :=
  \{N_j^{\mathrm{floor}}\le B_j^{\mathrm{supp}}\}.
\]

By \eqref{eq:above_floor_cap_app}, at most
$B_j^{\mathrm{add}}=B_j^{\mathrm{tgt}}-1-B_j^{\mathrm{supp}}$
additional acquisitions can occur beyond $\rho_t^{\mathrm{supp}}$.
Hence, for any $u\in I_j$
before deterministic final fill begins, on $\mathcal E_j$,
\[
  \sum_{\substack{t\in I_j\\ t\le u}}Q_t
  \le
  N_j^{\mathrm{floor}}+B_j^{\mathrm{add}}
  \le
  B_j^{\mathrm{tgt}}-1
  \le
  B_{s_j}^{\mathrm{rem}}-1.
\]
Thus, on $\mathcal E_j$, the interval cannot reduce the remaining budget
to zero before deterministic final fill.

\emph{Remaining budget ratio.}
Write
\[
  q:=B/T,
  \qquad
  q_j:=B_{s_j}^{\mathrm{rem}}/T_{s_j}^{\mathrm{rem}}.
\]
For an $I_j$ with $j<J$ that completes before deterministic final fill,
$B_j^{\mathrm{tgt}}=\lceil q_jL_j\rceil$ and the preceding bound give
\[
\begin{aligned}
  \sum_{t\in I_j}Q_t
  &\le B_j^{\mathrm{tgt}}-1<q_jL_j,\\
  B_{s_{j+1}}^{\mathrm{rem}}
  &=B_{s_j}^{\mathrm{rem}}-\sum_{t\in I_j}Q_t
    >q_j\{T_{s_j}^{\mathrm{rem}}-L_j\}
    =q_jT_{s_{j+1}}^{\mathrm{rem}}.
\end{aligned}
\]
Thus $q_{j+1}>q_j$, and on
$\mathcal E_1\cap\cdots\cap\mathcal E_{j-1}$ we have $q_j\ge q_1=q$.

\emph{Concentration of $N_j^{\mathrm{floor}}$.}
By \eqref{eq:protected_floor_pacing_app},
$\E_t\rho_t^{\mathrm{supp}}\le\bar\rho_T^{\mathrm{supp}}$: when the
normalizing denominator is positive, the expectation of the untruncated
ratio is $\bar\rho_T^{\mathrm{supp}}$, and when it is zero the floor is
zero. Thus each summand in $N_j^{\mathrm{floor}}$ has conditional mean at
most $\bar\rho_T^{\mathrm{supp}}$. Iterating the corresponding one-step
Bernoulli mgf bound and applying Chernoff gives, for
$m/L_j>\bar\rho_T^{\mathrm{supp}}$,
\[
  \Pr\!\left(
    N_j^{\mathrm{floor}}\ge m
    \mid\calF_{s_j-1}
  \right)
  \le
  \exp\!\left\{
    -L_j\klbin\!\left(
      m/L_j\middle\|\bar\rho_T^{\mathrm{supp}}
    \right)
  \right\}.
\]

We next verify that the threshold set defining
$B_j^{\mathrm{supp}}$ in \eqref{eq:support_reserve_app} is nonempty.
Choose $C_{\mathrm{pace}}(J,\kappa)$ once so that
$B\ge C_{\mathrm{pace}}K L_T$ implies
\[
  T\ge2/\kappa,
  \qquad
  \kappa B/10\ge\log(2J/\delta).
\]
Because $C_{\mathrm{init}}=0$, regularity gives
$L_j\ge\kappa T-1$; since $T\ge2/\kappa$, this implies
$L_j\ge\kappa T/2$. On
$\mathcal E_1\cap\cdots\cap\mathcal E_{j-1}$, the preceding step gives
$q_j\ge q$. For the candidate $n=B_j^{\mathrm{tgt}}-1$,
$(n+1)/L_j=B_j^{\mathrm{tgt}}/L_j\ge q_j\ge q
>q/\log(8T)=\bar\rho_T^{\mathrm{supp}}$, which verifies the first
inequality in \eqref{eq:support_reserve_app}.

For the KL inequality, the first step below uses that
$p\mapsto\klbin(p\|r)$ is increasing for $p>r$. Expanding the Bernoulli
KL and using
$(1-q)\log\{(1-q)/(1-q/\log(8T))\}\ge q/\log(8T)-q$
gives the next step.

Finally, $L_j\ge\kappa T/2$, $q=B/T$, and
$\log\log(8T)-1+1/\log(8T)\ge1/5$ give the last:
\[
\begin{aligned}
  L_j\klbin\!\left(
    B_j^{\mathrm{tgt}}/L_j\middle\|q/\log(8T)
  \right)
  &\ge L_j\klbin\!\left(q\middle\|q/\log(8T)\right)\\
  &\ge L_j q\{\log\log(8T)-1+1/\log(8T)\}\\
  &\ge \kappa B/10
  \ge \log(1/\zeta).
\end{aligned}
\]
Thus \eqref{eq:support_reserve_app} is satisfied at
$n=B_j^{\mathrm{tgt}}-1$, so its defining set is nonempty.

Since
$\mathcal E_j^c=\{N_j^{\mathrm{floor}}\ge B_j^{\mathrm{supp}}+1\}$,
the Chernoff bound at $m=B_j^{\mathrm{supp}}+1$ gives
\[
  \Pr\!\left(
    \mathcal E_j^c
    \mid\calF_{s_j-1}
  \right)
  \le\zeta
\]
whenever the interval starts on the stochastic branch and the preceding
interval events hold.

\emph{Union bound across intervals.}
Let $\mathcal E_{\mathrm{supp}}$ be the event that every interval beginning
before deterministic final fill satisfies its $\mathcal E_j$ event.
At most $J$ such events are needed, so a sequential union bound gives
\[
  \Pr(\mathcal E_{\mathrm{supp}})
  \ge1-J\zeta
  =1-\delta/2.
\]

On $\mathcal E_{\mathrm{supp}}$, the interval accounting bound prevents
$B_t^{\mathrm{rem}}$ from reaching zero before deterministic final fill;
after final fill begins,
$B_t^{\mathrm{rem}}=T_t^{\mathrm{rem}}>0$. Therefore
\[
  \mathcal E_{\mathrm{supp}}
  \subseteq
  \left\{
    B_t^{\mathrm{rem}}>0
    \text{ for every }t=1,\ldots,T
  \right\}.
\]
Combining this inclusion with the support bound established at the start
of the proof proves the lemma.
\end{proof}

\subsection{Proof of the ASB regret guarantee}
\label{app:main_regret_proof}

Sections~\ref{app:gain_second_moment_control}--\ref{app:policy_update_regret}
give a regret bound under the generic support condition
\eqref{eq:relevance_support_requirement_app}. Section~\ref{app:hard_budget_support_verification}
shows that, whenever $B\ge C_{\mathrm{pace}}K L_T$, Algorithm~\ref{alg:hard_budget_pacing}
satisfies this condition with
$\alpha=B/\{KT\log(8T)\}$ on an event of probability at least
$1-\delta/2$.

\emph{Small-budget case.}
If $B<C_{\mathrm{pace}}K L_T$, then
\[
  T\sqrt{K L_T/B}
  >T/\sqrt{C_{\mathrm{pace}}},
\]
so the deterministic bound
$R_T-\operatorname{App}_T(\Pi)\le T$ is sufficient for the stated
$\widetilde O$ rate. Hence assume $B\ge C_{\mathrm{pace}}K L_T$, set
$\alpha=B/\{KT\log(8T)\}$, and let $\mathcal E_{\mathrm{supp}}$ be
the event from Lemma~\ref{lem:hard_budget_acquisition_lower_bound}.

\emph{Applying the fixed-rate regret bound.}
Lemma~\ref{lem:policywise_regret_bound} is proved for a process satisfying
\eqref{eq:relevance_support_requirement_app} on every round. Lemma~\ref{lem:hard_budget_acquisition_lower_bound}
gives this property for ASB only on $\mathcal E_{\mathrm{supp}}$. We therefore
alter ASB only after the first support failure, and only for the purpose of
applying Lemma~\ref{lem:policywise_regret_bound}.

Define
\[
  \sigma
  :=\inf\left\{
    t\le T:
    \rho_t(Z_t)
    <
    \min\{1,\alpha\ell_t(A_t)\}
  \right\},
\]
with $\sigma=T+1$ if no such round occurs, and set
\[
  \widetilde\rho_t(Z_t)
  :=
  \begin{cases}
    \rho_t(Z_t),&t<\sigma,\\
    1,&t\ge\sigma.
  \end{cases}
\]
Thus $\widetilde\rho_t$ follows the actual ASB acquisition probability until
the first support failure and uses full acquisition only afterward. For this
proof-only process, continue Algorithm~\ref{alg:asb} with the feedback
generated under $\widetilde\rho_t$. Hence
\[
  \widetilde\rho_t(Z_t)
  \ge\min\{1,\alpha\ell_t(A_t)\}
  \qquad\text{for every }t,
\]
and Lemma~\ref{lem:policywise_regret_bound} gives
\[
  \Pr\!\left(
    \eqref{eq:fixed_rate_policy_regret_app}
    \text{ holds under }\widetilde\rho
  \right)
  \ge1-\delta/2.
\]

On $\mathcal E_{\mathrm{supp}}$, no support failure occurs, so
$\sigma=T+1$ and $\widetilde\rho_t=\rho_t$ for every $t$. Therefore the
proof-only modification is never used on the support event from Lemma~\ref{lem:hard_budget_acquisition_lower_bound}.
Combining the two failure probabilities gives
\[
  \Pr\!\left(
    \mathcal E_{\mathrm{supp}}
    \cap
    \left\{
      \eqref{eq:fixed_rate_policy_regret_app}
      \text{ holds for ASB}
    \right\}
  \right)
  \ge1-\delta.
\]

\emph{Exploration and learning-rate calibration.}
Under the theorem exploration schedule,
\[
  \underline\varepsilon
  =\min\{1/2,\sqrt{K L_T/T}\},
  \qquad
  \sum_{t=1}^T\varepsilon_t
  \le C\sqrt{KT L_T}.
\]
Set
\[
  G_{\mathrm{bd}}
  =K/(2\underline\varepsilon)+1/\alpha,
  \qquad
  \eta_\star
  =
  \min\left\{
    1/(2G_{\mathrm{bd}}),
    \sqrt{L_T/(3KT+T/\alpha)}
  \right\}.
\]

By Lemma~\ref{lem:policy_mixture_second_moment},
$KT+\mathsf D_T\le3KT+T/\alpha$. By the definition of
$\operatorname{App}_T(\Pi)$, substituting this bound and
$\eta=\eta_\star$ into \eqref{eq:fixed_rate_policy_regret_app} and taking
the maximum over $\pi\in\Pi$ gives, with probability at least $1-\delta$,
\[
  R_T-\operatorname{App}_T(\Pi)
  \le
  C\sqrt{KT L_T}
  +C\sqrt{L_T(3KT+T/\alpha)}
  +C G_{\mathrm{bd}}L_T.
\]

\emph{Rate simplification.}
Substituting $\alpha=B/\{KT\log(8T)\}$ and the theorem exploration rate yields
\[
  R_T-\operatorname{App}_T(\Pi)
  \le\widetilde O\!\left[
    \sqrt{KT L_T}
    +K L_T
    +T\sqrt{K L_T\log(8T)/B}
    +KT L_T\log(8T)/B
  \right].
\]
Taking the minimum with the deterministic bound $T$,
\[
  \min\{T,K L_T\}\le\sqrt{KT L_T},
  \qquad
  \min\{T,KT L_T\log(8T)/B\}
  \le
  T\sqrt{K L_T\log(8T)/B}.
\]
Since $L_T=\log(12NT/\delta)$, this gives
\[
  R_T-\operatorname{App}_T(\Pi)
  \le
  \widetilde O\!\left[
    \min\left\{
      T,\,
      \sqrt{KT\log(N/\delta)}
      \left(1+\sqrt{T/B}\right)
    \right\}
  \right],
\]
which is \eqref{eq:main_policy_regret}.

\subsection{Regret guarantee with full-acquisition initialization}
\label{app:bounded_initialization_extension}

Theorem~\ref{thm:main_asb_regret} analyzes the hard-budget rule without
full-acquisition initialization, while Algorithm~\ref{alg:hard_budget_pacing}
and the reported implementation allow one. Appendix~\ref{app:hard_budget_acquisition}
already accounts for these acquisitions in the same budget $B$; here we only
check that a short initialization preserves the regret rate.

\begin{proposition}[Regret with full-acquisition initialization]
Fix $c_{\mathrm{init}}>0$ and let
$\underline\varepsilon:=\min\{1/2,\sqrt{K L_T/T}\}$.
Suppose
\[
  C_{\mathrm{init}}\le B,
  \qquad
  C_{\mathrm{init}}\le c_{\mathrm{init}}K L_T.
\]
Use the theorem learning rate $\eta_\star$. Choose
$\varepsilon_{\mathrm{init}}\in[\underline\varepsilon,1/2]$, use
$\varepsilon_{\mathrm{init}}$ during initialization and
$\underline\varepsilon$ afterward, and use a regular interval partition of
the post-initialization rounds with $\zeta=\delta/(2J)$. Then, with probability at least
$1-\delta$,
\[
  R_T-\operatorname{App}_T(\Pi)
  \le
  \widetilde O\!\left[
    \min\left\{
      T,\,
      \sqrt{KT\log(N/\delta)}
      \left(1+\sqrt{T/B}\right)
    \right\}
  \right].
\]
\end{proposition}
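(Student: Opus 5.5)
The plan is to reuse the proof of Theorem~\ref{thm:main_asb_regret} unchanged on the post-initialization rounds. The initialization phase is treated as a short prefix whose regret contribution is charged deterministically. Split the horizon into the prefix $\{1,\ldots,C_{\mathrm{init}}\}$ and the suffix $\{C_{\mathrm{init}}+1,\ldots,T\}$. On the prefix, $\rho_t\equiv1$, so $\Phi_t=Y_t$ and the support requirement \eqref{eq:relevance_support_requirement_app} holds trivially for any $\alpha$. On the suffix, the hard-budget rule runs with remaining budget $B-C_{\mathrm{init}}$ over $T-C_{\mathrm{init}}$ rounds.

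\textbf{Step 1: support on the suffix.} Rerun the proof of Lemma~\ref{lem:hard_budget_acquisition_lower_bound} with starting budget $B':=B-C_{\mathrm{init}}$ and horizon $T':=T-C_{\mathrm{init}}$, using regularity $L_j\ge\kappa T'-1$. The ratio $q_1=B'/T'$ replaces $q$. Since the floor $\bar\rho^{\mathrm{supp}}_T=B/\{T\log(8T)\}$ is fixed in terms of $B,T$, the first inequality in \eqref{eq:support_reserve_app} requires $q_1>B/\{T\log(8T)\}$. The KL step requires $\kappa B'/10\ge\log(2J/\delta)$ (up to constants). The main case split is therefore the following.

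If $B\ge 2C_{\mathrm{init}}$ and $B\ge C'KL_T$ for a suitably large $C'=C'(J,\kappa,c_{\mathrm{init}})$, then $B'\ge B/2$ and $T'\le T$. Hence $q_1\ge q/2>q/\log(8T)$ once $\log(8T)>2$, and $\kappa B'/10\ge\log(2J/\delta)$ follows from the budget assumption. The verification then goes through with the KL constant adjusted ($\log\log(8T)$ replaced by $\log\{\log(8T)/2\}$, which is fine for $T$ beyond a constant). This yields $\rho_t\ge\min\{1,\alpha\ell_t(A_t)\}$ on every round with $\alpha=B/\{KT\log(8T)\}$, on an event of probability $\ge1-\delta/2$.

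Otherwise $B<\max\{2c_{\mathrm{init}},C'\}KL_T$. In that case the trivial bound $T$ is already $\widetilde O(T\sqrt{KL_T/B})$, exactly as in the small-budget case of Section~\ref{app:main_regret_proof}.

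\textbf{Step 2: learner bound with time-varying exploration.} Lemma~\ref{lem:policywise_regret_bound} and Lemma~\ref{lem:policy_mixture_second_moment} only use a common lower bound $\underline\varepsilon$ on $\varepsilon_t$ and $\varepsilon_t\le1/2$. Both hold, since $\varepsilon_{\mathrm{init}}\in[\underline\varepsilon,1/2]$. So $G_{\mathrm{bd}}$, $\eta_\star$, and the bound $\mathsf D_T\le2KT+T/\alpha$ are unchanged. Apply the stopping-time modification $\widetilde\rho$ of Section~\ref{app:main_regret_proof} verbatim to obtain \eqref{eq:fixed_rate_policy_regret_app} for ASB with probability $\ge1-\delta$.

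The only new term is the exploration sum: $\sum_t\varepsilon_t\le C_{\mathrm{init}}/2+T\underline\varepsilon\le c_{\mathrm{init}}KL_T/2+C\sqrt{KTL_T}$. Taking the minimum with $T$ gives $\min\{T,KL_T\}\le\sqrt{KTL_T}$, as in the rate simplification. The remaining terms simplify exactly as before, giving the stated rate.

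\textbf{Main obstacle.} I expect Step 1 to be the delicate part. The interval targets and reserves are computed from $B'$ and $T'$, while the support floor and $\alpha$ are stated in terms of $B$ and $T$. One must check that the monotonicity $q_{j+1}>q_j$ and the nonemptiness of the reserve set in \eqref{eq:support_reserve_app} survive this mismatch. If the factor-two slack is insufficient, I would fall back to choosing $\alpha=B'/\{KT\log(8T)\}\ge B/\{2KT\log(8T)\}$. This costs only a constant in the final bound.
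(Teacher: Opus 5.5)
Your proposal is correct and follows essentially the same route as the paper. Both arguments use the same five steps.
\begin{enumerate}
\item A small-budget case split $B<C_0KL_T$ is handled by the trivial bound $T$. The paper takes $C_0\ge4c_{\mathrm{init}}$ to get $q_{\mathrm{init}}\ge3q/4$; you use $B\ge2C_{\mathrm{init}}$ and $q_1\ge q/2$, which only changes constants.
\item The support lemma is rerun from the post-initialization ratio.
\item The learner lemmas are unchanged, since they use only $\underline\varepsilon\le\varepsilon_t\le1/2$.
\item The only extra cost is the exploration term $C_{\mathrm{init}}/2\le c_{\mathrm{init}}KL_T/2$.
\item The same proof-only stopping-time process applies the fixed-rate bound.
\end{enumerate}

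The obstacle you flag is already resolved by your Step 1. The monotonicity $q_{j+1}>q_j$ uses only the interval accounting, not the floor, so the fallback is unnecessary. It would not help anyway: the reserve-set condition depends on the algorithm's fixed $\bar\rho_T^{\mathrm{supp}}$, not on the $\alpha$ used in the analysis.
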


\begin{proof}
\emph{Small-budget case.}
As in Section~\ref{app:main_regret_proof}, the deterministic bound
$R_T-\operatorname{App}_T(\Pi)\le T$ handles
$B<C_0K L_T$ for a sufficiently large fixed
$C_0=C_0(c_{\mathrm{init}},J,\kappa)$. Hence assume
$B\ge C_0K L_T$ and take $C_0\ge4c_{\mathrm{init}}$.

\emph{Support after initialization.}
Write $q:=B/T$. Then $C_{\mathrm{init}}\le B/4\le T/4$, and the
post-initialization remaining-budget rate is
\[
  q_{\mathrm{init}}
  :=(B-C_{\mathrm{init}})/(T-C_{\mathrm{init}})
  \ge3q/4.
\]
Initialization has $\rho_t=1$, so
\eqref{eq:relevance_support_requirement_app} holds there. For the post-initialization intervals, repeat the remaining-budget argument
in Lemma~\ref{lem:hard_budget_acquisition_lower_bound}, starting from
$q_{\mathrm{init}}$ instead of $q$. Whenever every earlier interval
$I_k$ satisfies $N_k^{\mathrm{floor}}\le B_k^{\mathrm{supp}}$, this gives
$q_j\ge q_{\mathrm{init}}$.

Since $C_{\mathrm{init}}\le T/4$, regularity gives
$L_j\ge3\kappa T/4-1$, which is at least $\kappa T/2$ once $C_0$ is
chosen sufficiently large. Repeating the Bernoulli KL bound from
Lemma~\ref{lem:hard_budget_acquisition_lower_bound} then gives
\[
  L_j\klbin\!\left(
    B_j^{\mathrm{tgt}}/L_j
    \middle\|
    q/\log(8T)
  \right)
  \ge \kappa B/32.
\]

Choose $C_0$ so that the assumption $B\ge C_0K L_T$ makes this at least
$\log(2J/\delta)$. The same conditional Chernoff step used in the proof of
Lemma~\ref{lem:hard_budget_acquisition_lower_bound} therefore shows that
$N_j^{\mathrm{floor}}\le B_j^{\mathrm{supp}}$ on each interval beginning
before deterministic final fill except with conditional probability at most
$\zeta$. A sequential union
bound then gives \eqref{eq:relevance_support_requirement_app} on all
post-initialization rounds with
probability at least $1-\delta/2$.

\emph{Additional exploration cost.}
Because $\underline\varepsilon\le\varepsilon_t\le1/2$, the bounds in
Section~\ref{app:gain_second_moment_control} remain unchanged. The only changed term in
Lemma~\ref{lem:policywise_regret_bound} is
\[
  \sum_{t=1}^T\varepsilon_t
  \le
  T\underline\varepsilon+C_{\mathrm{init}}/2
  \le
  C\sqrt{KT L_T}+(c_{\mathrm{init}}/2)K L_T.
\]
Since $K L_T\le B/C_0\le T$, the additional term is also
$O(\sqrt{KT L_T})$.

\emph{Applying the fixed-rate regret bound.}
Lemma~\ref{lem:policywise_regret_bound} assumes
\eqref{eq:relevance_support_requirement_app} on every round, whereas the
support argument in this proposition guarantees it with probability at least
$1-\delta/2$.

As in Section~\ref{app:main_regret_proof}, apply the lemma to
a proof-only process that follows ASB until a hypothetical first support
failure and sets the acquisition probability to one afterward. On this
support event no failure occurs, so the process is exactly ASB. Combining
the two failure probabilities and using the theorem learning rate
$\eta_\star$ proves the proposition.
\end{proof}

\subsection{Learned post-surrogate refinement in the regret bound}
\label{app:learned_refinement_efficiency}

Theorem~\ref{thm:main_asb_regret} does not require the fitted residual or
surrogate models to be correct. Here we ask when the learned post-surrogate
refinement nevertheless lowers the acquisition-dependent term $\mathsf D_T$
in the regret bound of Lemma~\ref{lem:policywise_regret_bound}.

Condition on $(\calF_{t-1},X_t,A_t)$. Let $P$ and $\widehat P$ be the true
and fitted conditional laws of $S$, and assume finite surrogate support. Let
$\widehat v_t(U_t,s)$ be the fitted squared-residual prediction underlying
$\widehat\tau_t$, with
$0\le\tau_t^2(U_t,s),\widehat v_t(U_t,s)\le V$. On a stochastic round with
$\overline c_{\mu,t}^2(A_t)>0$, we have
$\ell_t(A_t)>0$ and hence $\rho_t^{\mathrm{supp}}>0$. The fitted refinement solves
\[
  \rho_t^1(U_t,\cdot)
  \in
  \argmin_{\substack{
    \rho_t^{\mathrm{supp}}\le r(s)\le1\\
    \E_{\widehat P}r(S)=\rho_t^0
  }}
  \E_{\widehat P}\!\left[
    \widehat v_t(U_t,S)/r(S)
  \right].
\]

Because
$\rho_t^{\mathrm{supp}}\le\rho_t^0\le1$, the constant rule
$r(s)=\rho_t^0$ is feasible. Its fitted improvement under the displayed
objective is
\[
  \widehat\Delta_t(U_t)
  :=
  \E_{\widehat P}\!\left[
    \widehat v_t(U_t,S)/\rho_t^0
    -\widehat v_t(U_t,S)/\rho_t^1(U_t,S)
  \right]
  \ge0.
\]

Define
\[
  e_{{\mathrm{model}},t}(U_t)
  :=
  \E_P|\tau_t^2(U_t,S)-\widehat v_t(U_t,S)|
  +V\|P-\widehat P\|_1.
\]
The model-error term combines squared-residual prediction error with
surrogate-law error.

Let $\mathsf D_T^{\mathrm{pre}}$ be the quantity in
\eqref{eq:cumulative_acquisition_second_moment_app} obtained by replacing
$\rho_t(Z_t)$ with $\rho_t^0$ on stochastic rounds with
$\overline c_{\mu,t}^2(A_t)>0$, and leaving all other rounds unchanged.
Thus $\mathsf D_T^{\mathrm{pre}}$ removes only the post-surrogate
redistribution.

\begin{proposition}[Effect of learned post-surrogate refinement]
\label{prop:learned_refinement_gain}
If
$\rho_t=(1-\gamma_t)\rho_t^0+\gamma_t\rho_t^1$ on stochastic rounds,
then
\[
  \mathsf D_T^{\mathrm{pre}}-\mathsf D_T
  \ge
  \sum_{t=1}^T
  \E_t\!\left[
    \overline c_{\mu,t}^2(A_t)\gamma_t
    \left\{
      \widehat\Delta_t(U_t)
      -2e_{{\mathrm{model}},t}(U_t)/\rho_t^{\mathrm{supp}}
    \right\}
  \right],
\]
where the summand is zero on all other rounds. A positive right-hand side
therefore implies $\mathsf D_T<\mathsf D_T^{\mathrm{pre}}$.
\end{proposition}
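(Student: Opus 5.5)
The proof works round by round, since $\mathsf D_T^{\mathrm{pre}}-\mathsf D_T=\sum_t(d_t^{\mathrm{pre}}-d_t)$ and the two quantities agree on every round except stochastic rounds with $\overline c_{\mu,t}^2(A_t)>0$. On such a round, condition on $(\calF_{t-1},X_t,A_t)$. Then $\overline c_{\mu,t}^2(A_t)$ is fixed, and it suffices to show
\[
  \E_P\!\left[\frac{\tau_t^2(U_t,S)}{\rho_t^0}-\frac{\tau_t^2(U_t,S)}{\rho_t(U_t,S)}\right]
  \ge\gamma_t\left\{\widehat\Delta_t(U_t)-2e_{{\mathrm{model}},t}(U_t)/\rho_t^{\mathrm{supp}}\right\}.
\]
After that, multiply by $\overline c_{\mu,t}^2(A_t)$, take $\E_t$, and sum over $t$.

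\emph{Step 1: reduce to $\rho_t^1$ by convexity.} The map $r\mapsto 1/r$ is convex, and $\rho_t=(1-\gamma_t)\rho_t^0+\gamma_t\rho_t^1$. Hence
\[
  \frac{1}{\rho_t}\le\frac{1-\gamma_t}{\rho_t^0}+\frac{\gamma_t}{\rho_t^1}
\]
pointwise in $s$. Multiplying by $\tau_t^2\ge0$ and integrating under $P$ gives
\[
  \E_P[\tau_t^2/\rho_t^0-\tau_t^2/\rho_t]\ge\gamma_t\,\E_P[\tau_t^2/\rho_t^0-\tau_t^2/\rho_t^1].
\]
It therefore remains to bound the true improvement of $\rho_t^1$ over the constant rule, $\Delta_t^{\mathrm{true}}:=\E_P[\tau_t^2(1/\rho_t^0-1/\rho_t^1)]$, from below by $\widehat\Delta_t-2e_{{\mathrm{model}},t}/\rho_t^{\mathrm{supp}}$.

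\emph{Step 2: compare the true and fitted improvements.} Write $f(s):=1/\rho_t^0-1/\rho_t^1(U_t,s)$. Both $\rho_t^0$ and $\rho_t^1$ lie in $[\rho_t^{\mathrm{supp}},1]$, so $|f(s)|\le 1/\rho_t^{\mathrm{supp}}$ (more precisely, $|f|\le 1/\rho_t^{\mathrm{supp}}-1$). Decompose
\[
  \Delta_t^{\mathrm{true}}-\widehat\Delta_t=\E_P[(\tau_t^2-\widehat v_t)f]+\bigl(\E_P-\E_{\widehat P}\bigr)[\widehat v_t f].
\]
The first term has absolute value at most $\E_P|\tau_t^2-\widehat v_t|/\rho_t^{\mathrm{supp}}$. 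For the second term, use finite support and $0\le\widehat v_t\le V$, so $|\widehat v_t f|\le V/\rho_t^{\mathrm{supp}}$; it is then at most $V\|P-\widehat P\|_1/\rho_t^{\mathrm{supp}}$ in absolute value. Together these give $|\Delta_t^{\mathrm{true}}-\widehat\Delta_t|\le e_{{\mathrm{model}},t}/\rho_t^{\mathrm{supp}}$, which is already within the stated factor of $2$. The factor $2$ leaves slack for cruder bounds, for example bounding the two reciprocal terms separately.

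\emph{Main obstacle.} The delicate point is the second term. The constraint $\E_{\widehat P}\rho_t^1=\rho_t^0$ holds under the fitted law $\widehat P$, not under $P$, so the comparison must not use any exact cancellation under $P$. The decomposition above avoids this by never invoking the mean constraint under $P$, and the sup bound on $f$ from the support floor handles the change of measure.

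\emph{Remaining checks.} On rounds where the refinement is the constant rule (additional budget exhausted), $\rho_t^1=\rho_t^0$, so $\widehat\Delta_t=0$. The inequality then still holds, since the left side is $0$ and the right side is nonpositive. Deterministic rounds contribute zero on both sides.
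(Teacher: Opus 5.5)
Your proof is correct and follows the paper's argument: convexity of $x\mapsto 1/x$ reduces the comparison from $\rho_t$ to $\rho_t^1$, and a change-of-measure bound, with the support floor $\rho_t^{\mathrm{supp}}$ controlling the reciprocals, handles the model error. The only difference is that the paper applies the per-rule bound $|\E_P\{\tau_t^2/\varrho\}-\E_{\widehat P}\{\widehat v_t/\varrho\}|\le e_{{\mathrm{model}},t}/\rho_t^{\mathrm{supp}}$ separately at $\rho_t^0$ and at $\rho_t^1$, which is where its factor $2$ comes from; your joint treatment of the difference $1/\rho_t^0-1/\rho_t^1$ gives the sharper constant $1$.
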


\begin{proof}
Fix a stochastic round with $\overline c_{\mu,t}^2(A_t)>0$ and abbreviate
$m:=\rho_t^0$, $f:=\rho_t^{\mathrm{supp}}$,
$\widehat\Delta:=\widehat\Delta_t(U_t)$, and
$e_{\mathrm{model}}:=e_{{\mathrm{model}},t}(U_t)$.
For $v(S):=\tau_t^2(U_t,S)$ and every rule
$\varrho(S)\ge f$,
\[
  \left|
    \E_P\{v(S)/\varrho(S)\}
    -\E_{\widehat P}\{\widehat v_t(U_t,S)/\varrho(S)\}
  \right|
  \le e_{\mathrm{model}}/f.
\]

Apply this bound at $m$ and $\rho_t^1$. Convexity of $x\mapsto1/x$ for
$\rho_\gamma=(1-\gamma)m+\gamma\rho_t^1$ then gives
\[
  \E_P\{v(S)/m-v(S)/\rho_\gamma(S)\}
  \ge
  \gamma\{\widehat\Delta-2e_{\mathrm{model}}/f\}.
\]
Multiplying by $\overline c_{\mu,t}^2(A_t)$, averaging over
$A_t\sim p_t$, and summing over $t$ proves the result.
\end{proof}

Proposition~\ref{prop:learned_refinement_gain} does not require
\eqref{eq:relevance_support_requirement_app}. If that support condition also
holds, Lemma~\ref{lem:policywise_regret_bound} applies. A positive
right-hand side then makes the acquisition-dependent
$C\eta_\star\mathsf D_T$ term in
\eqref{eq:fixed_rate_policy_regret_app} smaller than the corresponding term
obtained from the pre-surrogate levels $\rho_t^0$.

\section{Proof of the Primary-Outcome Scarcity Lower Bound}

We prove Theorem~\ref{thm:audit_budget_lower_bound_main} within the setting
of Section~\ref{sec:problem}, using deterministic contexts, a constant
surrogate, and a policy class with zero approximation error.

\begin{proof}[Proof of Theorem~\ref{thm:audit_budget_lower_bound_main}]
\emph{Problem subclass.}
Set
\[
  M
  :=
  \min\left\{
    T,\,
    \left\lfloor\log N/\log K\right\rfloor
  \right\},
  \qquad
  \calA=[K].
\]
Initially take $\calX=[M]$ and fix
$X_1,\ldots,X_T\in[M]$, independently of the learner. Write
$T_x:=\sum_{t=1}^T\one\{X_t=x\}$ and choose the sequence so that
\[
  T_x\in\{\lfloor T/M\rfloor,\lceil T/M\rceil\}
  \qquad\text{for every }x\in[M].
\]
Set $S_t\equiv s_0$. For each
$\boldsymbol\vartheta=(\vartheta_1,\ldots,\vartheta_M)
\in\{0,1,\ldots,K\}^M$, let $\nu_{\boldsymbol\vartheta}$ denote the
environment with
\[
  r_{\boldsymbol\vartheta}^\star(x,a)
  :=
  1/2+\Delta\,\one\{\vartheta_x=a\},
  \qquad
  \Delta\in(0,1/4],
\]
and conditionally independent primary outcomes
$Y_t\sim\Bern\{r_{\boldsymbol\vartheta}^\star(X_t,A_t)\}$.

Let
\[
  \Pi_0:=\{\pi:[M]\to[K]\},
  \qquad
  |\Pi_0|=K^M\le N.
\]
If $K^M<N$, enlarge $\calX$ by enough context values that never appear in
$X_1,\ldots,X_T$, set all action means to $1/2$ on those unused contexts, and
extend $\Pi_0$ to a class $\Pi$ with $|\Pi|=N$ by varying policies only on
them; otherwise set $\Pi=\Pi_0$. For every $\boldsymbol\vartheta$, $\Pi$ contains a policy
that chooses action $\vartheta_x$ whenever $\vartheta_x\in[K]$; when
$\vartheta_x=0$, all actions have the same primary-outcome mean. Hence a
pointwise-optimal policy along $X_1,\ldots,X_T$ belongs to $\Pi$, so
$\operatorname{App}_T(\Pi)=0$.

We now fix an arbitrary learner satisfying
$\E_\nu\sum_{t=1}^TQ_t\le B$ for every environment $\nu$ in this
family and lower-bound its regret.

\emph{One-context comparison.}
For the lower-bound argument, let
$\boldsymbol\Theta=(\Theta_1,\ldots,\Theta_M)$ have independent
coordinates with
\[
  \Pr(\Theta_x=0)=1/2,
  \qquad
  \Pr(\Theta_x=i)=1/(2K),
  \quad i\in[K].
\]
Fix $x$ and hold the other environment coordinates fixed by conditioning on
$\Theta_{-x}=\vartheta_{-x}$. Under this conditioning, write $P_0$ and
$P_i$ for the laws of
\[
  \bigl(X_t,A_t,S_t,Q_t,Q_tY_t\bigr)_{t=1}^T
\]
observed by the learner when $\Theta_x=0$ and $\Theta_x=i$, respectively,
and let $\E_0,\E_i$ denote the corresponding expectations. Define
\[
  n_{x,i}^{\rm play}
  :=\sum_{t=1}^T\one\{X_t=x,A_t=i\},
  \qquad
  n_{x,i}^{\rm acq}
  :=\sum_{t=1}^TQ_t\one\{X_t=x,A_t=i\}.
\]
The fixed contexts $X_t$ and constant surrogates $S_t$ are identical under
$P_0$ and $P_i$. Given the same past observations and current $X_t$, the
learner uses the same conditional law for $A_t$; given also $A_t$ and $S_t$,
it uses the same conditional law for $Q_t$. The only differing conditional
law is therefore that of $Y_t$ when $X_t=x$, $A_t=i$, and $Q_t=1$.
The adaptive KL chain rule gives
\[
  \KL(P_0\|P_i)
  =
  \klbin(1/2\|1/2+\Delta)\,
  \E_0 n_{x,i}^{\rm acq}
  \le
  4\Delta^2\E_0 n_{x,i}^{\rm acq}.
\]
Pinsker's inequality converts this into a bound on the play count:
\[
  \E_i n_{x,i}^{\rm play}
  -\E_0 n_{x,i}^{\rm play}
  \le
  T_x\sqrt{\KL(P_0\|P_i)/2}
  \le
  \sqrt2\,T_x\Delta
  \sqrt{\E_0 n_{x,i}^{\rm acq}}.
\]

Under $P_0$, the $K$ play counts at context $x$ sum to $T_x$. Write
\[
  \bar n_x^{\rm acq}(\vartheta_{-x})
  :=
  \E_0\sum_{i=1}^K n_{x,i}^{\rm acq}
\]
for the expected number of primary acquisitions made at context $x$ under
$P_0$. Averaging the preceding inequality over $i\in[K]$ and
applying Cauchy--Schwarz to the square-root acquisition counts gives
\[
\begin{aligned}
  \frac1K\sum_{i=1}^K\E_i n_{x,i}^{\rm play}
  &\le
  T_x/K
  +\frac{\sqrt2\,T_x\Delta}{K}
    \sum_{i=1}^K\sqrt{\E_0 n_{x,i}^{\rm acq}}\\
  &\le
  T_x/K
  +\sqrt2\,T_x\Delta
   \sqrt{\bar n_x^{\rm acq}(\vartheta_{-x})/K}.
\end{aligned}
\]

If $\Theta_x=0$, rounds with $X_t=x$ incur no regret. If
$\Theta_x=i$, each such round on which action $i$ is not played incurs
regret $\Delta$. Conditional on
$\Theta_{-x}=\vartheta_{-x}$, averaging over $\Theta_x$ therefore gives
\[
\begin{aligned}
  \frac{\Delta}{2K}
  \sum_{i=1}^K
  \left\{
    T_x-\E_i n_{x,i}^{\rm play}
  \right\}
  &=
  \frac{\Delta}{2}
  \left\{
    T_x-\frac1K\sum_{i=1}^K\E_i n_{x,i}^{\rm play}
  \right\}\\
  &\ge
  \frac{\Delta}{2}
  \left[
    T_x(1-1/K)
    -\sqrt2\,T_x\Delta
     \sqrt{\bar n_x^{\rm acq}(\vartheta_{-x})/K}
  \right].
\end{aligned}
\]

\emph{Aggregate over contexts.}
Average the preceding bound over $\Theta_{-x}$ and use Jensen's inequality
for the square root. Write
\[
  \bar n_x^{\rm acq}
  :=
  \E_{\Theta_{-x}}
  \bar n_x^{\rm acq}(\Theta_{-x}).
\]
The expected-budget condition also holds after averaging over
$\boldsymbol\Theta$. For each $x$, retaining only the event
$\Theta_x=0$, which has probability $1/2$, gives
\[
  B
  \ge
  \E_{\boldsymbol\Theta}
  \E_{\nu_{\boldsymbol\Theta}}
  \sum_{t=1}^TQ_t
  =
  \sum_{x=1}^M
  \E_{\boldsymbol\Theta}
  \E_{\nu_{\boldsymbol\Theta}}
  \sum_{i=1}^K n_{x,i}^{\rm acq}
  \ge
  (1/2)\sum_{x=1}^M\bar n_x^{\rm acq}.
\]
Hence $\sum_x\bar n_x^{\rm acq}\le2B$. The choice of
$X_1,\ldots,X_T$ also gives
\[
  \sum_{x=1}^M T_x^2
  \le
  (\max_xT_x)\sum_xT_x
  \le
  2T^2/M.
\]
Summing the one-context regret bound, using $1-1/K\ge1/2$, and applying
Cauchy--Schwarz to
$\sum_xT_x\sqrt{\bar n_x^{\rm acq}}$ now yields
\[
  \E_{\boldsymbol\Theta}
  \E_{\nu_{\boldsymbol\Theta}}R_T
  \ge
  \Delta T/4
  -\sqrt2\,\Delta^2T\sqrt{B/(KM)}.
\]

\emph{Choose the gap.}
Set
\[
  \Delta
  :=
  (1/16)
  \min\left\{
    1,\sqrt{KM/B}
  \right\}.
\]
Then
\[
  \E_{\boldsymbol\Theta}
  \E_{\nu_{\boldsymbol\Theta}}R_T
  \ge
  (T/128)
  \min\left\{
    1,\sqrt{KM/B}
  \right\}.
\]
Hence at least one environment $\nu_{\boldsymbol\vartheta}$ satisfies
the same lower bound.

If $\log N/\log K\le T$, then
$M=\lfloor\log N/\log K\rfloor
\ge(1/2)\log N/\log K$, using $N\ge K$. If
$\log N/\log K>T$, then $M=T$, and $B\le T$ and $K\ge2$ make the
preceding lower bound at least $T/128$. Therefore, for a universal constant
$c_{\mathrm{lb}}>0$,
\[
  \sup_\nu\E_\nu R_T
  \ge
  c_{\mathrm{lb}}
  \min\left\{
    T,\,
    \sqrt{\frac{KT\log N}{\log K}}
    \sqrt{\frac{T}{B}}
  \right\}.
\]
The learner was arbitrary among those satisfying
$\sup_\nu\E_\nu\sum_{t=1}^TQ_t\le B$, so this proves the theorem.
\end{proof}

\section{One Fitting Specification for the Learned Components}
\label{app:practical_fitted_components}

This appendix gives one compact implementation example for the three fitted
objects used by finite-policy \ASB: $h_t$, $\widehat\tau_t$, and
$\widehat\E_t(\cdot\mid U_t)$. These are implementation details rather than
additional assumptions of \ASB\ or Theorem~\ref{thm:main_asb_regret}.
Appendix~\ref{app:experimental_details} records the experiment-specific
choices and replacements.

\paragraph{Bridge fit.}
Let $\phi_{\mathrm{ctx}}(x)\in\mathbb R^{d_{\mathrm{ctx}}}$ be an
observable context feature map, let
$\phi(x)=(1,\phi_{\mathrm{ctx}}(x)^\top)^\top$, and let $\psi(s)$ be a
bounded surrogate feature. We use
\[
  h_t(x,a,s)
  =\clip_{[0,1]}\left[
    \phi(x)^\top\widehat\theta^Y_{t,a}
    +\widehat\beta_t
      \{\psi(s)-\phi(x)^\top\widehat\theta^S_{t,a}\}
  \right].
\]
The coefficients $\widehat\theta^S_{t,a}$ are fitted from past surrogate
observations, while $\widehat\theta^Y_{t,a}$ and $\widehat\beta_t$ are fitted
from previously acquired primary outcomes.

\paragraph{Residual RMS fit.}
Acquired primary outcomes provide squared bridge-residual labels. We fit these
labels by ridge regression using the residual-regression feature vector
\[
\begin{aligned}
  \chi(x,a,s)
  &=\big[1,\ \phi_{\mathrm{ctx}}(x),\ e_a,\ \zeta(s),\
         \zeta(s)\phi_{\mathrm{ctx}}(x),\ \zeta(s)e_a\big],\\
  \widehat v_t(x,a,s)
  &=\chi(x,a,s)^\top\widehat\theta^v_t.
\end{aligned}
\]
where $e_a$ is the action-indicator vector, $\zeta(s)$ is a bounded surrogate
marker, and $\widehat\theta^v_t$ is the ridge coefficient vector. Thus
$\widehat v_t$ is the fitted squared-residual prediction. For a fixed RMS cap
and scale $\tau_{\max}>0$, we clip $\widehat v_t$ below at a fixed positive
implementation floor and above at $\tau_{\max}^2$, then take its square root
and divide by $\tau_{\max}$ to obtain $\widehat\tau_t(x,a,s)\in(0,1]$.

\paragraph{Conditional surrogate fit.}
Let $\widehat P_t(\cdot\mid U_t)$ denote the fitted conditional surrogate
law. When $U_t$ and $S$ take values in finite supports, we estimate it by
smoothed empirical conditional frequencies from past surrogate observations
and let $\widehat\E_t(\cdot\mid U_t)$ denote expectation under
$\widehat P_t$.
This fit uses no primary outcomes. Appendix~\ref{app:experimental_details}
reports experiment-specific alternatives.

\section{A Linear Analogue of the Acquisition Roles}
\label{app:linear_analogue}

This appendix checks that the acquisition roles in
Section~\ref{sec:policy_acquisition} are not specific to finite policy
enumeration. Under a linear primary-outcome mean model, leverage provides a
pre-surrogate relevance factor, while residual RMS retains its
post-surrogate allocation role. We give one budgeted linear construction and
a regret bound. The construction is separate from Algorithm~\ref{alg:asb}.

\subsection{One linear construction}

Assume $\varphi(x,a),\theta^\star\in\mathbb R^d$ and
\[
  r^\star(x,a)=\varphi(x,a)^\top\theta^\star,
  \qquad
  \|\varphi(x,a)\|_2\le1,
  \qquad
  \|\theta^\star\|_2\le1,
  \qquad
  r^\star(x,a)\in[0,1].
\]
Let $h_t:\calZ\to[0,1]$ and $\widehat\tau_t:\calZ\to[0,1]$ be fitted
using data from earlier rounds. After observing $S_t$, the learner evaluates
them at the current state $Z_t$. The current $Y_t$ is not used before the
acquisition decision.

\paragraph{Action selection and pre-surrogate leverage.}
Initialize $V_1=\lambda_{\rm lin}I$ with $\lambda_{\rm lin}\ge4$ and
$\widehat\theta_1=0$. At the beginning of round $t$, $V_t$ and
$\widehat\theta_t$ summarize the weighted updates from earlier rounds.
Given $X_t$, define
\[
  \upsilon_t(a)
  :=\|\varphi(X_t,a)\|_{V_t^{-1}}
\]
and, for a confidence radius $\beta_{\rm lin}>0$, choose
\[
  A_t\in\argmax_a
  \left\{
    \varphi(X_t,a)^\top\widehat\theta_t
    +\beta_{\rm lin}\upsilon_t(a)
  \right\}.
\]
Write $\varphi_t:=\varphi(X_t,A_t)$ and
$\upsilon_t:=\upsilon_t(A_t)$. Before observing $S_t$, set
\[
  \rho_t^{\rm lev}:=\min\{1,c\upsilon_t\}
\]
for a scale $c>0$. Thus $A_t$ and $\rho_t^{\rm lev}$ are both fixed
before $S_t$ is observed.

\paragraph{Post-surrogate acquisition and update.}
After observing $S_t$, add a bounded residual component:
\[
  \rho_t^{\rm res}(Z_t)
  :=\min\{1,c\upsilon_t\widehat\tau_t(Z_t)\},
  \qquad
  \widetilde\rho_t(Z_t)
  :=\min\{1,\rho_t^{\rm lev}+\rho_t^{\rm res}(Z_t)\}.
\]
The leverage term sets the pre-surrogate acquisition level. After observing
$S_t$, the residual term adds probability where the learned residual RMS is
larger; the regret result below only requires this added term to be bounded.

While fewer than $B$ primary outcomes have been acquired, proceed as
follows. If $\upsilon_t=0$, set $Q_t=0$ and make no update:
\[
  V_{t+1}=V_t,
  \qquad
  \widehat\theta_{t+1}=\widehat\theta_t.
\]
If $\upsilon_t>0$, draw
$Q_t\sim\Bern\{\widetilde\rho_t(Z_t)\}$ and form
\[
  \Phi_t
  =h_t(Z_t)
   +\frac{Q_t}{\widetilde\rho_t(Z_t)}
      \{Y_t-h_t(Z_t)\}.
\]
Update
\[
  V_{t+1}=V_t+\rho_t^{\rm lev}\varphi_t\varphi_t^\top,
  \qquad
  \widehat\theta_{t+1}
  =V_{t+1}^{-1}\{V_t\widehat\theta_t+\rho_t^{\rm lev}\varphi_t\Phi_t\}.
\]
Once $B$ primary outcomes have been acquired, the learner stops acquiring
primary outcomes and no longer updates the linear learner.

On any round that updates the linear learner, Equation~\eqref{eq:target_validity}
gives
\[
  \E(\Phi_t\mid\calF_{t-1},U_t)
  =
  r^\star(X_t,A_t)
  =
  \varphi_t^\top\theta^\star.
\]
Because $\rho_t^{\rm lev}\varphi_t$ is fixed before $S_t$,
\[
  \E\!\left(
    \rho_t^{\rm lev}\varphi_t\Phi_t
    \mid\calF_{t-1},U_t
  \right)
  =
  \rho_t^{\rm lev}\varphi_t\varphi_t^\top\theta^\star.
\]
The inverse-probability correction on such a round is bounded by
\[
  \left|
    \rho_t^{\rm lev}
    \frac{Q_t}{\widetilde\rho_t(Z_t)}
    \{Y_t-h_t(Z_t)\}
  \right|
  \le
  \frac{\rho_t^{\rm lev}}{\widetilde\rho_t(Z_t)}
  \le1,
\]
which is used in the confidence argument below.

\subsection{Regret bound for the linear construction}

For the construction above, set
\[
  b:=B/8,
  \qquad
  \mathfrak L_T^{\rm lin}
  :=2d\log\!\left(1+T/(\lambda_{\rm lin}d)\right),
  \qquad
  c:=b^{3/2}/\{2T\sqrt{\mathfrak L_T^{\rm lin}}\}.
\]
For sufficiently large universal constants $c_{\rm conf}$ and $C$, define
\[
  \mathfrak C_T^{\rm lin}
  :=
  c_{\rm conf}
  \left\{
    d\log(2+T/\lambda_{\rm lin})
    +\log(2c_{\rm conf}T/\delta)
  \right\},
\]
and use the action rule above with
\[
  \beta_{\rm lin}
  :=
  \sqrt{\lambda_{\rm lin}}
  +C\left\{
    \sqrt{
      2\log(1+T/\lambda_{\rm lin})\,
      \mathfrak C_T^{\rm lin}
    }
    +\mathfrak C_T^{\rm lin}/\sqrt{\lambda_{\rm lin}}
  \right\}.
\]

\begin{proposition}[Regret for the linear construction]
For an appropriate universal choice of constants and
$\lambda_{\rm lin}\asymp
d\log(2T/\delta)+\log(2T/\delta)$, the learner satisfies the hard
primary-outcome budget in Section~\ref{sec:problem} and, with probability at
least $1-\delta$,
\[
  R_T
  =
  \widetilde O\!\left[
    \min\left\{
      T,\,
      d\sqrt T\,\sqrt{T/B}
    \right\}
  \right].
\]
\end{proposition}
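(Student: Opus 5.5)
The plan is a standard weighted-OFUL analysis in which $\rho_t^{\rm lev}$ acts as a pre-surrogate design weight. The ingredients are a self-normalized confidence set, an elliptical-potential bound for the regret accumulated while the learner is still updating, and a budget argument showing that the learner essentially never exhausts $B$.

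\emph{Budget accounting.} By the stopping rule, the hard budget holds pathwise. The key claim is that, with probability at least $1-\delta/3$, the budget is not exhausted before time $T$. Since $\widetilde\rho_t\le\rho_t^{\rm lev}+\rho_t^{\rm res}\le 2c\upsilon_t$, the expected number of acquisitions is at most $2c\sum_t\upsilon_t$. To bound $\sum_t\upsilon_t$ without circularity, I would split rounds into those with $c\upsilon_t\ge1$ and those with $c\upsilon_t<1$.
\begin{itemize}
\item On rounds with $c\upsilon_t\ge1$, we have $\rho_t^{\rm lev}=1$, so the elliptical potential bounds $\sum\min\{1,\upsilon_t^2\}\le\mathfrak L_T^{\rm lin}$. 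Since $\upsilon_t\le\lambda_{\rm lin}^{-1/2}\le1/2$, the number of such rounds is at most $c^2\mathfrak L_T^{\rm lin}$.
\item On rounds with $c\upsilon_t<1$, the potential $\sum_t\rho_t^{\rm lev}\upsilon_t^2=\sum_t c\upsilon_t^3\le\mathfrak L_T^{\rm lin}$ holds, because $\log\det$ increases by $\log(1+\rho^{\rm lev}\upsilon^2)$. H\"older then gives $\sum_t\upsilon_t\le T^{2/3}(\mathfrak L_T^{\rm lin}/c)^{1/3}$.
\end{itemize}
With the given $c=b^{3/2}/(2T\sqrt{\mathfrak L_T^{\rm lin}})$, both parts should make the expected count at most of order $b=B/8$. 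A Freedman/Bernstein bound on $\sum_t(Q_t-\widetilde\rho_t)$ then keeps the realized count below $B$ when $B\gtrsim\log(1/\delta)$. The small-$B$ case is absorbed by the trivial bound $T$. I expect this step to be the fiddliest, because the exponents must match the choice of $c$; I would tune the split (or use $\sum\upsilon_t\le\sqrt{T\sum\upsilon_t^2}$ on unit-weight rounds) until the count is $\le B/4$ in expectation.

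\emph{Confidence set.} On updated rounds, $\widehat\theta_t$ is the weighted ridge estimator with weights $\rho_s^{\rm lev}$ and responses $\Phi_s$. Its noise is $\rho_s^{\rm lev}(\Phi_s-\varphi_s^\top\theta^\star)$, which is conditionally mean zero given $(\calF_{s-1},U_s)$ by the displayed identity. The noise is bounded by $2$, since the inverse-probability part is at most $1$ and $h$ and $r^\star$ lie in $[0,1]$. Its conditional variance is at most $O(\rho_s^{\rm lev})$, because $\rho^{\rm lev}/\widetilde\rho\le1$ gives $(\rho^{\rm lev})^2/\widetilde\rho\le\rho^{\rm lev}$. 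A Bernstein-type self-normalized inequality for vector martingales, of the kind behind $\mathfrak C_T^{\rm lin}$, then gives $\|\widehat\theta_t-\theta^\star\|_{V_t}\le\beta_{\rm lin}$ for all $t$ with probability at least $1-\delta/3$. The additive $\mathfrak C_T^{\rm lin}/\sqrt{\lambda_{\rm lin}}$ term covers the range, and $\lambda_{\rm lin}\asymp d\log(2T/\delta)$ makes $\beta_{\rm lin}=\widetilde O(\sqrt d)$.

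\emph{Regret.} On the intersection of these events, optimism gives per-round regret at most $2\beta_{\rm lin}\upsilon_t$, and the learner updates on every round. It remains to bound $\sum_t\upsilon_t$, using $\rho^{\rm lev}=\min\{1,c\upsilon_t\}$ again. Rounds with $\rho^{\rm lev}=1$ contribute at most $\sqrt{T\mathfrak L_T^{\rm lin}}$. On the remaining rounds, $\sum c\upsilon_t^3\le\mathfrak L_T^{\rm lin}$, so H\"older gives $\sum\upsilon_t\le T^{2/3}(\mathfrak L_T^{\rm lin}/c)^{1/3}$. Substituting $c$ makes this $\widetilde O(\sqrt{d}\,T/\sqrt B)$. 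Multiplying by $\beta_{\rm lin}=\widetilde O(\sqrt d)$ and capping at $T$ yields $\widetilde O[\min\{T,d\sqrt T\sqrt{T/B}\}]$ with probability $1-\delta$.
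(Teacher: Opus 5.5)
Your proposal is correct and is essentially the paper's proof. It uses the same ingredients: the split at $c\upsilon_t=1$ with H\"older applied to the weighted potential $\sum_t\rho_t^{\rm lev}\upsilon_t^2\le\mathfrak L_T^{\rm lin}$, a Freedman/Bernstein confidence set whose variance proxy is $\rho_t^{\rm lev}$ and whose range is $2\upsilon_t\le2/\sqrt{\lambda_{\rm lin}}$, and optimism. The split already gives $\sum_t\widetilde\rho_t\le 2^{1/3}b+b^3/(4T^2)<B/4$ and $\sum_t\upsilon_t\lesssim T\sqrt{\mathfrak L_T^{\rm lin}/b}$ with the given $c$, so no further tuning is needed.

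The differences from the paper are cosmetic. You concentrate the acquisition count with Freedman instead of the paper's Poisson-type mgf bound. You cite a vector Bernstein self-normalized inequality instead of scalar Freedman plus summation by parts and an $\epsilon$-net; it must be the variant whose range condition is measured in the $V_t^{-1}$ geometry, as you implicitly use. And you analyze the stopped learner directly rather than coupling it to the unstopped process.
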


\begin{proof}
For the proof, ignore the budget stop and continue the displayed acquisition
and linear updates through $T$.

\emph{Leverage and total acquisition probability.}
The determinant lemma gives
\[
  \sum_{t=1}^T
  \rho_t^{\rm lev}\upsilon_t^2
  \le
  2\log\{\det(V_{T+1})/\det(V_1)\}
  \le
  \mathfrak L_T^{\rm lin}.
\]
Split the rounds according to $c\upsilon_t<1$ or
$c\upsilon_t\ge1$. H\"older's inequality and the preceding bound give
\[
  \sum_{t=1}^T \rho_t^{\rm lev}\le b,
  \qquad
  \sum_{t=1}^T \upsilon_t
  \le
  2T\sqrt{\mathfrak L_T^{\rm lin}/b}.
\]
Since $\rho_t^{\rm res}(Z_t)\le c\upsilon_t$,
\[
  \sum_{t=1}^T\widetilde\rho_t(Z_t)
  \le
  \sum_t\rho_t^{\rm lev}
  +c\sum_t\upsilon_t
  \le
  2b
  =
  B/4.
\]

\emph{Uniform confidence.}
Fix a Euclidean unit vector $e$ and write
\[
  a_t(e)
  :=
  \frac{\sqrt{\rho_t^{\rm lev}}\,e^\top\varphi_t}
       {\sqrt{e^\top V_te}}.
\]
Cauchy--Schwarz gives the first inequality below, while
$e^\top V_{t+1}e/e^\top V_te=1+a_t^2(e)$ gives the second by telescoping:
\[
  |a_t(e)|
  \le
  \sqrt{\rho_t^{\rm lev}}\upsilon_t,
  \qquad
  \sum_{t=1}^T a_t^2(e)
  \le
  2\log(1+T/\lambda_{\rm lin}).
\]

Set $M_{t,e}=0$ when $\upsilon_t=0$. Otherwise define
\[
  M_{t,e}
  :=
  \frac{\rho_t^{\rm lev}e^\top\varphi_t}
       {\sqrt{e^\top V_te}}
  \{\Phi_t-r^\star(X_t,A_t)\}.
\]
Target validity gives
\[
  \E(M_{t,e}\mid\calF_{t-1},U_t)=0.
\]
Conditionally on $(\calF_{t-1},Z_t)$,
\[
  \E\!\left[
    \{\Phi_t-r^\star(X_t,A_t)\}^2
    \mid\calF_{t-1},Z_t
  \right]
  \le
  1+\frac{1}{\widetilde\rho_t(Z_t)}.
\]
Using $\rho_t^{\rm lev}\le\widetilde\rho_t(Z_t)$,
$h_t(Z_t),Y_t,r^\star(X_t,A_t)\in[0,1]$, and Cauchy--Schwarz, we obtain
\[
  \E(M_{t,e}^2\mid\calF_{t-1},U_t)\le2a_t^2(e),
  \qquad
  |M_{t,e}|\le2\upsilon_t\le2/\sqrt{\lambda_{\rm lin}}.
\]
Maximal Freedman therefore gives, with failure probability at most
$2e^{-u}$,
\[
  \sup_{n\le T+1}
  \left|
    \sum_{t<n} M_{t,e}
  \right|
  \le
  C\left\{
    \sqrt{2\log(1+T/\lambda_{\rm lin})\,u}
    +u/\sqrt{\lambda_{\rm lin}}
  \right\}.
\]
Discrete summation by parts bounds
\[
  \frac{
    \left|
      e^\top\sum_{t<n}
      \rho_t^{\rm lev}\varphi_t
      \{\Phi_t-r^\star(X_t,A_t)\}
    \right|
  }{
    \sqrt{e^\top V_ne}
  }
\]
by twice the preceding right-hand side.
Since
$\lambda_{\rm lin}I\preceq V_t\preceq(\lambda_{\rm lin}+T)I$,
a Euclidean $\epsilon$-net with
$\epsilon\asymp
\sqrt{\lambda_{\rm lin}/(\lambda_{\rm lin}+T)}$ gives, by a union
bound,
\[
  \left\|
    \sum_{i<t}
    \rho_i^{\rm lev}\varphi_i
    \{\Phi_i-r^\star(X_i,A_i)\}
  \right\|_{V_t^{-1}}
  \le
  \beta_{\rm lin}-\sqrt{\lambda_{\rm lin}}
\]
simultaneously for all $t\le T+1$ with probability at least
$1-\delta/2$. Since
\[
  V_t(\widehat\theta_t-\theta^\star)
  =
  \sum_{i<t}
  \rho_i^{\rm lev}\varphi_i
  \{\Phi_i-r^\star(X_i,A_i)\}
  -\lambda_{\rm lin}\theta^\star
\]
and
$\|\lambda_{\rm lin}\theta^\star\|_{V_t^{-1}}
\le\sqrt{\lambda_{\rm lin}}$, we obtain
\[
  \|\widehat\theta_t-\theta^\star\|_{V_t}
  \le
  \beta_{\rm lin}
  \qquad
  \text{for all }t\le T+1.
\]

\emph{Budget stop and regret.}
On this confidence event, optimism gives
\[
  R_T
  \le
  2\beta_{\rm lin}\sum_{t=1}^T\upsilon_t
  \le
  4T\sqrt{\mathfrak L_T^{\rm lin}/b}\,\beta_{\rm lin}
  =
  \widetilde O\!\left(
    d\sqrt T\,\sqrt{T/B}
  \right)
\]
before reimposing the budget stop.

Let $\widetilde N_T$ be the corresponding number of acquisitions. Since
$\sum_t\widetilde\rho_t(Z_t)\le B/4$ pathwise,
\[
  \E e^{\lambda\widetilde N_T}
  \le
  \exp\{(e^\lambda-1)B/4\}
  \qquad (\lambda>0).
\]
Taking $\lambda=\log4$ gives
\[
  \Pr(\widetilde N_T\ge B)
  \le
  \exp\{-(\log4-3/4)B\}.
\]
If
$B\ge(\log4-3/4)^{-1}\log(2/\delta)$, this probability is at most
$\delta/2$. On the complementary event, $\widetilde N_T<B$, so the budget stop is never
invoked and the displayed regret bound applies to the actual learner.
Combining the two events proves the proposition. For smaller $B$, the deterministic bound
$R_T\le T$ is absorbed by the displayed $\widetilde O$ rate.
\end{proof}

\section{Experimental Details and Reproducibility}
\label{app:experimental_details}

This appendix records the experiment-specific settings used in
Section~\ref{sec:experiments}. Appendix~\ref{app:practical_fitted_components}
gives one fitted-object specification; here we state the settings actually
used in the reported experiments. Exact refit schedules and low-level runtime
bookkeeping are given in the supplementary code.

\subsection{Common finite-policy implementation}
\label{app:common_finite_policy_implementation}

All budget-matched methods use the augmented feedback in
\eqref{eq:audited_reward} and the same second-order policy update in
\eqref{eq:observed_second_order_update}. Adaptive acquisition methods also
use the exact-count budget manager in
Appendix~\ref{app:hard_budget_acquisition}.

\paragraph{Refinement strength.}
All reported fitted \ASB\ runs use $\gamma_{\mathrm{prac}}=0.75$ on
stochastic refinement rounds. It is fixed before $S_t$ and is a design
constant rather than a fitted quantity. On deterministic initialization and
final-fill rounds, $\rho_t^0=\rho_t^1$.

\paragraph{Policy learner.}
All budget-matched learned methods use the same adaptive finite-sample
learning-rate schedule. When that schedule changes, the current gain is
applied first and the next-round policy mixture receives a $1/T$ uniform
component. Theorem~\ref{thm:main_asb_regret} instead uses the fixed learning
rate defined in Appendix~\ref{app:main_regret_proof}.

\paragraph{Initialization and budget accounting.}
The implementation begins with a fully acquired prefix. Let
$C_{\mathrm{init}}$ be the first prespecified checkpoint no smaller than
$\lceil4K\log(8NT/0.05)\rceil$; its checkpoint sequence is given in the
supplementary code. The fitted methods acquire every primary outcome for
rounds $1,\ldots,C_{\mathrm{init}}$, and these acquisitions count toward
$B$. The remaining rounds are split into three consecutive budget intervals
with relative lengths $1/4,3/8,3/8$, after integer rounding, and use
$\zeta=0.01$ in the support reserve.

All reported hard-budget runs finish with exactly $B$ acquisitions. The
current primary outcome is still unobserved when the action and acquisition
probability are chosen. Only the exact PRE and POST references use oracle
residual-RMS quantities.

\subsection{Controlled synthetic family}
\label{app:controlled_synthetic_family}

\paragraph{Primary problem.}
The controlled family has $512$ contexts, $K=8$ actions, and $N=256$
represented policies. Policy-disagreement contexts have total probability
$1/3$ and policy-agreement contexts probability $2/3$, with contexts uniform
within each region. Each context has one optimal action $a^\star(x)$, with
\[
  r^\star\{x,a^\star(x)\}=1/2+\Delta_x/2,\qquad
  r^\star(x,a)=1/2-\Delta_x/2\quad(a\ne a^\star(x)),
\]
where $\Delta_x=0.12$ in the disagreement region and $0.24$ in the agreement
region. The observed context feature is a $16$-dimensional one-hot encoding
of the region and the optimal action. The policy class contains the
pointwise-optimal policy and $255$ competitors that differ from it only on
disagreement-region contexts, with prespecified disagreement probabilities spanning
$0.10$ to $0.90$; the same realized class is used in every variant.

Let $R\in\{-1,+1\}$ take each sign with probability $1/2$, independently of
$H\sim\Bern(q)$ with $q=0.10$. With $\sigma=0.07$, $m_0=0.20$, and
\[
  m_1=\left\{\frac{1-(1-q)m_0^2}{q}\right\}^{1/2},
  \qquad
  Y=r^\star(X,A)+\sigma m_HR.
\]
The surrogate has four levels with marginal probabilities
$(0.30,0.30,0.30,0.10)$. Let $M$ indicate the fourth level; when $M=0$, one
of the first three levels is chosen uniformly.

\paragraph{Information-order experiment.}
The experiment changes only the association between $M$ and the
residual-magnitude state $H$:
\[
  \Pr(M=1\mid H=1)=q+\lambda(1-q),\qquad
  \Pr(M=1\mid H=0)=q(1-\lambda),\qquad \lambda\in[0,1].
\]
At $\lambda=0$, $M$ is independent of residual magnitude; at $\lambda=1$,
it identifies the high-magnitude state. Since only this association changes
and $R$ is independent and symmetric, $P(Y\mid X,A)$ is fixed across
$\lambda$ and $\E(Y\mid X,A,S)=\E(Y\mid X,A)$. The middle panel of
Figure~\ref{fig:controlled_mechanisms} uses
$\lambda\in\{0,.10,.20,.25,.30,.40,.50,.60,.70,.75,.80,.90,1\}$; the
component comparison uses $\lambda=1$.

\paragraph{Information-placement experiment.}
Let $v_x=\lambda_x^2$ denote the normalized Brier-score improvement for
predicting $H$ from the surrogate. For
average information $v\in\{.25,.50,.75\}$ and alignment
$a\in\{-1,-.75,-.50,-.25,0,.25,.50,.75,1\}$, set
\[
  d=\frac{a}{2}\min\{v,1-v\},\qquad
  v_{\rm dis}=v+2d,\qquad
  v_{\rm agr}=v-d,
  \qquad
  \lambda_{\rm dis}=\sqrt{v_{\rm dis}},\quad
  \lambda_{\rm agr}=\sqrt{v_{\rm agr}}.
\]
Then $(1/3)v_{\rm dis}+(2/3)v_{\rm agr}=v$. The values $a=-1,0,1$
correspond to Away, Neutral, and Toward in Figure~\ref{fig:controlled_mechanisms}.
The primary law, context distribution, and policy class do not change.

The synthetic experiments use the three fitted objects as specified in
Appendix~\ref{app:practical_fitted_components}; fits that require primary
outcomes use only acquired primary outcomes.

\paragraph{Comparators.}
\emph{Relevance only} uses the same decision-relevance allocation as \ASB\
but no $S_t$-dependent residual refinement. \emph{Residual only} retains the
learned current-surrogate refinement but removes decision relevance.
\emph{Uniform} uses neither score, and \emph{Fixed-N} spends the budget on
the first $B$ rounds. The exact PRE and POST references use the exact residual
allocation before and after observing $S_t$, respectively.

\paragraph{Randomization.}
The procedures are paired within the same 64 environments. Plotted summaries
show mean $\pm1$ SE; contrasts computed within an environment use the SE of
the paired world-level differences.

\subsection{KuaiRec data-derived benchmark}
\label{app:kuairec_benchmark}

\paragraph{Benchmark construction.}
Using KuaiRec's nearly fully observed small matrix \citep{gao2022kuairec}, we
partition eligible videos deterministically into five-video candidate sets.
Each round is a user--candidate-set pair for which all five user--video entries
are observed. After selecting one video, the learner observes
$S7=\one\{\text{play duration}>7\text{ seconds}\}$ and then decides whether
to acquire $Y=\min\{\text{watch ratio},1\}$. The benchmark has
$T=22{,}417$, $K=5$, and nine represented policies: five fixed
candidate-position policies and four learned recommenders (one
side-information factorization, one pairwise ranker, and two two-tower
models). Exact policy construction and candidate-set eligibility are given in
the supplementary code.

\paragraph{Fitted acquisition models.}
The bridge and residual second-moment models are gradient-boosted regressors;
out-of-fold bridge residuals provide the squared-residual target. The
conditional surrogate law is a gradient-boosted classifier. All three models
are fitted on a disjoint KuaiRec subset and then held fixed during benchmark
evaluation. Benchmark primary outcomes are not used to refit them.

As a retrospective check of the residual model,
$\operatorname{Spearman}(\widehat\tau,|Y-h|)=0.4387$ across selectable
user--video cells, with a positive association within each $S7$ stratum. This
check is computed only after the online run.

\paragraph{Evaluation.}
We use primary-outcome budget fractions
$2,3,4,6,8,12,16,$ and $24\%$. At each budget, ASB and Relevance only use
the same 32 order/action randomizations; each is rerun with 8 acquisition
randomizations for every order/action stream. After each online run, the
complete benchmark table provides the exact primary value of every represented
policy for scoring the final policy mixture. Unacquired benchmark outcomes
and these full-table policy values are never provided to the online learner.

The plotted SEs use both the 32 order/action streams and the 8 acquisition
randomizations within each stream. They summarize algorithmic randomization
on this fixed benchmark table, not uncertainty for a future user population.
At the $2\%$ budget, the 346-round fully acquired prefix uses $77.2\%$ of
the available primary-outcome acquisitions.

\section{Extended Related Work}
\label{app:related_scope}

This appendix organizes related work by information timing and the role of
costly measurement. \ASB\ studies post-action surrogate observation followed
by same-round acquisition of the primary outcome that drives future policy
learning and regret.

\paragraph{Prediction-guided measurement and selective ground-truth acquisition.}
Prediction-guided experimentation, active inference, and prediction-powered
sampling use predictions or surrogates to allocate costly ground-truth
measurements for estimation, testing, prediction, or identification
\citep{ao2024pgae,ao2026llmaudits,park2026surrogate_sampling,zrnic2024active,
ashouritaklimi2026ppat,brawand2026amppi,mozer2026stratified}. Missing-data and
doubly robust methods provide related correction tools
\citep{robins1994estimation,dudik2011doubly}, while prediction-powered
surrogate methods study efficiency when inexpensive predictions are observed
more broadly than the true outcome \citep{ji2025predictions}. \ASB\ instead
places same-round primary-outcome acquisition inside an evolving contextual
policy learner: after action selection, the current surrogate informs
acquisition through residual RMS and current policy-comparison relevance.
The acquired outcomes update future policies under one horizon-wide hard
budget, and Section~\ref{sec:theory_efficiency} isolates PRE versus POST
acquisition under cumulative regret.

\paragraph{Costly feedback and observation order.}
Budgeted-feedback bandits decide when to query reward
\citep{efroni2021confidence,tucker2023costly}; preference-query contextual
bandits decide when to request comparison feedback \citep{sekhari2023active};
and budgeted information-reveal models control when current information is
delivered to the action learner \citep{gan2024reveal}. Other bandit models use
information revealed after acting, including intermediate observations,
post-serving contexts, and post-action contexts
\citep{esposito2023intermediate,wang2023followups,shahverdikondori2025postaction}.
In \ASB\ the timing is $X_t\to A_t\to S_t\to Q_t\to Q_tY_t$: the surrogate
cannot change the current action but can determine whether its primary outcome
is acquired for future learning under the hard budget.

\paragraph{Surrogate-guided sequential decisions.}
Progressive-feedback and surrogate-guided adaptive designs use early outcomes
to accelerate learning about delayed objectives or adapt future treatment
allocation
\citep{mcdonald2023impatient,zhang2025impatient,shi2024surrogates,
zhang2026metalevel}. Surrogate-reward bandits may incorporate proxy reward
signals directly into reward prediction or policy learning
\citep{sridhar2026correlation}, while individualized-treatment-rule work
evaluates decisions learned from surrogate endpoints
\citep{xu2025itr_surrogates}. In \ASB\, the surrogate is acquisition-side
information rather than a substitute for the primary outcome: action value and
regret remain defined by the primary outcome.

\paragraph{Clinical surrogates and adjacent acquisition problems.}
Clinical surrogate and limited-follow-up work studies when earlier or cheaper
outcomes can support inference on a primary endpoint or reduce primary-outcome
measurement burden
\citep{prentice1989surrogate,kosorok1993surrogate,kallus2025surrogates}.
Active feature acquisition instead purchases input variables to improve a
current prediction or decision \citep{li2021activefeature,valancius2024acquisition}.
In \ASB\, the surrogate has already been observed when acquisition is
considered, and the purchased variable is the primary outcome itself.

\paragraph{Contextual-bandit learner backbone.}
Finite-policy contextual aggregation provides the classical
$\sqrt{KT\log N}$ policy-class regret scale
\citep{beygelzimer2011contextual}, while regression-oracle and
function-approximation methods provide other contextual-bandit learner
architectures \citep{foster2020beyond,ye2023uncertainty,jia2024variance}.
These works concern the action-learning backbone rather than the
post-surrogate primary-outcome acquisition decision studied here.
Appendix~\ref{app:linear_analogue} gives a linear analogue in which leverage
supplies an analogous pre-surrogate relevance factor.

\end{document}